\def\eqref#1{equation~\ref{#1}}
\def\1{\bm{1}}
\def\vw{{\bm{w}}}
\DeclareMathAlphabet{\mathsfit}{\encodingdefault}{\sfdefault}{m}{sl}
\SetMathAlphabet{\mathsfit}{bold}{\encodingdefault}{\sfdefault}{bx}{n}
\newcommand{\R}{\mathbb{R}}
\title{Generalized Tangent Kernel:  
A Unified Geometric \\
Foundation for Natural Gradient and Standard Gradient}
\author{\name Qinxun Bai$^*$ \email qinxun.bai@horizon.auto \\
      \addr Horizon Robotics, Cupertino, CA, USA
      \AND
      \name Steven Rosenberg$^*$ \email sr@math.bu.edu \\
      \addr Department of Mathematics and Statistics, Boston University, Boston, MA, USA
      \AND
      \name Wei Xu \email wei.xu@horizon.auto\\
      \addr Horizon Robotics, Cupertino, CA, USA
}
\newtheorem{theorem}{Theorem}[section]
\newtheorem{corollary}[theorem]{Corollary}
\newtheorem{proposition}{Proposition}[section]
\newtheorem{lemma}{Lemma}[section]
\newtheorem{remark}[theorem]{Remark}
\newtheorem{definition}[theorem]{Definition}
\newtheorem{example}[theorem]{Example}
\def\cM{\mathcal{M}}
\def\cS{\mathcal{S}}
\def\cW{\mathcal{W}}
\def\calN{{\mathcal N}}
\def\calM{{\mathcal M}}
\def\calH{{\mathcal H}}
\def\calX{{\mathcal X}}
\def\calO{\mathcal{O}}
\def\calP{\mathcal{P}}
\def\R{\mathbb{R}}
\def\C{\mathbb{C}}
\def\bbE{{\mathbb E}}
\def\C{\mathbb C}
\def\dx{\delta_x}
\def\vdx{\vec{\delta}_x}
\newcommand{\tho}{\vw^0}
\newcommand{\dvol}{{\rm dvol}}
\newcommand{\maps}{{\rm Maps} }
\newcommand{\nllf}{\nabla\ell_{L,f}}
\newcommand{\tg}{\widetilde g}
\renewcommand*{\vec}[1]{\boldsymbol{#1}}
\newcommand{\wf}{\widetilde F}
\newcommand{\bw}{\bar w}
\newcommand{\kibitz}[2]{\ifnum\Comments=1\textcolor{#1}{#2}\fi}
\begin{document}
\def\thefootnote{*}\footnotetext{Correspondence. These authors contributed equally to this work.}\def\thefootnote{\arabic{footnote}}

\maketitle


\begin{abstract}
Natural gradients have been widely studied from both theoretical and empirical perspectives, and it is commonly believed that natural gradients have advantages over standard (Euclidean) gradients in capturing the intrinsic geometric structure of the underlying function space and being invariant under reparameterization.
However, 
for function optimization,
a fundamental theoretical issue regarding 
the existence of natural gradients on the function space remains underexplored. We address this issue
by providing a geometric perspective and mathematical framework for studying both natural gradient and standard gradient that is more complete than existing studies. 
The key tool that unifies natural gradient 
and 
standard gradient is 
a generalized form of the Neural Tangent Kernel (NTK), which we name the Generalized Tangent Kernel (GTK). 
Using a novel orthonormality property of GTK, 
we show that for a fixed parameterization,
GTK determines a Riemannian metric on the entire function space which makes 
the standard gradient
as ``natural" as the natural gradient in capturing the intrinsic structure of the parameterized function space. Many aspects of this approach relate to RKHS theory.
For the practical side of this theory paper, we showcase that our framework motivates
new solutions to the non-immersion/degenerate case of natural gradient and  leads to new families of natural/standard
gradient descent methods. 
\end{abstract}

\section{Introduction}
Since their introduction in~\citet{Amari1998natural}, natural gradients have been a topic of interest among both theoretical researchers~\citep{Ollivier2015Rieman,Martens2014new,li2018natural} and practitioners~\citep{Kakade2001npg,Pascanu2013revisiting}.
In recent years, research has examined several important theoretical aspects of natural gradient 
for function optimization on neural networks, such as replacing the KL-divergence by the $L^2$ metric on the function space~\citep{Benjamin2018l2}, Riemannian metrics for neural networks~\citep{Ollivier2015Rieman}, and convergence properties under overparameterization~\citep{Zhang2019fast}. 

It is well-known that natural gradient has the favorable property of being invariant under change of coordinates. 
While this invariance property is appealing theoretically, in modern practice of (stochastic) gradient descent, 
 the coordinates are rarely changed during the gradient descent procedure. Therefore, in this work, we focus on the properties of natural gradient and standard gradient under some fixed function 
 parameterization and coordinates.
 In this setup,
it is 
widely believed that natural gradients have an advantage over standard (Euclidean)
\footnote{
 Throughout the paper, we use \emph{standard gradient} to denote the gradient for the standard Euclidean metric on the parameter space, which is the default choice 
 for (stochastic) gradient descent in most machine learning practice.}
gradients in capturing the intrinsic geometric structure of the parameterized function space.
However, some critical theoretical aspects of natural gradients related to function approximations are still largely underexplored.
In parametric approaches to machine learning, one often maps the parameter domain $\cW\subset \R^k$ to a space $\calM = \calM(M,N)$ of differentiable 
functions from a manifold $M$ to another manifold $N$.  For example, in neural networks,
$\cW$ is the space of network weights, and $\phi\colon \cW\to \calM(\R^n, \R^m)$ has $\phi(w) $ equal to the associated network function, 
where $n$ is the dimension of network inputs and $m$ the dimension of outputs.  
In the modern formulation of natural gradient,
if $\phi$ is an immersion, a Riemannian metric on $\calM$ induces a pullback or natural metric on $\cW$.
In supervised training of such neural networks, the empirical loss 
$F$
of a particular type, say mean squared error (MSE) for regression and cross-entropy for classification, is typically applied to the parametric function evaluated on a finite set of training data. 
As shown in Figure~\ref{fig:overview}, most existing work (with a notable exception of~\citet{van2023invariance})
on natural gradient focuses on the red dashed regime, where natural gradient decreases the objective 
$F$
along the gradient flow  on $\Phi=\phi(\cW)$ under the 
natural metric, 
instead of using the standard Euclidean metric on $\cW$. 
However, the functional gradient\footnote{By ``functional gradient,'' we  mean the gradient of a loss function on a space of functions.} 
$\nabla F$
at any point of $\Phi$ need not lie in 
the tangent space to $\Phi$. Therefore, a complete study of the natural gradient should analyze both the gradient of 
$F$
in the ambient function space $\mathcal{M}\supset\Phi$ and its projection onto $\Phi$, before pulling it back to $\cW$.
This motivates our study of the analytic and differential geometric properties of $\calM$ that influence 
the projection and
the pullback metric.
Moreover, 
we can interpret this figure to encompass a family of projected functional gradients, including
natural gradient and standard gradient, 
which
leads to new understandings of both.

\begin{figure*}[t]
\vspace{-0.1in}
\centering
\includegraphics[width=.7\linewidth]{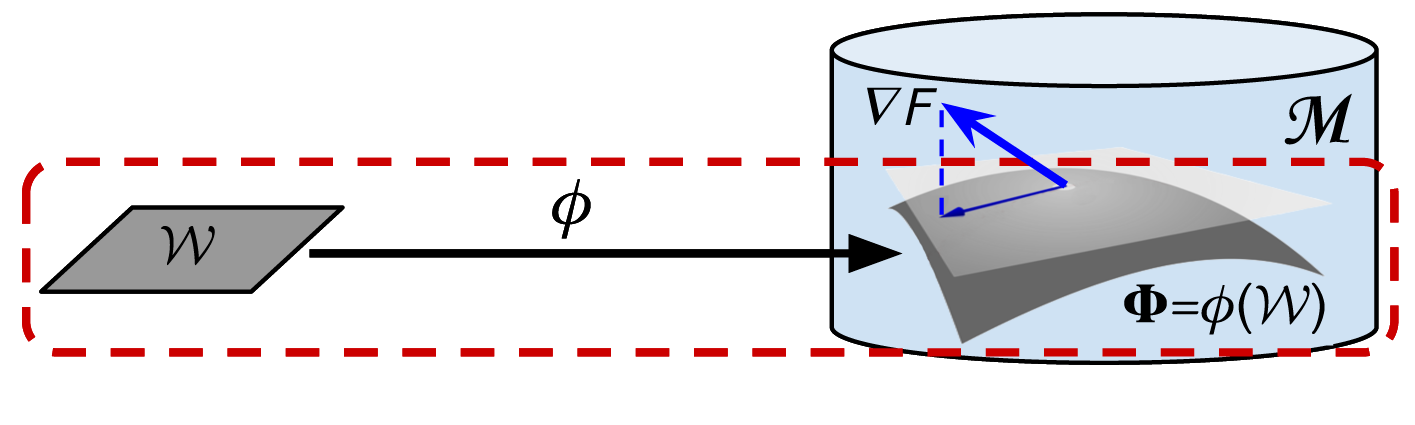}
\caption{Most existing work on natural gradient focuses on the red dashed regime, where $\phi$ is a mapping from the parameter space $\cW$ to the parameterized function space $\Phi=\phi(\cW)$. Natural gradient decreases the objective 
$F$
along the gradient flow on $\Phi$ under a chosen Riemannian metric, instead of using the standard Euclidean metric on $\cW$. However, the functional gradient 
$\nabla F$
at any point of $\Phi$ need not lie in the tangent space to $\Phi$.}
\vspace{-0.1in}
\label{fig:overview}
\end{figure*}

Based on this geometric viewpoint, 
we address the relationship between the gradient flow on $\Phi$ and that on $\mathcal{M}$, showing that the inverse of the metric matrix appearing in the natural gradient is in fact a consequence of the orthogonal projection of the functional gradient from $\mathcal{M}$ to $\Phi$.
We also show that because of the ``pointwise" nature of a typical empirical loss, 
the gradient of the loss function does not in general exist on the infinite-dimensional space $\calM$ for common choices of metrics,
such as $L^2$ and Fisher-Rao. The technical issue is that any such gradient involves delta functions, which are not continuous linear functionals on {\it e.g.,} $L^2.$
This motivates studying the pullback metric in an RKHS,
where by definition delta functions are continuous linear functionals.  In particular, for all common empirical loss functions in supervised learning, the gradient 
is well-defined only in the RKHS context.
Using an RKHS framework,
we introduce a generalized form of the Neural Tangent Kernel (NTK)~\citep{Jacot2018}, which we name the Generalized Tangent Kernel (GTK), to 
mathematically express Figure~\ref{fig:overview}'s common framework of natural gradient and standard gradient.

In particular, 
we show that
GTK enjoys an 
automatic orthonormality property,
and as a result, 
there exists a family of projected functional gradients (similar to the projection of 
$F$
in Figure 1), each of which produces 
a natural gradient that 
reflects a well-defined geometric structure on the immersed function space.
Standard gradient is the simplest example
of this family
in the computational sense that 
the metric matrix 
is the identity matrix.


We then 
study a known
issue of applying natural gradients to modern function approximations, such as neural networks, namely the potential violation of the immersion/injectivity assumption on $\phi$, which leads to a singular metric matrix and therefore an ill-defined natural gradient on $\cW$.
In contrast to existing approaches resorting to the Moore-Penrose generalized inverse~\citep{Ollivier2015Rieman,bernacchia2018exact,van2023invariance},
we use slice methods to reduce the dimension of $\cW$ 
and show that our GTK-based framework provides a new solution to
this issue, leading
to new variants of natural/standard gradient for exemplar neural networks.

To summarize the main theoretical results of the paper, whenever we have a pointwise defined loss function on a Riemannian Hilbert space/manifold $\calM$ of functions, the functional gradient exists on $\calM$ iff each $T_f\calM, f\in \calM,$ is an RKHS.
For a finite dimensional parameterized function space $\Phi\subset\calM$, 
the existence of a natural gradient on 
$\Phi$ (and thus on the parameter space) is then determined by a projection from $T_f\calM$ to $T_f\Phi.$  
The existence of such a projection 
is unexpectedly universal and flexible, 
and is  encoded by GTK theory, 
which produces a family of projected RKHS structures on each $T_f\Phi$, $f\in\Phi$, and unifies classic natural gradient, standard gradient, and beyond. 
We believe this is a more
complete 
understanding of the geometric foundation of  projected functional gradients 
than has 
appeared in the machine learning literature.
Viewing both natural and standard gradients
through the lens of 
GTK also 
sheds light on
why standard gradient (and thus SGD) work so well in modern machine learning practice, while the 
advantage of natural gradient is often minimal.\footnote{For instance, \citet{Zhang2019algorithmic} empirically observes that the advantages of natural gradient over standard gradient are minimal except for impractically large training batchsize, even without worrying about the computational cost.
}.

As an outline of the paper, in \S\ref{sec:setup} we introduce 
the theoretical setup, 
schematically pictured in Figure~\ref{fig:overview}, which establishes a projection viewpoint
for both natural gradient and standard gradient. In \S\ref{sec:empirical_loss}, we provide a careful analysis of 
the non-existence of natural gradients for empirical loss functions on commonly used function spaces.
This theoretical issue is addressed in \S\ref{sec:rkhs_gtk} by working in an RKHS, 
which motivates our core
concept, 
the Generalized Tangent Kernel (GTK). 
The automatic orthonormality property of GTK (Theorem~\ref{prop:basis_kernel})
produces a family of projections of the functional gradient on $\calM$ onto the tangent space $\Phi$ via RKHS theory, with natural gradient and standard gradient as special cases.
A new variant of natural gradient is also motivated by this RKHS framework; see Appendix
\ref{appendix:sobolev_ng}.
In \S\ref{sec:non_immersion_grad}, we study 
the ill-defined natural gradients for non-immersion function approximations widely used in machine learning practice, such as neural networks with 
(piecewise)
linear activations. 
We use 
a GTK-based framework to study the slice/reduction of $\phi$ for ReLU MLP as an example and derive new variants of standard gradient algorithms.
Preliminary experimental results of these new 
algorithms are also reported.
We summarize this work and discuss future directions in~\S\ref{sec:discussion}.
Background material, proofs, and 
 preliminary experiments for 
exemplar new 
natural gradient variants
motivated by the GTK framework
are in the Appendices.

Because of the technical nature of the paper, we encourage  the reader to refer to 
Table~\ref{tab:notation} in  Appendix~\ref{appendix:notation} for key mathematical notation, and to 
Appendix \ref{sec:background} for
background materials on Riemannian manifolds, Sobolev spaces, and RKHS theory.

\section{The Theoretical Setup: A Geometric Perspective}
\label{sec:setup}
  
In supervised learning, we typically want to learn an optimal function $f\colon\R^n\to\R^m$, or more generally $f\colon M\to N$, where $M, N$ are $n$-dimensional, $m$-dimensional manifolds, respectively.  Here an optimal function minimizes some
loss function $F\colon \calM\to \R$, where $\calM $ is the space of $f$'s. 

When function approximation is applied, $f$ is parameterized by some parameter vector of finite dimension, and the general situation can be described by the following simple diagram:

\begin{equation}\label{basictriangle} 
\begin{tikzcd}    
\cW \arrow{r}{\phi} \arrow{rd}{}[swap]{\widetilde F}&\calM \arrow{d}{F}\\
& \R 
\end{tikzcd}
\end{equation}

Here $\calM = \calM(M,N)$ is a space of smooth maps from  $M$ to  $N$.  $\calM$ is an infinite dimensional manifold with a reasonable (high Sobolev or Fr\'echet) topology. 
The parameter space $\cW$, also a manifold, is usually a compact subset of a Euclidean space,  $\phi\colon \cW\to \calM$ is the parametrization of the image $\phi(\cW)=\Phi\subset \calM$,  
$F$ is the  loss function, and $\widetilde F = F\circ \phi$ makes the diagram commute. 

In this section, 
we always assume that $\phi$ is an immersion, {\it i.e.,} at every $w\in \cW$,
the differential $d\phi_w\colon T_w\cW\to T_{\phi(w)}\Phi$ is injective, 
where $T_w\cW$ is the tangent space to $\cW$ at $w$, and similarly for $T_{\phi(w)}\Phi$.
Then for a fixed Riemannian metric $\bar g$ on $\calM$, the {\em pullback metric} on $X,Y\in T_w\cW$ is given by
\begin{equation}\label{pullback} 
(\phi^* \bar{g})(X,Y) := \bar{g}(d\phi(X), d\phi(Y)).
\end{equation} 
The case where $\phi$ is not an immersion is discussed in 
\S\ref{sec:non_immersion_grad}, giving an alternative to using the Moore-Penrose inverse as in
\cite{van2023invariance}. 
Proofs for this section are in Appendix \ref{appendix:section 2}.

\subsection{Smooth Gradients, Orthogonal Projection, and Pullback Metric  for Natural Gradient}
\label{sec:grad_proj_pullback}

For a metric $\bar g$ on $\calM$, 
we want to study the gradient flow of $F$ on the image $\Phi$, and this should be equivalent to studying gradient flow for $\widetilde F$ on $\cW$ via the pullback metric (\ref{pullback}).
The existence of $\nabla F$ on $\calM$ is carefully studied in \S\ref{sec:empirical_loss}. For now, assuming that $\nabla F$ exists on $\calM$,
we still have to distinguish between the gradient flow lines of $F$ in $\calM$ and the flow lines on $\Phi$.  Indeed, a gradient flow line of $F$ starting in $\Phi$ will not stay in $\Phi$ in general, since $\nabla F$ need not point in $\Phi$ directions.

The following lemma establishes the relationship between gradient flow in $\calM$ and flows on $\Phi$, and the equivalence  between pullback gradients on the parameter space $\cW$ and gradients on  $\Phi$ in the function space.

\begin{lemma} 
\label{lem:pullback} 
(i)  The gradient flow on $\Phi$ for $F|_{\Phi}$ is given by the  flow of $P(\nabla F)$, where $P = P_{\phi(w)}$ is the orthogonal projection from $T_{\phi(w)}\calM$ to $T_{\phi(w)}\Phi$ with respect to  a Riemannian metric $\bar g$ on $\calM$.

(ii) Let $\widetilde g = \phi^* \bar g$ be the pullback metric on $\cW$  given by (\ref{pullback}).
Then $d\phi(\nabla_{\widetilde g} \widetilde F) = \nabla_{\bar g} F$, and
$\gamma(t)$ is a gradient flow line of $\widetilde F$ with respect to $\widetilde g$ iff $\phi(\gamma(t))$ is a gradient flow line of $F|_\Phi$.
\end{lemma}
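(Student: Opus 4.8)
The plan is to prove both parts purely from the defining property of the Riemannian gradient together with non-degeneracy of $\bar g$, and to keep everything on $W$ via the immersion $\phi$ so that $\Phi$ never has to be an embedded submanifold. For part (i), I would fix $w$, set $p = \phi(w)$, and observe that $T_p\Phi$ is a finite-dimensional subspace of the inner-product space $(T_p\calM,\bar g)$, so the orthogonal projection $P = P_p$ exists and $\nabla F(p) = P(\nabla F(p)) + \nu$ with $\nu \perp T_p\Phi$. Testing against an arbitrary $v \in T_p\Phi$ and using the definition of the gradient of the restriction, $\bar g(\nabla (F|_\Phi)(p), v) = d(F|_\Phi)_p(v) = dF_p(v) = \bar g(\nabla F(p), v) = \bar g(P(\nabla F(p)), v)$. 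Since $\nabla (F|_\Phi)(p)$ and $P(\nabla F(p))$ both lie in $T_p\Phi$ and $\bar g$ is non-degenerate there, they coincide, so the $F|_\Phi$-gradient flow equation becomes $\dot\gamma = -P(\nabla F)(\gamma)$.

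For part (ii), I would compute $\widetilde g(\nabla_{\widetilde g}\widetilde F, X)$ for $X \in T_wW$ in two ways. By the chain rule it equals $d\widetilde F_w(X) = dF_{\phi(w)}(d\phi_w X) = \bar g(\nabla_{\bar g}F, d\phi_w X) = \bar g(P(\nabla_{\bar g}F), d\phi_w X)$, the last step because $d\phi_w X \in T_{\phi(w)}\Phi$; by the definition \eqref{pullback} of $\widetilde g$ it equals $\bar g(d\phi_w(\nabla_{\widetilde g}\widetilde F), d\phi_w X)$. Equating these and letting $X$ range over $T_wW$ — whose image under $d\phi_w$ is all of $T_{\phi(w)}\Phi$ by the immersion hypothesis — non-degeneracy of $\bar g$ on $T_{\phi(w)}\Phi$ yields $d\phi_w(\nabla_{\widetilde g}\widetilde F) = P(\nabla_{\bar g}F)$, which by part (i) equals $\nabla_{\bar g}(F|_\Phi)$. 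I would note explicitly that this is the precise content of the displayed identity in the lemma: on the right-hand side $\nabla_{\bar g}F$ is to be read modulo its normal component.

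The flow equivalence is then immediate. If $\dot\gamma = -\nabla_{\widetilde g}\widetilde F(\gamma)$, then differentiating $\phi\circ\gamma$ gives $\tfrac{d}{dt}\phi(\gamma(t)) = d\phi_{\gamma(t)}\dot\gamma(t) = -\nabla_{\bar g}(F|_\Phi)(\phi(\gamma(t)))$, so $\phi\circ\gamma$ is an $F|_\Phi$-gradient flow line by part (i); conversely, if $\phi\circ\gamma$ satisfies that equation, then $d\phi_{\gamma(t)}\big(\dot\gamma(t) + \nabla_{\widetilde g}\widetilde F(\gamma(t))\big) = 0$, and injectivity of $d\phi_{\gamma(t)}$ forces $\dot\gamma = -\nabla_{\widetilde g}\widetilde F(\gamma)$.

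The calculation itself is routine linear algebra plus the chain rule; the only genuine care is the infinite-dimensionality of $\calM$. Two points must be addressed, both benign. First, $P$ projects onto the \emph{finite}-dimensional $T_{\phi(w)}\Phi$, so it always exists in an inner-product space, and part (i) makes sense even before one knows whether $\nabla F$ is globally defined on $\calM$. Second, the argument does require $\nabla_{\bar g}F$ to exist at points of $\Phi$; this is exactly the standing assumption flagged in \S\ref{sec:grad_proj_pullback}, and it genuinely fails for the $L^2$ metric with point-wise empirical losses, which is precisely the motivation for passing to $H_s$ with $s > s_0$ in \S\ref{sec:Sobolev_grad}. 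A secondary, minor subtlety is that $\phi$ is only assumed to be an immersion, so $\Phi$ may not be embedded; the argument above sidesteps this by never treating $\Phi$ as a submanifold of $\calM$, working instead through $\phi$ and the injectivity of $d\phi_w$, with ``gradient flow on $\Phi$'' understood as the $\phi$-image of the natural-gradient flow on $W$.
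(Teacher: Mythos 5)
Your proof is correct and follows essentially the same route as the paper: part (i) by testing $\nabla(F|_\Phi)$ and $P(\nabla F)$ against tangent vectors to $\Phi$ and invoking non-degeneracy, and part (ii) by computing $\widetilde g(\nabla_{\widetilde g}\widetilde F, X)$ two ways via the pullback/isometry property to get $d\phi(\nabla_{\widetilde g}\widetilde F)=P(\nabla_{\bar g}F)=\nabla_{\bar g}(F|_\Phi)$, then transferring flow lines through $d\phi$ using injectivity. Your remark that the displayed identity $d\phi(\nabla_{\widetilde g}\widetilde F)=\nabla_{\bar g}F$ should be read modulo the normal component is a fair and careful reading, consistent with what the computation actually establishes.
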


By this Lemma,  to get $\nabla\widetilde F$ on $\cW$, we just need to compute $P(\nabla F)$.  
Starting below, we use the Einstein summation convention of summing over repeated indices that occur once as a subscript and once as a superscript, so {\em e.g.,} $g^{ij} v_i 
w_j = \sum_{ij} g^{ij} v_i w_j$, $\left\langle \nabla F, h_i\right\rangle h_i = \sum_i \left\langle \nabla F, h_i\right\rangle h_i.$
\begin{proposition} \label{prop:proj} The orthogonal projection of $\nabla F$ onto $\Phi$ is
\begin{equation}\label{projection}
P(\nabla F) =  \tg^{ij}\left\langle \nabla F, \frac{\partial \phi}{\partial w^i}\right\rangle
\frac{\partial \phi}{\partial w^j},
\end{equation}
where $(\tg_{ij})=\widetilde g = \phi^* \bar g$
is the pullback metric on $\cW$, $(\tg^{ij})$ is its inverse matrix,
$\{(w^i)\}$ are the coordinates on $\cW$,
and $\partial \phi/\partial w^i = d\phi(\partial/\partial w^i).$
\end{proposition}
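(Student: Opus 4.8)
The plan is to reduce the statement to the elementary linear algebra of orthogonal projection onto a finite-dimensional subspace spanned by a basis that need not be orthonormal, and then to recognize that the Gram matrix of that basis is exactly the pullback metric tensor $\tg$; this makes the projection $P$ from Lemma~\ref{lem:pullback} completely explicit.

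First I would fix $w\in W$, choose local coordinates $\theta^1,\dots,\theta^k$ near $w$ with $k=\dim W$, and set $e_i := \partial\phi/\partial\theta^i = d\phi_w(\partial_{\theta^i}) \in T_{\phi(w)}\cM$. Since $\phi$ is an immersion, $e_1,\dots,e_k$ are linearly independent and span the $k$-dimensional subspace $T_{\phi(w)}\Phi \subseteq T_{\phi(w)}\cM$; being closed, this subspace admits a well-defined orthogonal projection $P$ even though $\cM$ is infinite dimensional. Writing the unknown projection as $P(\nabla F)=c^j e_j$, the characterizing condition $\langle \nabla F - P(\nabla F), e_i\rangle = 0$ for all $i$ (the $\bar g$-inner product) gives the linear system $\langle \nabla F, e_i\rangle = c^j\langle e_j,e_i\rangle$.

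The key step is the observation that the Gram matrix here is $\langle e_i,e_j\rangle = \bar g\!\left(d\phi(\partial_{\theta^i}),d\phi(\partial_{\theta^j})\right) = \tg_{ij}$ by the very definition~\eqref{pullback} of the pullback metric, and it is invertible because $\phi$ is an immersion and $\bar g$ is Riemannian. Hence the system reads $\langle\nabla F,e_i\rangle = \tg_{ij}c^j$, so $c^j = \tg^{ij}\langle\nabla F,e_i\rangle$, and substituting back gives precisely~\eqref{projection}. The result is manifestly coordinate-free, being written in tensor form with the summation convention; this is also where I would remark, as announced in the introduction, that the inverse metric $\tg^{ij}$ of Amari's natural gradient is nothing but the inverse Gram matrix forced on us by orthogonal projection from $\cM$ onto $\Phi$.

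I do not expect a real obstacle in the argument, which is bookkeeping once the Gram-matrix/pullback-metric identification is made. The only delicate point, and it is exactly the issue raised in \S\ref{sec:empirical_loss}, is whether $\nabla F$ exists as an element of $T_{\phi(w)}\cM$ so that the pairings $\langle\nabla F,e_i\rangle$ and the projection $P(\nabla F)$ are meaningful; here I would take this existence --- equivalently, Fr\'echet differentiability of $F$ together with Riesz representation of $dF$ in $\bar g$ --- as a standing hypothesis, deferring the $L^2$-versus-$H_s$ distinction to \S\ref{sec:Sobolev_grad}. I would, however, note in the write-up that each pairing individually satisfies $\langle\nabla F,e_i\rangle = dF(e_i) = \partial\widetilde F/\partial\theta^i$, which makes sense even when $\nabla F$ is only a formal sum of delta functions, and which is the mechanism behind the claim that the formal $L^2$ gradient nonetheless projects to the correct gradient on $\Phi$.
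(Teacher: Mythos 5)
Your argument is correct and essentially the same as the paper's: both reduce to the normal equations $\langle\nabla F,e_i\rangle=\tg_{ij}c^j$ for the coefficients of $P(\nabla F)$ in the basis $e_i=\partial\phi/\partial\theta^i$, with the Gram matrix identified as the pullback metric \eqref{pullback}. The only cosmetic difference is that the paper obtains these equations by minimizing $\Vert \nabla F-\sum_i a^i e_i\Vert^2$ and setting first derivatives to zero, whereas you invoke the orthogonality characterization of the projection directly --- the same linear system either way.
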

Proposition~\ref{prop:proj} shows that the appearance of the inverse metric matrix $(\widetilde g^{ij})$ in natural gradients is in fact a consequence of the orthogonal projection of $\nabla F$ from $\calM$ to $\Phi$.
This projection view becomes clearer 
when compared with standard gradient where the Euclidean metric is used, which we discuss in the next section.

\begin{remark}
\label{rm:on_proj}
{\rm
Instead of (\ref{projection}), we can first obtain an orthonormal basis $\{h_i\}_{i = 1,\ldots {\rm dim}(\cW)}$
of $T_f\Phi$, say, by applying Gram-Schmidt to the pushforward basis $\{\frac{\partial\phi}{\partial w^i}\}$ of $T_f\Phi$, then compute the orthogonal projection of $\nabla F$ onto $\Phi$ by
\begin{equation}
\label{eq:on_basis_proj}
P(\nabla F) = \sum_i \left\langle \nabla F, h_i\right\rangle h_i.
\end{equation}
}
\end{remark}

\begin{example}
 We give a simple example for the results in this section. 
Let $\phi\colon W = (0, 2\pi) \times (0, \pi/2) \to \R^3$, $(\theta, \phi) = (\cos\theta \sin\phi, \sin\theta\sin\phi, \cos\phi)$ be the usual spherical coordinates on $S^2.$  Then
$\partial/\partial\theta = (-\sin\theta\sin\phi, \cos\theta\sin\phi, 0),$ 
$\partial/\partial\phi = (\cos\theta\cos\phi, \sin\theta\cos\phi, -\sin\phi)$, and the induced metric $\phi^* \bar g$ on $W$ for the standard dot product $\bar g$ on $\R^3$ is
$$ \tilde g = \phi^*\bar g =\left(\begin{array}{cc}  \sin^2\phi& 0\\ 0&1\end{array}\right).$$
Then (\ref{projection}) becomes
\begin{align*}P(\nabla F) &= \csc^2(\phi)(\nabla F \cdot (-\sin\theta\sin\phi, \cos\theta\sin\phi, 0))\frac{\partial}{\partial\theta} + \nabla F \cdot (-\cos\theta\cos\phi, \sin\theta\cos\phi, -\sin \phi)\frac{\partial}{\partial\phi}\\
&= \csc^2(\phi)\left(-F_x\sin\theta\sin\phi + F_y\cos\theta\sin\phi \right) \frac{\partial}{\partial\theta}
+ (-F_x \cos\theta\cos\phi+ F_y\sin\theta\cos\phi -F_z \sin\phi)\frac{\partial}{\partial\phi}.
\end{align*}   
\end{example}

\subsection{Surjection and Pushforward Metric for Standard Gradient}
\label{sec:surj_pushforward}

In contrast to Proposition~\ref{prop:proj}, where the gradient on $\Phi$ is obtained by a projection, standard
gradient can be obtained from a surjection $S$ from $T_f\calM$ to  $T_f\Phi$ given by
\begin{equation}
\label{eq:surjection}
S(\nabla F)
=\left \langle \nabla F, \frac{\partial \phi}{\partial w^i}
\right\rangle \frac{\partial \phi}{\partial w^i}.
\end{equation}
While $S$ is similar to the orthogonal 
projection (\ref{projection})
in that $S|_{(T_f\Phi)^\perp} = 0$, $S$ is 
not a projection: 
$S^2\neq S$, and $S|_{T_f\Phi }$ is not 
the identity map.
Thus Lemma 2.1(i) fails, and the flow lines for $\nabla(F|_\Phi)$ are not related to the flow lines of $S\nabla F.$
As will be shown in \S\ref{sec:gtk}, $S$ becomes an orthogonal projection iff we had 
a Riemannian metric of $\calM$ in directions tangent to $\Phi$ such that:
\begin{equation}
\label{eq:pushforward}
\left\langle \frac{\partial\phi}{\partial w^i},\frac{\partial\phi}{\partial w^j}\right\rangle = \delta_{ij}.
\end{equation}
In other words, $\{\frac{\partial\phi}{\partial w^i}\}$ must be an orthonormal basis of $T_f\Phi$, which is in general not true. Nevertheless, under our projection view, standard gradient uses the metric (\ref{eq:pushforward}) on $\Phi$, which, by the following Remark,
implicitly uses the {\it pushforward} $\phi_*g_E$ of the Euclidean metric from $\cW$ to $\Phi$.
\begin{remark}
\label{rm:pushforward}
For a smooth map $\phi\colon\cW\to \calM$, 
the pushforward of a metric $g$ on $\cW$ to a metric $\phi_*g$ on $\Phi$ is given by
$$(\phi_*g)(X,Y) := g((d\phi)^{-1}X, (d\phi)^{-1}Y),\ X, Y \in T\Phi.$$ 
 $\phi_*g$ exists iff $\phi$ is an immersion.  Since $d\phi(\frac{\partial}{\partial w^i}) =
\frac{\partial \phi}{\partial w^i}$
 and $g_E\left(\frac{\partial}{\partial w^i}, \frac{\partial}{\partial w^j}\right)=\delta_{ij}$,
 the Riemannian metric determined by (\ref{eq:pushforward}) is the pushforward $\phi_*g_E$ of the Euclidean metric on $\cW$. 
\end{remark}

\section{Natural Gradients of the Empirical Loss}\label{formal}
\label{sec:empirical_loss}

Based on the theoretical setup of~\S\ref{sec:setup},
in this section, we analyze the non-existence issue of natural gradients for 
general empirical losses on commonly used function spaces.
We consider
the most commonly used setting of $\calM$ in supervised learning, where the target space $N$ is  $\R^n$, and the loss function $F$ is some empirical loss. In this case, $\calM$ is an infinite dimensional vector space, so we have to specify its topology.  
For the easiest topology,  the one induced by the $L^2$ inner product,  we show
that the gradient of the loss function does not exist in any strict mathematical sense.
More precisely, a follow-your-nose computation of the gradient produces an expression containing delta functions, which are not elements of $L^2$.  To make this computation mathematically rigorous, we have to change the topology/inner product so that delta functions are  {\em continuous} linear functionals on the function space.  This property of delta functions characterize RKHSs, which
motivates the use of  RKHS theory in \S\ref{sec:rkhs_gtk}.

A typical choice of empirical loss term on $\calM$ is the mean squared error (MSE)
\begin{equation}
\label{eq:square_loss}
    \ell_E(f) = \frac{1}{2}\sum_{i=1}^k \|f(x_i) - y_i\|^2
\end{equation}
for a set $\{(x_i,y_i)\}^k_{i=1}\subset M\times \R^n$ of training data. 
Let $f\colon M\to\R^n$ have components $f^j\colon M\to\R$, and let $\delta_i$ be the ``$\delta$-vector field", $\delta_i(f^j)=f^j(x_i)\in\R$.  We formally\footnote{
Throughout the paper, "formal/formally" means that a concept does not have a mathematically sound definition, but can be manipulated mechanically.} 
and incorrectly consider $\delta_i$ to be a  function on $M$ with
the following $L^2$ inner product,
\begin{equation}
\label{eq:l2_delta}
\langle \delta_i, r\rangle_{L^2}=\int_M \delta_i r = r(x_i), 
\end{equation}
for $r\in C^\infty(M)$.
The integral is with respect to some volume form on $M$. 

Take a curve $(f_t)_{t\geq 0}\in \calM$ with $f_0 = f$, and 
set $h= \dot f_t|_{t=0}\in T_f\calM \simeq\calM.$ 
In the following formal computation, we ignore several nontrivial technicalities, which are detailed in  Appendix~\ref{appendix:ignored_tech}.
The differential $d\ell_E$ at $f$, denoted by $d\ell_{E,f}\colon T_f\calM \to \R$, 
satisfies
\begin{equation}\label{rhs} 
d\ell_{E,f}(h) = \frac{d}{dt}\biggl|_{_{t=0}} \frac{1}{2} \sum_i \|f_t(x_i) - y_i\|^2
= \sum_i \sum_{j=1}^n (f^j(x_i)-y_i^j) h^j(x_i).
\end{equation}
Thus we formally get
\begin{equation}\label{eq:formal_dl}
d\ell_{E,f}(h) 
= \int_M \sum_j   \left(\sum_i (f^j(x_i)-y_i^j) \delta_i\right) h^j.
\end{equation}
By the definition of the gradient, for $\nabla\ell_{E,f}$ to exist under the $L^2$ inner product (\ref{eq:l2_delta}), we must have
\begin{equation}
\label{eq:formal_grad} 
d\ell_{E,f}(h) = 
\langle \nabla \ell_{E,f}, h\rangle_{L^2} =  \int_M \sum_j  (\nabla \ell_{E,f})^j\cdot_E h^j, 
\end{equation}
where 
$\cdot_E$
is the Euclidean dot product. Comparing (\ref{eq:formal_dl}) and (\ref{eq:formal_grad}),
the formal gradient of $\ell_{E}$ at $f$ has $j^{\rm th}$ component given by
\begin{equation}\label{five}
\left(\nabla \ell_{E,f}\right)^j 
= \sum_i (f(x_i) -y_i)^j \delta_i.
\end{equation}
In other words, the gradient of the empirical loss (\ref{eq:square_loss}), if it exists, must formally be
a sum of delta functions. 
Since delta functions are not $L^2$ functions, {\it the $L^2$ gradient does not exist on $\calM$.}
Thus we cannot apply Prop.~\ref{prop:proj} to compute the gradient of $f$ on $\Phi$, even though the gradient on 
the finite dimensional manifold $\Phi$ must exist. 
\begin{remark}\label{rem5}
We emphasize that the non-existence of the $L^2$ gradient is endemic to supervised learning, because of the 
discrete nature of empirical loss function. 
For a general differentiable function 
$L= L(z,w)\colon \R^n\times \R^n\to \R$,  the corresponding $L$-empirical loss is 
$\ell_L(f) = \sum_i L(f(x_i), y_i)$. For
$\nabla^E L_z$ the Euclidean gradient in the $z$ direction, the formal gradient (\ref{five}) becomes
$\left(\nabla \ell_{L,f}\right)^j = \sum_i \left(\nabla^E L_z(f(x_i), y_i)\right)^j \delta_i$,  
which again is a sum of delta functions.
\end{remark}

As another common example in machine learning, we consider the Fisher-Rao metric on the space of $C^\infty$ probability distributions $\calP = \{p\in C^\infty(M,\R): p>0, \int_M p(x)d\mu(x) = 1\}$ on a 
measure space $(M, \mu),$ where $M$ is a compact manifold without boundary.  The tangent space at $p\in \calP$ is $T_p\calP = \{h\in C^\infty(M, \R): \int_M h d\mu = 0\}$ \cite[\S3]{Lafferty}.  
Let $\phi\colon \cW\to\calP$ give a parametrized submanifold $\Phi = \phi(\cW)\subset \calP$, with
$\phi(w)(x):= p(x,w).$ The Fisher-Rao metric at  $T_{\phi(w)}\Phi$ is $g_{ij,\phi(w)} = \bbE[ \partial_{w^i} \log p(x,w)\cdot \partial_{w^j} \log p(x,w)]$, so the Fisher-Rao metric on $T_p\calP$ is 
$$g_p(h_1,h_2) = \bbE[h_1(\log p) h_2(\log p)] = \int_M \frac{h_1h_2}{p^2} d\mu, $$
since $h(\log p) = (d/dt)|_{t=0} \log(p+th) = h/p.$

For the MSE loss on $\calP$, $\ell_E(p) = \frac{1}{2}\sum_{i=1}^k |p(x_i) - y_i|^2$,
as in the $L^2$ case we formally get
\begin{align*} g_p(\nabla \ell_{E,p}, h) = d\ell_{E,P}(h) 
&= \sum_{i=1}^k (p(x_i) - y_i)h(x_i) \\
&= \int_M \sum_{i=1}^k (p(x_i) - y_i)\delta_i(h)\\
&= g_p(p^2 \sum_{i=1}^k (p(x_i) - y_i)\delta_i,h).
\end{align*}
Under the usual assumption that $\int_M \delta_i d\mu = 1$, the formal gradient is 
$$\nabla \ell_{E,p} = p^2 \sum_{i=1}^k (p(x_i) - y_i)\delta_i - \sum_{i=1}^k p^2(x_i)(p(x_i) - y_i),$$
where the constant second term ensures $\int_M \nabla \ell_{E,p} =0$ formally.  Again, $\nabla \ell_{E,p}$  does not exist in $T_p\calP,$ since delta functions are not elements of 
 $T_p\calP$. 
Similarly, if we use a more general loss function $L(z,w)\colon\R\times \R\to \R$ and set $\ell\colon\calP\to \R$ by $\ell(p) = \sum_i L(p(x_i), y_i)$, then the formal gradient is 
$p^2 \sum_i (\partial L/\partial x)|_{(x_i, y_i)}\delta_i$, which again does not exist in $T_p\calP.$

\section{Gradients in RKHS and Generalized Tangent Kernel}
\label{sec:rkhs_gtk}

The non-existence of the gradient of empirical loss discussed in \S\ref{sec:empirical_loss} motivates our study of natural gradient in an RKHS, which is detailed in \S\ref{sec:grad_rkhs}.
In particular, whenever we have a pointwise defined loss function on a Riemannian Hilbert space/manifold $\calM$ of functions, the functional gradient exists on $\calM$ iff each $T_f\calM, f\in \calM,$ is an RKHS. 
For a finite dimensional parameterized function space $\Phi\subset\calM$, 
a natural gradient on $\Phi$ (and thus on the parameter space) is then determined by a projected RKHS from $T_f\calM$ onto $T_f\Phi.$  
Our study goes further along this line in \S\ref{sec:gtk}, where we 
introduce the Generalized Tangent Kernel (GTK), the core concept of this work, 
and prove an appealing automatic orthonormality property of GTK (Theorem~\ref{prop:basis_kernel}). 
The GTK framework provides a unified understanding of projected functional gradients, 
producing families of natural gradients that include both the standard gradient and the classical  natural gradient.
In fact, a finite dimensional submanifold $\Phi\subset\calM$ has many choices of natural gradients, which are in one-to-one correspondence with choices of GTKs on each $T_f\Phi$.
Results in this section have proofs in Appendix \ref{appendix:Sobolev} 
and numerical examples in Appendix \ref{appendix:sobolev_ng}.


\subsection{Natural Gradient in RKHS}
\label{sec:grad_rkhs}
In this subsection, we study the gradient of a loss function in an RKHS, 
which by definition is a function space 
where the evaluation/delta functions $\delta_x$ are continuous.

Let $\calH \subset\calM(M,\R^m)$ 
be a vector-valued RKHS with kernel $K_{\calH}$. The reproducing property of $\calH$ (see Appendix \ref{RKHS overview}) gives
\begin{align*}
\langle K_{\mathcal{H}}(x,\cdot), h \rangle_{\mathcal{H}}=(\langle K_{\mathcal{H}}^1(x,\cdot), h^1\rangle, \ldots, \langle K_{\mathcal{H}}^n(x,\cdot), h^n\rangle)
= (h^1(x), \ldots, h^n(x)) = h(x),\ \forall h\in\calH.
\end{align*}
Then there is a ``$\delta$-vector" in the dual space of $\calH$ that evaluates each component of $h$ at point $x$, i.e., 
\begin{equation*}
[\vec\delta_{x}(h)]^j = \delta^j_{x} (h^j) 
= \langle K^j_{\calH}(x, \cdot),h^j\rangle_{\calH_j} = h^j(x),
\end{equation*}
where $\mathcal{H}_j$ is the scalar-valued RKHS corresponding to the $j$-th component of $\calH$. Thus $d\ell_{E,f}(h)  = \langle \nabla \ell_{E,f}, h\rangle_{\calH}$  defines $\nabla \ell_{E,f}$ as an element of $\calH.$  
In contrast, $h\mapsto d\ell_{E,f}(h)$ is not a continuous linear functional on 
the function spaces discussed in~\S\ref{sec:empirical_loss}, so the functional gradient does not exist.

The next Lemma puts (\ref{five}) into the RKHS framework for more general loss functions.
\begin{lemma} 
\label{lem:emp_loss}
For $L= L(z,w)\colon\R^m\times \R^m\to \R$  a differentiable function,
the $\calH$-gradient of the $L$-empirical loss function $\ell_L(f) = \sum_i L(f(x_i), y_i)$ 
on $\calH \subset \calM(M,\R^m)$ is given by
\begin{equation}
\label{eq:rkhs_grad}
\nabla \ell_{L,f}=  \sum_i K_{\calH}(x_i, \cdot)\nabla^E L_z(f(x_i), y_i),
\end{equation}
 where $\nabla^E L_z$ is the Euclidean gradient of $L$ in the $z$ direction.
\end{lemma}

If $\calH =  \calM(M, \R^m)$ with a suitable inner product, we know we should project
$\nllf$ to the tangent space of $\Phi = {\rm Im}(\phi)$. 
Proposition~\ref{prop:proj} and Remark~\ref{rm:on_proj} provide two formulas, based on the pushforward basis $\{\frac{\partial\phi}{\partial w^i}\}$ and the $\calH$-orthonormal basis $\{h_i\}$ of $T_f\Phi$, respectively. 
Applying them to (\ref{eq:rkhs_grad}) gives 
\begin{equation}
\label{eq:rigorous_gradient} 
P\nabla\ell_{L,f}
= \tg_{\calH}^{k\ell}\sum_i \left(\nabla^E L_z(f(x_i), y_i) \cdot_E\frac{\partial\phi}{\partial w^k}\biggl|_{_{x_i}}\right)\frac{\partial\phi}{\partial w^\ell},
\end{equation}
where $\tg_{\calH}$ is the pullback to $\cW$ of the  inner product $\langle \cdot,
\cdot\rangle_{\calH}$ on $\calH$, and  
\begin{equation}
\label{eq:rigorous_gradient_on}
P\nabla\ell_{L,f} = \sum_i \left(\nabla^E L_z(f(x_i), y_i) \cdot_E h_{\ell} \big|_{_{x_i}}\right) h_{\ell}.
\end{equation}
For this RKHS $\calH$, the ``$\delta$-vector" exists on each tangent space $T_f\Phi$ and is given by the corresponding ``projected" evaluation function,
\begin{equation*}
\vec{\dx}(h) = \langle h, P_{f}K_{\calH}(x,\cdot)\rangle_{T_f\Phi} 
:= \langle h, K_f(x,\cdot)
\rangle_{T_f\Phi},\ {\rm for}\ h\in T_f\Phi,
\end{equation*}
where $K_f=P_{f}K_{\calH}$ is the projected reproducing kernel.
As above, $K_f$ can be 
written using either the  pushforward basis $\{\frac{\partial\phi}{\partial w^i}\}$ or the orthonormal basis $\{h_i\}$ of $T_f\Phi$. Combining these results, 
we have the following kernel form of the projected gradient, as an equivalent alternative to (\ref{eq:rigorous_gradient}) and (\ref{eq:rigorous_gradient_on}):
\begin{proposition}
\label{prop:kernel_gradient}
For $\calH = \calM(M,\R^m)$ an RKHS with kernel $K_{\calH}$ and $L= L(z,w)\colon\R^m\times \R^m\to \R$  a differentiable function,
the projection of the $\calH$-gradient of the $L$-empirical loss function $\ell_L(f) = \sum_i L(f(x_i), y_i)$ 
to $\Phi$ equals 
\begin{equation} 
\label{eq:proj_rkhs_gradient}
\nabla^\Phi \ell_{L,f} :=
P\nabla \ell_{L,f} 
= \sum_i K_f(x_i, \cdot) \nabla^E L_z(f(x_i), y_i),
\end{equation}
where $K_f$ is the projection  kernel of $K_{\calH}$ onto the RKHS $T_f\Phi$. $K_f$ can be represented by the  pushforward basis $\{\frac{\partial\phi}{\partial w^i}\}$ of $T_f\Phi$, 
\begin{equation}
\label{eq:proj_kernel_g}
    K_f(x,x') 
    = \tilde{g}^{ij}_{\calH} \frac{\partial\phi}{\partial w^i}(x)
    \otimes
    \frac{\partial\phi}{\partial w^j}(x'),
\end{equation}
or by an orthonormal basis $\{h_i\}$ of $T_f\Phi$,
\begin{equation}
\label{eq:proj_kernel_h}
    K_f(x,x') 
    = \sum_i 
    h_i(x) \otimes h_i(x').
\end{equation}
\end{proposition}
(Note that $T_f\Phi$ is an RKHS, as it is finite dimensional subspace of the RKHS $T_f\calH \simeq \calH$.)
In other words, in RKHS theory the projection view of Figure~\ref{fig:overview} 
and \S\ref{sec:grad_proj_pullback}
produces a {\em family} of projected RKHSs parametrized by $w\in \cW$,
with the corresponding projected reproducing kernel $K_f$, for $f = \phi(w)$.

From a practical perspective,
although different choices of $\calH$ produce different 
natural gradients,
in general, it is expensive to compute either $\tg^{ij}_{\calH}$ in (\ref{eq:proj_kernel_g}) or 
$\{h_i\}$ in (\ref{eq:proj_kernel_h}).
For $\calH = H_s(\R^n,\R^m)$
 a Sobolev space (detailed in Appendix \ref{sec:S_spaces}), an explicit computation of $K_{\calH}(x,\cdot)$ and hence of $\vec\delta_x$ is in 
\citet{R2023}.
Because of this, we introduce a new variant of natural gradient, called the Sobolev natural gradient in Appendix~\ref{appendix:sobolev_ng},
where we discuss an RKHS-based approximation of $\tg^{ij}_{\calH}$ and provide a practical computational method of $\tg^{ij}_{\calH}$ based on the Kronecker-factored approximation~\citep{Martens2015,Grosse2016}.
We also report preliminary experimental results on image classification benchmarks. 

Note that the RKHS form of projected gradient (\ref{eq:proj_rkhs_gradient}), in particular the projected kernel forms (\ref{eq:proj_kernel_g}) and (\ref{eq:proj_kernel_h}), is consistent with our observations from Proposition~\ref{prop:proj} and Remark~\ref{rm:on_proj}, namely that the inverse metric matrix of natural gradient appears as a consequence of orthogonal projection from $\calM$ to $\Phi$. This matrix becomes an identity matrix iff an orthonormal basis on each tangent space of $\Phi$ is used.

\subsection{A Unified Perspective with Generalized Tangent Kernel}
\label{sec:gtk}

In this section, we introduce a general form of the kernel (\ref{eq:proj_kernel_h}), which 
provides a unified characterization as well as a new understanding of projected functional gradients, including both natural gradient and standard gradient. 
The favorable property of this kernel also enables us to address the non-immersion case of $\phi$ 
in~\S\ref{sec:non_immersion_grad}.

From \S\ref{sec:setup}, 
the standard  gradient  formula (\ref{eq:surjection}) for the pushforward basis 
is similar to the natural gradient formula (\ref{eq:on_basis_proj}) for an orthonormal basis. 
This formal similarity extends to the corresponding RKHS expressions;
as with  (\ref{eq:proj_rkhs_gradient}) for the natural gradient, there is a kernel form for the standard gradient (\ref{eq:surjection}):
\begin{equation} 
\label{eq:surjection_grad}
\nabla^\Phi\ell_{L,f} :=
S\nabla \ell_{L,f} := 
\sum_i \Theta(x_i, \cdot) \nabla^E L_z(f(x_i), y_i),
\end{equation}
where
$\Theta(x,x') = \sum_i \frac{\partial \phi}{\partial  w^i}(x)
\otimes \frac{\partial \phi}{\partial  w^i}(x')$
happens to be the Neural Tangent Kernel (NTK)~\citep{Jacot2018}.
This observation motivates the following generalized form of both $K_f$ and $\Theta$.

\begin{definition}[Generalized Tangent Kernel]
For $\phi\colon\cW\to\calM$ an immersion, where $\cW$ is a parameter space and $\calM$ is a function space, 
let $\{b_i\}$ be a basis of the tangent space $T_f\Phi$ 
at $f=\phi(w)$ for some $w\in \cW$. The Generalized Tangent Kernel of $T_f\Phi$ is defined 
to be
\begin{equation}
\label{eq:def_gtk}
K_{GTK}(x, x') = \sum_i b_i(x)\otimes b_i(x').
\end{equation}
\end{definition}
The Generalized Tangent Kernel (GTK) thus provides a unified characterization of
gradients on $T_f\Phi$, with natural gradient and standard gradient as special cases. 
The natural gradient (\ref{eq:proj_rkhs_gradient}) is most closely related to the functional gradient in $\calM$ (w.r.t. some functional metric), as it
is obtained by an orthogonal projection and the corresponding GTK
 (\ref{eq:proj_kernel_h})
is defined by an orthonormal basis of $T_f\Phi$. 
The standard gradient (\ref{eq:surjection_grad}),
in contrast, corresponds to NTK,
a GTK defined by the pushforward basis from $\cW$, which in general is not orthonormal for the functional metric. Thus NTK and any GTK besides (\ref{eq:proj_kernel_h})
lose extra information about the functional  gradient in $\calM$.

So far, our presentation of
the role of the orthogonal projection in natural gradient provides an intuitive justification for the long-believed statement that, regardless of computational cost, natural gradient has an advantage over standard gradient in capturing the geometric structure of the immersed function space.
With the following favorable property of GTK, however, we see the surprising result that
the standard gradient is, at least locally, as geometrically informative as the natural gradient.
\begin{theorem}[{\bf Automatic Orthonormality of GTK}]
\label{prop:basis_kernel}
    Given a finite dimensional function space $\mathcal{F}\subset \{f\colon X\to\R^n\}$, where $X$ is any set, and any basis $\{b_i\}$ of $\mathcal{F}$, $\{b_i\}$ is an orthonormal basis of the RKHS 
    associated with the reproducing kernel given by
    the Generalized Tangent Kernel (\ref{eq:def_gtk}).
\end{theorem}
\begin{proof}
Given a basis $\{b_i\}$, let $\calH_K$ denote the RKHS associated with the reproducing kernel $K$ defined by the GTK (\ref{eq:def_gtk}). Then each $b_i$ is in the span of $\{K(x, \cdot), x\in X\}$~\cite[Lemma 2.1]{manton2015primer}, and can be respresented as 
$b_i(\cdot) = \sum_{x\in X} K(x,\cdot)C_i(x)$, for some function $C_i\colon X\to\R^n$. By (\ref{eq:def_gtk}),
\begin{align}
\label{eq:basis_rep}
    b_i(\cdot) &= \sum_{x\in X} C_i(x)^{T} \sum_{\ell} b_{\ell}(x)\otimes b_{\ell}(\cdot) 
               = \sum_{\ell} \left( \sum_{x\in X} C_i(x)^{T} b_{\ell}(x) \right) b_{\ell}(\cdot) 
              = \sum_{\ell} \alpha^{\ell}_i b_{\ell}(\cdot),
\end{align}
where $\alpha^{\ell}_i=\sum_{x\in X} C_i(x)^{T} b_{\ell}(x)$.
Since $\{b_i\}$ is a basis, the representation (\ref{eq:basis_rep}) must be unique, 
so $\alpha^{\ell}_i=\delta^{\ell}_i$.
Thus
%
by (\ref{repo}), 
\begin{align*}
    \langle b_i, b_j \rangle_{\calH_K} &= \left\langle \sum_{x\in X}K(x, \cdot)C_i(x), 
                                  \sum_{x'\in X}K(x', \cdot)C_j(x') \right\rangle_{\calH_K} 
                               = \sum_{x\in X}\sum_{x'\in X} C_i(x)^T K(x, x') C_j(x') \\
                               &= \sum_{x\in X}\sum_{x'\in X}C_i(x)^T\sum_{\ell}b_{\ell}(x) 
                                  \otimes b_{\ell}(x') C_j(x') 
                               = \sum_{\ell}\left(\sum_{x\in X}C_i(x)^{T}b_{\ell}(x) \right)
                                  \left(\sum_{x'\in X}C_j(x')b_{\ell}(x')\right) \\
                               &= \sum_{\ell}\alpha^{\ell}_i\alpha^{\ell}_j = \delta^i_j,
\end{align*}
where $\delta^i_j$ 
equals $1$ when $i=j$ and $0$ otherwise. 
\end{proof}

Theorem~\ref{prop:basis_kernel} is  
the key result in  this work, as its direct Corollary
expands the scope of natural gradient in an unexpectedly flexible way.
\begin{corollary}[{\bf Natural Gradient for Arbitrary Basis}]
\label{cor:ng_any_basis}
For $f=\phi(w)$, any choice of the basis $\{b_i\}$ of $T_f\Phi$
produces an orthogonal projection onto $T_f\Phi$ and thus a well-defined natural gradient at $w$.
The natural gradient has the following kernel form,
\begin{equation} 
\label{eq:proj_gtk_gradient}
\nabla^\Phi \ell_{L,f} :=
P\nabla \ell_{L,f} 
= \sum_i K_{GTK}(x_i, \cdot) \nabla^E L_z(f(x_i), y_i),
\end{equation}
where $K_{GTK}$ is defined by (\ref{eq:def_gtk}).
\end{corollary}

As a special case of of Corollary~\ref{cor:ng_any_basis}, for 
$\{b_i\}=\{\frac{\partial \phi}{\partial w^i}\}$, GTK recovers NTK, and (\ref{eq:proj_gtk_gradient}) recovers the standard gradient formula (\ref{eq:surjection_grad}), which is equivalent to the  conventional standard gradient on the parameter space $\cW$:
\begin{equation} 
\label{eq:simple_standard_grad}
\nabla_w\ell_{L,\phi(w)} :=
\sum_i \nabla^E L_z\left(\phi(w)(x_i), y_i\right)\nabla_w\phi(w)(x_i).
\end{equation}
If $\phi$ is a neural network with parameters $w$, then $\nabla^E L_z$ is the gradient of the loss function with respect  to the network output and $\nabla_w\phi$ is the gradient of the the network output with respect to the  network parameters. (\ref{eq:simple_standard_grad}) can be efficiently computed by backpropagation. Corollary 4.3 guarantees that the standard gradient (\ref{eq:simple_standard_grad}) used everywhere in machine learning is, in fact, a well-defined natural gradient, in the sense that it is the projection (with respect to some specific Riemannian metric) of the functional gradient in the ambient function space. 

To see the flexibility of Corollary 4.3 in defining valid natural gradients, we now give 
an example of changing a GTK by a change of coordinates on $\cW$ 
(which corresponds to a change of Riemannian metric on $\Phi$).
Let
$(\xi^1,\ldots,\xi^n)$ be a new set of coordinates on the parameter space $\cW$, given by a full-rank transformation matrix $A=(a^j_i)$, i.e., $\xi^j=a^j_i w^i$. 
Then $f=\phi(w)=\phi(A^{-1}\xi)=\tilde{\phi}(\xi)$
and
$\frac{\partial\tilde{\phi}}{\partial\xi^i}=\frac{\partial\phi}{\partial w^j}\tilde{a}^j_i$, where $(\tilde{a}^j_i)=A^{-1}$. 
Instead of choosing the pushforward of $\{\frac{\partial}{\partial w^i}\}$ as basis, i.e., $\{b_i\}=\{\frac{\partial \phi}{\partial w^i}\}$, and getting the standard gradient (\ref{eq:simple_standard_grad}), we choose the pushforward of $\{\frac{\partial}{\partial\xi^i}\}$, i.e.,
$\{b_i\}=\{\frac{\partial \tilde{\phi}}{\partial\xi^i}\}$, and apply Corollary~\ref{cor:ng_any_basis} again.  A change of coordinate computation gives the following altered gradient update rule on $\cW$:
\begin{align} 
\label{eq:transformed_standard_grad}
\nabla_\xi\ell_{L,\tilde{\phi}(\xi)}
:=& \sum_i \left( \nabla^E L_z\left(\tilde{\phi}(\xi)(x_i), y_i\right) 
\cdot_E \frac{\partial\tilde{\phi}}{\partial\xi^j}(x_i) \right)
\frac{\partial\tilde{\phi}}{\partial\xi^j} \nonumber\\
=& \sum_i \left( \nabla^E L_z\left(\tilde{\phi}(\xi)(x_i), y_i\right) 
\cdot_E \frac{\partial\phi}{\partial w^k}(x_i)\tilde{a}^k_j \right)
\frac{\partial\phi}{\partial w^\ell}\tilde{a}^\ell_j \nonumber\\
=& \sum_i \left( \nabla^E L_z\left(\phi(w)(x_i), y_i\right) 
\cdot_E \frac{\partial\phi}{\partial w^k}(x_i) \right) 
\tilde{a}^k_j \tilde{a}^\ell_j
\frac{\partial\phi}{\partial w^\ell} \\
=& \sum_i \left(\nabla^E L_z\left(\phi(w)(x_i), y_i\right)\nabla_w\phi(w)(x_i) \right) A^{-1}\left(A^{-1}\right)^T \nonumber\\
=& \bigg( \underbrace{\sum_i\nabla^E L_z\left(\phi(w)(x_i), y_i\right)\nabla_w\phi(w)(x_i)}_{(\ref{eq:simple_standard_grad})} \bigg) (A^TA)^{-1}. \nonumber
\end{align}
We omit $\frac{\partial\phi}{\partial w^\ell}$ in the last two lines as in (\ref{eq:simple_standard_grad}). 
(\ref{eq:transformed_standard_grad}) can also be efficiently computed by standard backpropagation, since the only extra term compared to (\ref{eq:simple_standard_grad}) is the fixed $(A^TA)^{-1}$. In the next section, we further leverage the flexibility of Corollary 4.3 to propose new approaches to address the ill-defined natural gradients for non-immersion function approximations.

To the best of our knowledge, this is a new understanding of standard gradient, natural gradient, as well as NTK. It is also worth noting that while GTK is a generalized form of NTK, the theory developed in this section does not correspond to known results about NTK.

\section{Natural Gradient for Non-Immersion Function Approximation}
\label{sec:non_immersion_grad}

If $\phi$ is not an immersion, the pullback metric matrix $(\tg_{ij})=\phi^* \bar g$ 
is singular. Therefore, the inverse matrix $(\tg^{ij})$ does not exist, and
the formula (\ref{projection}) for natural gradient makes no sense.
Neural networks with (piecewise) linear activations~\citep{Dinh2017} are such examples.
Specifically, take a two layer neural network $\phi_\theta\colon \R^n\to \R^m$ with activation function $\psi$, so
$\phi(\vec \theta)(\vec X) = A_{2, \theta_1}(\psi(A_{1, \theta_2}\vec X+ \vec b_{1,\theta_3}) + \vec b_{2,\theta_4}$, where $\vec \theta = (\theta_1, \theta_2,\theta_3,\theta_4)$ and $A_{2,\theta_1}$ has entries $\theta_1$, 
etc. Then $\phi(\theta_1,\theta_2, \theta_3,\theta_4) = \phi(\lambda^{-1}\theta_1, \lambda\theta_2, \lambda\theta_3,\theta_4)$ for any $\lambda\neq 0.$

Most existing solutions~\citep{Ollivier2015Rieman,bernacchia2018exact} use the Moore-Penrose generalized inverse to define a natural gradient formula.
However, the Moore-Penrose inverse in general loses more information about the function space geometry and gradient than an orthogonal projection from $\calM$ to $T_f\Phi$.
Using Corollary~\ref{cor:ng_any_basis}, we propose an alternative approach 
in certain explicit cases, consisting of three steps:
\begin{enumerate}
    \item For the given $\phi$, find a slice  $\cS$ (defined in \S5.1)
     of the parameter space $\cW$  with $\phi|_{\cS}$ an immersion.
     \item Compute a basis $\{c_i\}$ of each tangent space of $\cS$. 
    \item Choose the basis $\{b_i\}$ on $T_f\Phi$ from the pushforward of $\{c_i\}$ and construct the corresponding GTK from $\{b_i\}$. By Corollary~\ref{cor:ng_any_basis}, (\ref{eq:proj_gtk_gradient}) then gives a well-defined natural gradient.
\end{enumerate}
This approach applies to any non-immersion function approximation. While finding the slice $\cS$ 
may be difficult depending on the concrete function approximation, once $\cS$ and its basis $\{c_i\}$ are computed, they can be used throughout the training. 

This approach is also computationally efficient. Due to the flexibility of Corollary~\ref{cor:ng_any_basis}, we are allowed to choose the computationally simplest basis on $T_f\Phi$ to construct a GTK and (\ref{eq:proj_gtk_gradient}) always gives a valid natural gradient. In the following section, we give two such examples for the ReLU MLP. Our experiments in Section~\ref{sec:exp} show that the extra computational cost compared to the standard gradient baseline\footnote{Here we treat the standard gradient as if $\phi$ were an immersion, so strictly speaking  Corollary~\ref{cor:ng_any_basis} does not apply and this gradient is not a valid natural gradient.} is negligible. 

Note that a naive slice-based approach without leveraging Corollary~\ref{cor:ng_any_basis} would make little sense in practice. In particular, the natural gradient (\ref{projection}) for the immersion case is known to be computationally expensive, and people rely on a series of approximations~\citep{Martens2015,Grosse2016} to make it acceptable in practice. Restricting the natural gradient to a slice $\cS$ is yet more complicated; in fact, it is unclear if a valid approximation still exists.

\subsection{MLP with ReLU Activation}
\label{sec:mlp_relu}

Following~\citet{Dinh2017}, we use ReLU MLP without bias terms as a concrete example to derive two variants of well-defined natural gradients.  
Proofs are in Appendix \ref{appendix:proofs_non_immersion_grad}, and results and proofs for a general ReLU MLP with bias terms are  in Appendix~\ref{appendix:bias_mlp}.

Our setup is a $\ell$-layer MLP with ReLU activations, with parameters $w^k_r$ in the $k$-th layer, for 
$r=1,\ldots, n_k$, where $n_k$ is the number of matrix entries in the $k$-th layer. 
The space of parameter vectors is
$\cW = \{\vec w = (w^1_{1},\ldots,w^{\ell}_{n_\ell})\}$
and dim$(\cW) = \sum_\ell n_\ell$. 
The map $\phi\colon\cW \to\maps(\R^n,\R^m)$ takes a parameter vector to the associated MLP.
We abbreviate the subvector for the $k$-th layer by $w^k = (w^k_1,\ldots, w^k_{ n_k})$,
and 
$\vec w = (w^1,\ldots,w^\ell).$  We can assume that $w^k\neq \vec 0$ for all $k$, since a MLP with some $w^k = \vec 0$ is trivial.  

Define a multiplicative group $G = \{\vec \alpha = (\alpha_1,\ldots,\alpha_\ell): \alpha_i>0, \prod_{i=1}^\ell \alpha_i = 1\}$.
The map $G\to \R^\ell, (\alpha_1,\ldots,\alpha_\ell)\mapsto \sum_i \log\alpha_i$ is a bijection to the plane
$\sum_i x^i = 0$, so $G$ is a manifold of dimension $\ell -1.$
$G$ acts on $\cW$ by 
$$(\alpha_1,\ldots,\alpha_\ell)\cdot (w^1,\ldots, w^\ell) =
(\alpha_1 w^1,\ldots, \alpha_\ell w^\ell).$$  
It is straightforward to check that
\begin{equation}\label{basic}
\phi(w^1,\ldots, w^\ell) = \phi(\vec\alpha\cdot (w^1,\ldots, w^\ell)),
\end{equation}
for all $\vec\alpha\in G$. Thus $\phi$ is far from injective. As shown below, $\phi$ is not an immersion.

\begin{proposition}\label{prop:non_immersion}
   The dimension of ker$(d\phi)_{\vec w}$ at each  $\vec w\in \cW$ is at least $\ell-1.$  Thus $\phi$ is not an immersion.  In fact, the span of 
\begin{align}\label{4a}
\bigg\{Z^j|_{\vec w} 
= \sum_{r=1}^{n_1} w^1_r\frac{\partial\ }{\partial w^1_r}\biggl|_{\vec w} - \sum_{s=1}^{n_j} w^j_s \frac{\partial\ }{\partial w^j_s}\biggl|_{\vec w} 
= \frac{\partial\ }{\partial r^1}\biggl|_{w^1} - \frac{\partial\ }{\partial r^j}\biggl|_{w^j}: 
    j = 2,\ldots,\ell\bigg\}
\end{align}
is in ker$(d\phi)_{\vec w}.$ Here $r^i$ is the radial vector in the parameter space of the $i^{\rm th}$ layer.
\end{proposition}  

\begin{remark}
This issue is fundamentally caused by the homogeneity of a neural network activation function,
as any (piecewise) linear activation function  (ReLU, Leaky ReLU, Shifted ReLU, Parameterized ReLU, etc.) produces a non-immersive $\phi.$
\end{remark}

\noindent We now follow the three steps proposed in the previous section.

\smallskip
\noindent {\bf Step 1.} Find a slice $\cS$ of $\cW$

\noindent Note that $\phi$ is constant along the orbits $\calO_{\vec w} = \{g\cdot\vec w: g\in G\}$.  Thus all the information in $\phi$ is contained in a slice for $G$, {\em i.e.,} a submanifold $\cS$ of $\cW$ that
intersects each orbit once. 
The following theorem gives an example of $\cS$.
\begin{theorem} 
\label{thm:slice} 
Write $\cW = \R^{n_1}\times\ldots\times \R^{n_\ell}$. A slice $\cS$ is given by 
$\cS = \{\lambda(w^1, \ldots,w^\ell): w_i \in S^{n_i-1}, \lambda>0\}, $
where $S^{k-1}$ is the unit sphere in $\R^k$.
\end{theorem}
For $\tho\in \cS$, it is unlikely that any nonzero $v$ in $T_{\tho}\cS$
lies in ker$(d\phi)_{\tho}.$  By a dimension count, 
${\rm dim}(T_{\tho}\cS) + {\rm dim}({\rm Span}\{Z^j\}) = {\rm dim}(\cW).$ 
Therefore, we expect 
${\rm ker}(d\phi)_{\tho} = {\rm Span}\{Z^j\}$ in Prop.~\ref{prop:non_immersion}, so that $\phi|_{\cS}$ 
is immersion.

\smallskip
\noindent {\bf Step 2.} Find a basis $\{c_i\}$ of $\cS$.

\noindent We take  a basis $\{\vec r := c_{0}^0, c_{k}^ {k_i}: 
k= 1, \ldots, \ell; k_i = 1,\ldots,n_k-1\}$ of the tangent space $T_{\vec  w}\cS$ for the slice $\cS$.
In polar coordinates $(r_k, \psi_k^{k_i}), k_i = 1,\ldots,n_k-1,$ on the $k^{\rm th}$ layer, 
$\vec r :=\sum_{k}  \partial/\partial r_{k}, c_{k}^{k_i} = \partial/\partial \psi_k^{k_i}$, 
at $\vec x \in \R^{n_k}.$ 
The Euclidean metric restricts to a metric on $\cS$ with metric tensor 
$h = h_{(k, k_i), (s,s_j)} = c_{k}^{k_i}\cdot c_{s}^{s_j}$. Details of $\{c_i\}$ and $h$ are given in Appendix~\ref{appendix:proofs_non_immersion_grad}.

\smallskip
\noindent {\bf Step 3.} Define a GTK.

\noindent Given the flexibility of Corollary~\ref{cor:ng_any_basis}, we suggest two approaches for constructing the GTK from pushforward of $\{c_i\}$.

\noindent {\em Approach I: NTK.}  
\noindent We push forward the metric $h$, the Euclidean inner product restricted to $\cS$, to $T_f\Phi.$  Thus 
we replace $\Theta$ in (\ref{eq:surjection_grad}) with 
$\sum_{k,k_i}h^{(k, k_i), (r,r_j)} d\phi(c_k^{k_i})\otimes d\phi(c_r^{r_i}).$

\noindent {\em Approach II: GTK.}   
\noindent We can directly use (\ref{eq:def_gtk}) to get
$K_{GTK} = \sum_{k,k_i} 
d\phi(c_k^{k_i})\otimes d\phi(c_k^{k_i})$.

\medskip
For Approach I, the final expression for the natural gradient is given by the following proposition.

\begin{proposition}
\label{prop:grad_S} 
The gradient $\nabla^{\cS}\wf$, for $\wf\colon\cW\to\R$, is given by
\begin{align}\label{eq:grad_s}
\nabla^{\cS} \wf_{\vec w} 
=& \sum_{k=1}^\ell \sum_{k_j=1}^{n_k}\frac{\partial \wf}{\partial w^k_{k_j}}\frac{\partial}{\partial w^k_{k_j}}
-\frac{1}{|w_1|^2}
\sum_{k=1}^\ell\left(\sum_{k_i=1}^{n_k}w^k_{k_i}\frac{\partial \wf}{\partial w^k_{k_i}}\right)\sum_{k_j=1}^{n_k}
w^k_{k_j}\frac{\partial }{\partial w^k_{k_j}} \nonumber \\
&+\ \frac{\ell^{-1}}{|w_1|^2}
\left(\sum_{s=1}^\ell \sum_{s_i=1}^{n_s}w^s_{s_i}\frac{\partial \wf}{\partial  w^s_{s_i}}\right) \sum_{k=1}^\ell 
\sum_{k_j=1}^{n_k}w^k_{k_j}
\frac{\partial }{\partial w^k_{k_j}}.
\end{align}
\end{proposition}
Details of the notation and the proof are in Appendix \ref{appendix:proofs_non_immersion_grad}. Gradient formulas for Approach II are in Appendix~\ref{appendix:approach2}.


The procedure of training MLP with ReLU activations using Proposition~\ref{prop:grad_S} is summarized in Algorithm~\ref{alg:slice_sgd}.
(\ref{eq:grad_s}) may seem complicated, but in fact, it can be easily implemented, and the extra computational cost versus standard gradient computation is negligible. Preliminary experimental results of both Approach I and II are provided in the next section and a PyTorch implementation of Approach I is included in Supplementary Materials.
\vspace{-0.1in}
\begin{algorithm}[tb]
    \caption{Slice SGD for MLP with ReLU activations}\label{alg:slice_sgd}
    {\bf Require}: stepsize $\alpha$ \\
    {\bf Init}: network weights $\tho=(w_{1,1},\ldots,w_{\ell,n_\ell})\in\cS$, $t=0$
    \begin{algorithmic}
    \While {stopping criterion not met}
        \State Compute Euclidean gradients $\frac{\partial \wf}{\partial w_{k,k_i}}\biggl|_{\vw=\vw^t}$ 
            via standard backpropagation
        \State Compute $\nabla^{\cS}\wf_{\vw^t}$ by (\ref{eq:grad_s})
        \State Update weights $\bar{\vw}^{t+1} = \vw^t + \alpha\cdot\nabla^{\cS} \wf_{\vw^t}$
        \State Move $\bar{\vw}^{t+1}$ along its G-orbit back into $\cS$ by
        $\vw^{t+1} = \left(\prod_{i=1}^\ell | \bar{w}^{t}_i|\right)^{1/\ell}
        \left(\frac{\bar{w}^{t}_1}{|\bar{w}^{t+1}_1|},\ldots, \frac{\bar{w}^{t+1}_\ell}{|\bar{w}^{t+1}_\ell|}\right)\in \cS$
        \State $t\leftarrow t+1$
    \EndWhile
    \end{algorithmic}
\end{algorithm}

\subsection{Experiments}
\label{sec:exp}

Following Algorithm~\ref{alg:slice_sgd},
we implemented a stochastic version of the proposed natural gradient descent for a seven-layer MLP with ReLU activations. We test our implementation (slice SGD) versus standard SGD on the CIFAR-10 image classification benchmark. 
We use a seven-layer MLP of layer size $(2634, 2196, 1758, 1320, 882, 444, 10)$, all layers except the last uses the ReLU activation function. This MLP is trained by standard SGD and our slice SGD methods using a batch size 128 for 200 epochs respectively. 
To simplify the comparison, we have not used any weight decay or momentum. 
The same fixed learning rate of $0.01$ is used for all methods. 
Note that these basic settings are commonly used for the SGD baseline, which have been extensively optimized by the ML community. The learning rate is also tuned for the baseline. Our methods simply align everything with the baseline without specific tuning.
All experiments are run on a desktop with an Intel i9-7960X 16-core CPU, 64GB memory, and a GeForce RTX 2080Ti GPU.

The testing performance vs wall-clock training time is shown in Figure~\ref{fig:exp_compare}. Though our proposed slice SGD formulas seem complicated, the implementation in PyTorch, as attached in Supplementary Materials,
is straightforward. The extra computational cost compared to standard SGD 
is $O(n)$ scalar multiplications and additions, where $n$ is the number of network parameters.
Note that these extra computations are between network weights and their gradients (already) obtained by standard backpropagation, which can be effectively leveraged by modern GPU's. This is fundamentally different from the sequential nature of backpropagation.
As shown in the wall-clock axis of Figure~\ref{fig:exp_compare}, the difference in time cost versus standard SGD is negligible in practice. 
We have also tested Approach II described in \S\ref{sec:mlp_relu}, with the corresponding gradient formulas provided in Appendix~\ref{appendix:approach2}. As shown in Figure~\ref{fig:exp_ours}, it performs similarly to the Approach I.

\begin{figure}[htb]
    \centering
    \subfigure[Ours vs. Baseline]{
        \includegraphics[width=0.47\textwidth]{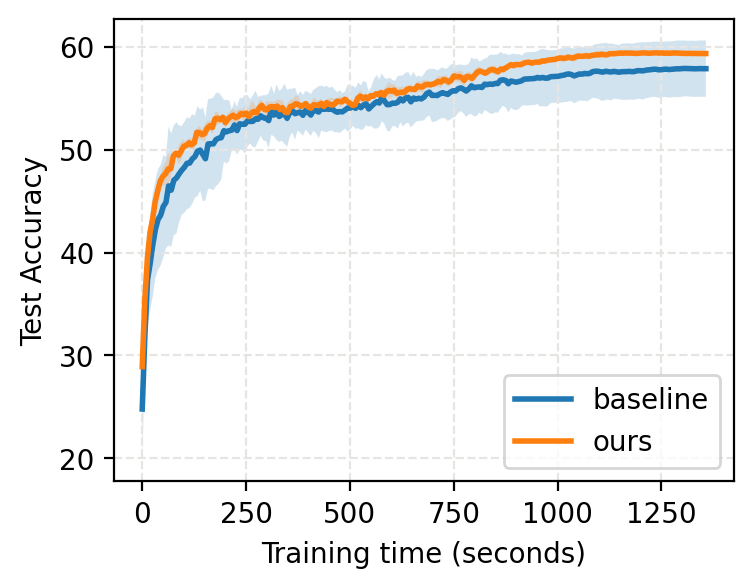}
        \label{fig:exp_compare}
    }
    \subfigure[Approach I vs. Approach II]{
        \includegraphics[width=0.47\textwidth]{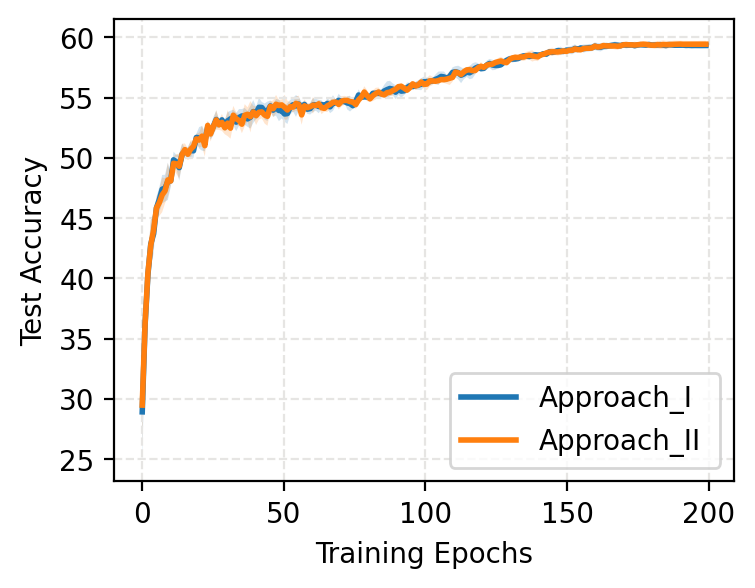}
        \label{fig:exp_ours}
    }
    \caption{(a) Testing accuracy vs. wall-clock of the slice SGD (ours) vs standard SGD (baseline) on CIFAR-10. (b) Testing accuracy vs. training epochs of Approach I and Approach II of our slice SGD on CIFAR-10. Results are averaged over five runs of different random seeds, with the shaded area corresponding to the standard deviation.}
    \label{fig:exp}
\end{figure}


While our experimental results are still preliminary, we feel that they show the potential of our theory, while leaving a large room for practical optimization to future work.

\section{Discussion}
\label{sec:discussion}

We believe our work is just the beginning 
of fruitful results in both theory and practice of a broader view of natural gradient.
From a theoretical viewpoint, the Generalized Tangent Kernel,
in particular 
its automatic orthonormality property, induces a family of Riemannian metrics on the entire function space $\calM$, putting
both the natural/pullback metric and the 
Euclidean/pushforward metric into a unified framework, 
as both metrics reflect (different) geometries on the function space.
This motivates the question of   
which metrics in $\calM$ lead to better convergence rate for natural gradient descent. Even on a finite dimensional manifold, 
the relation between a choice of metric and the convergence rate
seems to be not well studied. As a first step towards comparing pushforward and pullback metrics, we compare notions of flatness in Appendix~\ref{appendix:eg_pullback_pushforward}. 

There are many more topics in Riemannian geometry and machine learning to explore. For example, we have not discussed
 the relation between the choice of metric and the generalization ability.
A related example is~\citet{kozachkov2023generalization}, which establishes an interesting connection between Riemannian contraction and an algorithmic stability generalization bound. In another direction,  the very intriguing question of which flows on a manifold can be put into gradient flow form for some Riemannian metric \citep{shoji2024all} is equivalent to determining which flows can be put into GTK form; this will be discussed in future work.

Transformers are known to possess the in-context learning capability. It has been shown~\citep{ahn2023transformers} that transformers actually learn to implement preconditioned gradient descent for the forward pass. Whether there is a GTK explanation for this preconditioned gradient descent in forward pass is another interesting question for future study. 

From a practical viewpoint, our study and experiments of the
new families of natural/standard gradient descent methods motivated by our theory
are still preliminary. We also hope to 
study slices of parameter spaces for non-immersion function approximations beyond MLPs, such as convolutional neural networks, ResNets, and Transformers. 
It is therefore appealing from both theoretical and practical perspectives to see if a general approach to the slice SGD can be designed for different neural network architectures.


\bibliography{main}
\bibliographystyle{tmlr}

\appendix
\newpage

\section{Key Notations}
\label{appendix:notation}
\begin{table*}[ht]
\caption{Key Notations}
\begin{center}
\begin{small}
\vskip -0.1in
\def\arraystretch{1.5}
\begin{tabular}{|c|}
        \hline
        $\calM = \calM(M,N)$: \text{the space of differentiable functions from a manifold $M$ to a manifold $N$;}\\ \text{usually $M = \R^n, N = \R^m$}\\

        \hline
        $\phi\colon \cW\to \calM: \text{function approximation, i.e., the mapping from parameter space $\cW$ to function space $\calM$}$    \\
        \hline

        $\Phi=\phi(\cW)\subset \calM: \text{the parameterized function space, i.e., the image of $\phi$ on $\calM$}$  \\
        \hline

        $F\colon \calM\to \R : \text{the loss function on $\calM$}$  \\
        \hline

        $\widetilde F = F\circ \phi\colon  \cW\to \R : \text{the loss function on $\cW$ from $F$ induced by $\phi$}$ \\
        \hline

        $\bar{g}=(\bar{g}_{ij}): \text{the Riemannian metric on $\calM$}$  \\
        \hline

        $\tg=(\tg_{ij})=\phi^*\bar{g}: \text{the pullback metric on $\cW$}$  \\
        \hline
        
        $\ell_E: \text{the empirical MSE loss on $\cM$, given some finite training set}$  \\
        \hline

        $\nabla^\Phi$: \text{the orthogonal projection of the gradient on a function space or RKHS to $\Phi$}\\

        \hline

        $\cW = \{\vec w = (w^1_{1},\ldots,w^\ell_{n_\ell})\}$: \text{the parameter space for $\ell$-layer ReLU MLP,}\\ 
        $n_k$ \text{is the number of parameters of the $k$-th layer}  \\
        \hline

        $w^k = (w^k_{1},\ldots, w^k_{n_k}): \text{ the vector of parameters of the $k$-th layer; we assume that $w^k\neq \vec 0$}$  \\
        \hline

        $G = \{\vec \alpha = (\alpha_1,\ldots,\alpha_\ell): \alpha_i>0, \prod_{i=1}^\ell \alpha_i = 1\}$: \text{the multiplicative group acting on (MLP) $\cW$ by} \\ 
        $\vec\alpha \cdot \vec w = (\alpha_1 w^1,\ldots, \alpha_\ell w^\ell)$  \\
        \hline

        $\calO_{\vec w} = \{g\cdot\vec w: g\in G\}: \text{the orbit of the $\vec w \in\cW$ under the  group action; $\phi$ is constant on $\calO_{\vec w}$}$  \\
        \hline

        $\cS = \{\lambda(w^1, \ldots,w^\ell): w^i \in S^{n_i-1}, \lambda>0\}$: \text{a slice in $\cW$ for the  group action,} \\
        \text{where $S^{k-1}$ is the unit sphere in $\R^k$}  \\
        \hline

        $\calH \subset\calM(M,\R^m): \text{a vector-valued RKHS with kernel $K_{\calH}$}$  \\
        \hline

        $\ell_L: \text{the more general $L$-empirical loss on $\cM$, where $L= L(z,w)\colon \R^m\times \R^m\to \R$ is differentiable}$  \\
        \hline

        $\nabla^E L_z: \text{the Euclidean gradient of $L(z,w)$ in the $z$ direction}$  \\
        \hline

        $\tg_{\calH}=(\tg_{\calH,ij}): \text{the pullback to $\cW$ of the  inner product $\langle \cdot, \cdot\rangle_{\calH}$ on $\calH$}$  \\
        \hline

        $K_f=P_{f}K_{\calH}: \text{the projected reproducing kernel of $K_{\calH}$ on the tangent space $T_f\Phi$}$  \\
        \hline

        $\Theta(x,x') = \sum_i \frac{\partial \phi}{\partial  w^i}(x)\otimes \frac{\partial \phi}{\partial  w^i}(x'): \text{the Neural Tangent Kernel (NTK)~\citep{Jacot2018}}$  \\
        \hline

        $K_{GTK}(x, x') = \sum_i b_i(x)\otimes b_i(x'): \text{the Generalized Tangent Kernel (GTK)}$  \\
        \hline

\end{tabular}
\label{tab:notation}
\end{small}
\end{center}
\end{table*}

\newpage
\section{Background Material}
\label{sec:background}

Since our framework involves Riemannian manifolds, RKHS, and Sobolev spaces, we briefly review aspects of this material. General references are \cite{Lee} for manifolds, \cite{Gilkey} for Sobolev spaces on domains in $\R^n$ and on manifolds, and \cite{Berlinet2004} or \cite{manton2015primer} for RKHSs. 

\subsection{Riemannian Manifolds}\label{Riemannian}
We assume the reader is familiar with the concepts of manifolds and local charts. To avoid technicalities, we assume all manifolds are smooth and are embedded in some $\R^N.$
Let $M$  be an $m$-dimensional manifold, denoted $M^m.$
At a point $p\in M^m$, the tangent space $T_pM\simeq \R^m$ is the collection of tangent vectors to smooth curves passing through $p$: $T_pM = \{(d/dt|_{t=0})\gamma(t)\}$, where $\gamma\colon (-\epsilon, \epsilon)\to M$ is a smooth map for some $\epsilon>0$ with $\gamma(0) = p.$  

Associated to a smooth map $f\colon M^m\to N^n$ between manifolds is its differential $df_p\colon T_pM\to T_{f(p)}N$, the best linear approximation to $f$ at $p$, defined by 
$df_p(X) = (d/dt|_{t=0})f(\gamma(t))$, where $\gamma(t)$ has $\gamma(0) = p, 
(d/dt|_{t=0})\gamma(t) = X.$  (In many texts, $df_p$ is denoted by $f_{*,p}.$) 
We suggest the reader draw a picture for this definition. 
The differential is a linear transformation between tangent spaces; 
for $M=\R^m, N = \R^n$, the differential is the usual Jacobian matrix $df_p = (\partial f^i/\partial x^j|_p)_{n\times m}$.   
The equality $d(f\circ g) = df\circ dg$ for $f\colon M\to N$, $g\colon W\to M$ has a picture proof, and for $M, N, W$ Euclidean spaces is exactly the chain rule.  
As a special case, for $f\colon M\to \R$, $df_p\colon T_pM\to T_{f(p)}\R \simeq \R$ is an element of $T_p^*M$, the dual vector space of linear functionals on $T_pM.$ 

A manifold $M$ is a Riemannian manifold if it has a positive definite inner product $g_p$ on each $T_pM$, with the condition that $g_p$ depends smoothly on $p$ in a technical sense.  Thus 
$g_p\colon T_pM\times T_pM\to \R$, with (i) $g(X,Y) = g(Y,X)$ and (ii) $g(X,X) \geq 0$ with $g(X,X) = 0$ iff $X=0.$  The dot product $X\cdot_m Y$ on $M = \R^m$ is the basic example of a Riemannian metric, and for $M\subset \R^N$, we can set $g_p(X,Y) = X\cdot_N Y.$  However, $M$ has uncountably many Riemannian metrics; for example, we can take $h_p(X,Y) = A_p X\cdot_N Y$ for any positive definite symmetric matrix $A_p$ depending smoothly on $p$.

A finite dimensional vector space $V$ and its dual space $V^*$ of linear functionals 
$\lambda\colon V\to \R$ are isomorphic, just because they have the same dimension, but there is no
 canonical/natural isomorphism that doesn't involve choosing a basis of $V$.  (``Canonical'' 
 means ``without adding any more data.'') In contrast, if $V$ has a positive definite inner product $\langle \cdot,\cdot\rangle$, then we have a canonical isomorphism $V\stackrel{\simeq}{\to} V^*$ given by
$v \mapsto \lambda_v$, where $\lambda_v(w) = \langle v,w\rangle$ for $w\in V.$ In particular, on a Riemannian manifold there is a canonical isomorphism between $T_pM$ and $T_p^*M$.  
Thus for a smooth function $f\colon M\to \R$, the differential $df_p\in T_p^*M$ corresponds to a unique tangent vector in $T_pM$, the gradient vector $\nabla f_p.$  By the canonical isomorphism above, we have the fundamental formula
$$df_p(X) = g_p(\nabla f_p, X).$$
As expected, for $M = \R^m$, the gradient is the usual Euclidean gradient.  

In Prop.~\ref{prop:non_immersion}, we will use the fact that for $f\colon M\to N$  and $n\in N$, 
$df|_{f^{-1}(n)} = 0$, i.e. the differential vanishes on the level set $f^{-1}(n)$: intuitively, $f$ does not change on the level set. (This assumes that the level set is a submanifold of $M$.)   It follows that any tangent vector $v\in T_m f^{-1}(n)$ has $df(v) = 0.$  Thus
$T_m f^{-1}(n) = {\rm Ker}(df_m).$

All this theory is useless unless we can do explicit calculations in local coordinates. If $\alpha_p\colon U_p\to \R^m$ is a local chart around $p\in M^m$, then $d\alpha_p\colon T_p U \to T_{\alpha(p)} \R^m \simeq \R^m$ is a vector space isomorphism. (Apply the chain rule to $d(\alpha^{-1}\circ \alpha) = d{\rm Id} = {\rm Id.}$)
Thus the standard basis 
$\{e_1, \ldots,e_m\}$ of $\R^m$ corresponds to a basis of $T_pM$, confusingly denoted 
$\{\partial/\partial x^1|_p,\ldots,\partial/\partial x^m|_p\},$  or just $\{\partial_{x^1}|_p,\ldots, \partial_{x^m}|_p\}.$
In this basis, the Riemannian metric has the local expression as a symmetric matrix:
$$g_p = (g_{ij})_p,\ \  g_{ij} = g(\partial_{ x^i}|_p,\partial_ {x^j}|_p).$$
Dropping the $p$ index, if $X, Y\in T_pM$ are written as
$X = X^i \partial_{ x^i}$, $Y= Y^j \partial_{x^j}$ (using the Einstein summation convention that repeated indices are summed over), then
$g(X,Y) = g_{ij} X^i Y^j.$

The corresponding dual basis of $T_p^*M$ is also confusingly denoted by $\{dx^1,\ldots, dx^m\}$, where $dx^i(\partial /\partial x^j) = \delta^i_j$, the Kronecker delta function.
It follows that the local expression for $df_p$ is $df_p = (\partial f/\partial x^i) dx^i.$  Thus the coordinate-independent differential $df_p$ keeps track of all first derivative information of $f$ in any local coordinate chart.
The positive definite matrix $(g_{ij})$ is invertible, and its inverse is denoted by $(g^{ij}).$  The canonical isomorphism $T_p^*M \to T_pM$ is given by $\beta = \beta_i dx^i \in T_p^*M \mapsto g^{ij} \beta_i \partial_{ x^j}\in T_pM.$  As a result, the local expression for the gradient of $f$ at $p$ is
$\nabla f_p = g^{ij}_p (\partial f/\partial x^i)|_p\partial_{ x^j}|_p.$

As a final technical point, a smooth function $\phi\colon M\to N$ pulls an inner product $g$ on $N$ back to an inner product denoted $\phi^*g$ on $M$ by setting 
\begin{equation}\label{pm}(\phi^*g)_p(X,Y) = g_{\phi(p)}
(d\phi_p(X), d\phi_p(Y)).
\end{equation}However, a Riemannian metric may not pull back to a Riemmanian metric: if $X\neq 0$ but $d\phi(X) =0$, then $(\phi^*g)(X,X) = 0.$  Thus 
we must require that $\phi$ is an immersion, which by definition means that $d\phi_p$ is injective for all $p\in M$, in which case  the {\em pullback} $\phi^*g$ of a Riemannian metric is a Riemannian metric.

\subsection{Sobolev Spaces}
\label{sec:S_spaces}

One motivation for Sobolev spaces comes from the fact that  linear transformations on infinite dimensional normed vector spaces may not be continuous; this never happens in finite dimensional linear algebra.  In fact, the simplest differential operator $d/dx$ on smooth functions on $[0,1]$ does not extend to a continuous operator on the Hilbert space $L^2[0,1]$: the functions $f_n(x) =n^{-1/2}\sin(2\pi nx)$ 
have $\lim_{n\to\infty} \Vert f_n\Vert_{L^2} = 0$, but
 $\lim_{n\to \infty} \Vert (d/dx)f_n\Vert_{L^2}=\infty.$

There are two approaches to handling differential operators 
as linear transformations on Banach/Hilbert spaces: (i) develop a theory of discontinuous/unbounded operators as in {\it e.g.}, \cite{Schmudgen}; (ii) use Sobolev spaces to make the operators continuous.  For (ii), if we define the first Sobolev space $H^1[0,1]$ to be the Hilbert space completion of $C^\infty[0,1]$ with respect to the norm
$\Vert f\Vert_1 := \Vert f\Vert_{L^2} + \Vert (d/dx)f\Vert_{L^2}$, then it is immediate that $d/dx\colon H^1\to L^2$
is continuous: $\lim_{n\to\infty}\Vert f_n\Vert_{1} =0 $ implies $\lim_{n\to\infty}\Vert f_n\Vert_{L^2} =0.$ 

In higher dimensions, let $\Omega$ be a bounded open set in $\R^n.$   
For $s$  a positive integer, we  define the $s$-Sobolev space $H^s(\Omega)$ to be the Hilbert space completion of $C_c^\infty(\Omega, \C)$  (smooth functions with compact support in $\Omega$) with respect to the   norm
$$\Vert f\Vert_{s}^2= \sum_{|\alpha|\leq s} \Vert \partial^\alpha f\Vert_{L^2}^2.$$
Here
$\alpha = (\alpha^1,\ldots, \alpha^n)$ is a multi-index with $|\alpha| = \sum_i \alpha_i$, and $\partial^\alpha 
= \partial^{|\alpha|}/\partial x^{\alpha^1}\ldots\partial x^{\alpha^n}.$  
Thus the norm measures the $L^2$ norm of all partial derivatives of $f$ up to order $s$.  By standard properties of the Fourier transform, 
$$\Vert f\Vert_s^2 = \int_{\R^n} \sum_{|\alpha|\leq s} |\xi^\alpha|^2 |\hat f(\xi)|^2 d\xi_1\ldots d\xi^n,$$
where $\xi = (\xi_1,\ldots,\xi_n)$ and $\xi^\alpha = \ xi_1^{\alpha_1}\cdot\ldots\cdot \xi_n^{\alpha_n}.$
There are positive constants $C_1, C_2$ such that 
$$C_1 (1+|\xi|^2)^s \leq \sum_{|\alpha|\leq s} |\xi^\alpha|^2 \leq C_2 (1+|\xi|^2)^s,$$
since these polynomials in the components of $\xi$ have the same degree, so the Sobolev $s$-norm is equivalent to the (renamed) norm
\begin{equation}\label{one} \Vert f\Vert_s^2 = \int_{\R^n} |\hat f(\xi)|^2 (1+ |\xi|^2)^s d\xi_1...d\xi_n,
\end{equation}
with the associated inner product $\langle f, g\rangle_s = \int_{\R^n} |\hat f(\xi)|\cdot |\overline{\hat g(\xi)}| (1+ |\xi|^2)^s d\xi_1...d\xi_n.$
The advantage of (\ref{one}) is that we can now define $H^s$ for any $s\in \R$, and we have the basic fact that there is a continuous nondegenerate pairing $H^s\otimes H^{-s}\to \C$, 
$f\otimes g\mapsto \int_{\Omega} f\cdot\overline{g}.$  This implies that the dual space $(H^s)^*$ to $H^s$ is isomorphic to $H^{-s}$:
\begin{equation}\label{two}(H^{-s}) \stackrel{\simeq}{\longrightarrow} (H^{s})^* \ {\rm via}\  f\in H^{-s} \mapsto \left(g\in H^s \mapsto \int_{\Omega} f\cdot\overline{g}\ \right).
\end{equation}

By the fundamental Sobolev Embedding Theorem,  for $s > (n/2) + s',$ $H_s(\Omega)$ is continuously embedded in $ C^{s'}(\Omega)$, the space of $s'$ times continuously differentiable functions on $\Omega$ with the sup norm. We always assume $s$ satisfies this lower bound for $s'=0$.  As a result, for any $x\in \Omega$, the delta function $\dx$ is in $H_s^*(\Omega)$: if $f_i\to f$ in $H^s$, then $f_i\to f$ in sup norm, so $\dx(f_i) = f_i(x) \to f(x) = \dx(f).$  Thus there exists $d_x\in H^s(M)$ such that
\begin{equation}\label{del}\dx(f) = \langle d_x,f\rangle_s, \forall f\in H^s.
\end{equation}

All this extends to vector-valued functions.
For $f= (f^1,...,f^m)\in H^s(\Omega, \R^m)$, we can extend the inner product by
\begin{equation}\label{eq:vector_hs}
    \langle f, g \rangle_{s^n} = \sum_j \langle f^j, g^j \rangle_s.
\end{equation}
Then $\Vert f\Vert_s^2 = \sum_{j=1}^m \Vert f^j\Vert^2_s$, and we proceed as above to define $H_s(\Omega, \R^m).$ 
The delta function generalizes to $\vdx,$ where
\begin{equation}\label{three} 
\vdx(f) = f(x),\ i.e.,\ [\vdx(f)]^j = \langle d_x,f^j\rangle_s.
\end{equation}

We also have Sobolev spaces $H_s(M)$ on manifolds $M$, which are defined first 
in local coordinates on each element in a coordinate cover of $M$, and then patched together using a partition of unity. Finally, we can combine these constructions to form $H_s(M, \R^m)$ for functions $f\colon M\to \R^m$.

\subsection{RKHS}
\label{RKHS overview}

The most common Hilbert space is an $L^2$ space, {\it e.g.,} $L^2[0,1],$ the space of 
real-valued functions defined  on $[0,1]$ which are $L^2$  with respect to {\em e.g.,} Lebesgue measure.
One subtlety of $L^2$ spaces is that two functions $f_1, f_2$ define the same element in $L^2[0,1]$ if $f_1 = f_2$ except on 
a set $S$ of measure zero.
As a result, the evaluation maps (often called delta functions) $\delta_x\colon L^2[0,1] \to \R$, $x\in [0,1]$, $\delta_x(f) = f(x)$, are not well-defined; if $x\in S$, then $\delta_x(f_1) \neq \delta_x(f_2)$.
 Even on well-defined functions, evaluation maps need not be continuous: for example, if $f_n(x) = x^n,$
then $\lim_{n\to\infty} \Vert f_n\Vert_{L^2} = 0$, but $|\delta_1(f_n)|=1\not\to 0.$ This is a major issue in \S\ref{formal}.

The simplest definition of a Reproducing Kernel Hilbert Space (RKHS) is a  Hilbert space $(\calH, \langle\cdot,\cdot\rangle_{\calH})$ of real-valued functions defined on a set $X$ such that the evaluation maps $\delta_x\colon\calH\to\R$ are continuous for all $x\in X.$
Our basic example of an RKHS is $H_s(\Omega)$, $s>n/2$, in the previous subsection.

As in (\ref{del}), by the Riesz representation theorem, for each $x\in X$, there exists $d_x\in \calH$ such that
\begin{equation}\label{delta}
f(x) = \delta_x(f) = \langle d_x, f\rangle_{\calH}.
\end{equation}
Note that this key equation fails if $\delta_x$ is not continuous.  We usually write $d_x(y) = K_{\calH}(x,y),$ 
so 
\begin{equation}\label{K}
f(x)  = \langle K_\calH(x,\cdot), f\rangle,
\end{equation}
and call $K_\calH$ the reproducing or Mercer kernel of the RKHS.  
It is elementary to show that $\calH$ is an RKHS iff it has a {\em reproducing kernel}, a function 
$K_\calH\colon X\times X \to \R$ with $K_\calH(x, \cdot)\in \calH$ for all $x\in X$ and with (\ref{K}).

Note that 
\begin{equation}\label{repo}
K_\calH(x,y) = \langle K_\calH(x,\cdot), K_\calH(y, \cdot)\rangle_\calH,
\end{equation}
where we consider $K_\calH\colon X\to \calH.$  More generally, consider a  {\em feature map} $\phi\colon X\to \calH'$ from $X$ to a Hilbert space $\calH'$ 
and the associated {\em kernel function}
$K(x,y) = \langle \phi(x), \phi(y)\rangle_{\calH'}$. 
Then there is an RKHS $\calH_K$ with reproducing kernel $K$.  In fact, 
$\calH_K$ is the Hilbert space completion of ${\rm Span}\{ K(x, \cdot): x\in X\}.$
As a consistency check, if we start with an RKHS $\calH$ with associated $K_\calH$, 
and set
$\phi(x) = K_\calH(x, \cdot)$, we get $K = K_\calH$ and $\calH' = \calH_K = \calH$, so we recover the original RKHS.

As with Sobolev spaces, this theory extends to a Hilbert space $\calH$ of  vector-valued functions 
$f\colon X\to\R^m$.  Let $\pi_i\colon \R^m \to \R$ be projection to the $i^{\rm th}$ coordinate, and set $\calH_ i = \{\pi_i\circ f:f
= (f^1,\ldots f^m)\in \calH\}.$
Assuming 
$\vdx$ in $(\ref{three})$ is continuous, the Hilbert space $\calH_i$ has a reproducing kernel $K_i$, 
and we obtain a reproducing kernel $K\colon X\times X\to \R^m$ by
\begin{equation}\label{Kmulti}
\langle K(x,\cdot), f \rangle_{\mathcal{H}}
= \ (\langle K_1(x,\cdot), f^1\rangle_{\calH_1}, \ldots, \langle K_n(x,\cdot), f^m\rangle_{\calH_n})
= \ (f^1(x), \ldots, f^m(x)) = f(x).
\end{equation}

\noindent {\bf Notation:}  If a kernel is written as as finite sum $K(x,y) = \sum_i a_i(x)\otimes b_i(y)$
for $a_i, b_i\in \calH$, then 
$\langle K(x,\cdot), f(\cdot)\rangle_{\calH}$ is by definition $\sum_i  a_i(x) \langle  b_i,f \rangle_\calH
\in \calH.$  If the sum is infinite, convergence issues must be treated.

\section{Proofs for \S\ref{sec:setup}}
\label{appendix:section 2}
\noindent {\em Proof of Lemma \ref{lem:pullback}.}
(i) Take $X\in T_{\phi(w)} \Phi$ and a curve $\gamma(t)\subset \Phi$ with $\gamma(0) = \phi(w), \dot\gamma(0) = X.$ Then
\begin{equation*} 
d(F|_{\Phi})(X) = \frac{d}{dt}\biggl|_{_{t=0}} \left(F|_{\Phi}\right)(\gamma(t)) = 
\frac{d}{dt}\biggl|_{_{t=0}} \left(F\right)(\gamma(t))
= \langle \nabla F, X\rangle_{\phi(w)} = \langle P(\nabla F), X\rangle_{\phi(w)}.
\end{equation*}
Since $d(F|_\Phi)(X) = \langle \nabla (F|_\Phi), X\rangle$ determines $\nabla (F|_\Phi),$
we conclude that
\begin{equation}
\label{eq:proj_gradF}
P(\nabla F) = \nabla \left(F|_{\Phi}\right).
\end{equation}
The result follows.

(ii) Take a gradient flow line $\gamma(t)$ for $\widetilde F$, so $\dot\gamma(t) = \nabla \widetilde F_{\gamma(t)}.$
Then $\phi\circ \gamma$ is a gradient flow line of $F|_{\Phi}$ iff 
$$d\phi(\dot\gamma)(\gamma(t)) = (\phi\circ\gamma)\dot{}(t) = \nabla (F|_{\Phi})_{(\phi\circ\gamma)(t)}.
$$
Substituting in $\dot\gamma(t) = \nabla \widetilde F_{\gamma(t)},$ and using (\ref{eq:proj_gradF}), we need to show that
\begin{equation}
\label{eq:diff_equal_proj}
d\phi(\nabla\widetilde F_w ) = P(\nabla F_{\phi(w)}).
\end{equation}
Take $\widetilde X\in T_{w}\cW$. Because $\phi$ is an isometry from $(\cW,\widetilde g)$ to $(\Phi, \bar g)$, we have for each $w$
\begin{align*}
 \langle d\phi(\nabla\widetilde F),d\phi(\widetilde X)\rangle_{\bar g}
&= \langle \nabla\widetilde F, \widetilde X\rangle_{\widetilde g} 
= d\widetilde F(\widetilde X) = d(F\circ \phi)(\widetilde X) 
=\ dF\circ (d\phi(\widetilde X))
= \langle \nabla F, d\phi(\widetilde X)\rangle_{\bar g}\\
&= \langle P(\nabla F), d\phi(\widetilde X)\rangle_{\bar g}.
\end{align*}
Since $d\phi(T_w\cW)$ spans $T_{\phi(w)}\Phi$, this proves (\ref{eq:diff_equal_proj}) and $d\phi(\nabla\widetilde F)
= \nabla F$.
\hfill $\Box$
\bigskip

\noindent {\em Proof of Proposition~\ref{prop:proj}.}
In general, we can  compute the projection $P$ in a vector space as the solution to the minimization problem: 
the projection $P\vec v$ of  a vector $\vec v$  onto a subspace $B$ is 
$$P\vec v = {\rm argmin}_{\vec y\in B}\Vert \vec v - \vec y\Vert^2.$$
If $\{b_j\}$ is a basis of $B$, we want
${\rm argmin}_{a^i\in \R} \Vert \vec v - \sum a^i b_i\Vert^2.$
Taking first derivatives of  $\Vert \vec v - \sum a^i b_i\Vert^2$ with respect to $a^i$, we get $$0 = -2\langle v,b_i \rangle + 2\sum_j a^j\langle b_i,b_j\rangle.$$  
For 
$\tg = (\tg_{ij}) = (\langle b_i, b_j\rangle)$, we get $a^j = \tg^{ij}\langle \vec v, b_i\rangle$
and thus
$P\vec v = \tg^{ij}\langle \vec v, b_i\rangle b_j,$ 
where $(\tg^{ij})=(\tg_{ij})^{-1}.$
For $\vec v = \nabla F, b_i = \partial \phi/\partial w^i$, we recover (\ref{projection}).
\hfill $\Box$
\bigskip

\section{Technicalities Ignored in \S\ref{sec:empirical_loss}}
\label{appendix:ignored_tech}

In the formal computation of \S\ref{sec:empirical_loss}, for the sake of clarity in presentation, we ignore several nontrivial technical questions.

Firstly, we ignore the nontrivial question of the norm topologies on $\calM$ and $T_f\calM$. Instead, we formally assume an easy topology induced by the $L^2$ inner product, as this is common in the machine learning literature.  In this topology, the gradient for the MSE loss function does not exist.  We use abstract RKHS theory as a setting where this gradient exists,
with a Sobolev topology for large enough Sobolev parameter as a concrete example.  This highlights the fundamental issue that norm topologies on infinite dimensional vector spaces can be inequivalent, unlike in finite dimensions.

Secondly, the differential-gradient formula (\ref{eq:formal_grad}),
$$d\ell_{E,f}(h) = \langle \nabla \ell_{E,f}, h\rangle_{L^2} =  \int_M \sum_j  (\nabla \ell_{E,f})^j\cdot h^j,$$ 
is again formal, since a linear functional $\ell\colon \calH\to \R$ on a Hilbert space $\calH$ satisfies $\ell(h) = \langle w,h\rangle$ for some $w\in \calH$ iff $\ell$ is continuous.  Since we haven't specified the topology on $\calM$, we can't  discuss the continuity of $d\ell_E.$ As 
in Remark \ref{rem5}, for $\ell$ a discrete loss function like MSE,
$\ell$ is not continuous on $L^2$ but is continuous on high Sobolev spaces.

\section{Proofs for \S\ref{sec:rkhs_gtk}}
\label{appendix:Sobolev}

\noindent {\em Proof of Lemma \ref{lem:emp_loss}.}
In the notation in \S\ref{sec:S_spaces},
we have
\begin{align}
\label{L-grad}  
\langle \nabla \ell_{L,f}, h\rangle_{s^n} &= d \ell_{L,f}(h) 
= \frac{d}{dt}\biggl|_{_{t=0}} L(f_t(x_i), y_i)  
= \sum_{i,j} \left(\frac{\partial L}{\partial z^j}(f(x_i), y_i)\right)\cdot h^j(x_i).
\end{align}
 For $\nabla^E L_z$ the Euclidean gradient  of $L$ with respect to $z$ only, we have 
\begin{align*} 
\sum_j \left(\frac{\partial L}{\partial z^j}(f(x_i), y_i)\right)\cdot h^j(x_i) 
&= \sum_j (\nabla^E L_z(f(x), y_i))^j\cdot \langle K_{\calH}^j(x_i,\cdot), h^j\rangle_s \\
&= \langle  K_{\calH}(x_i,\cdot)(\nabla^E L_z(f(x), y_i)), h\rangle_{s^n}.
\end{align*}
\hfill $\Box$
\medskip

\noindent {\em Proof of Proposition \ref{prop:kernel_gradient}.}
The projection of $K_\calH$ is given by
$$K_f(x,x')= \langle K_\calH(x,\cdot), h_i\rangle h_i(x') = \sum_i
h_i(x) \otimes h_i(x'),$$
which proves (\ref{eq:proj_kernel_h}).

For (\ref{eq:proj_kernel_g}),   write $
b_i = a_i^j\partial/\partial w^j, b_k = a_k^\ell\partial/\partial w^\ell.$ Using the trace map
$a\otimes b \mapsto \langle a, b\rangle_{T_f\Phi}$, we get 
$$\delta_{ik} = \langle b_i, b_k\rangle_{T_f\Phi} = a_i^j a_k^\ell\langle \partial/\partial  w^j, \partial/\partial w^\ell\rangle_{T_f\Phi} =   a_i^j a_i^\ell \tg_{j\ell}.$$
Thus $a_i^j a_i^\ell = \tg^{j\ell},$ and
$K_f(x,x') = \sum_i b_i(x)\otimes b_i(x') = \sum_i a_i^j\partial/\partial w^j\otimes
a_i^\ell\partial/\partial w^\ell = \tg^{j\ell}\partial/\partial w^j\otimes 
\partial/\partial w^\ell.$ \hfill $\Box$

\medskip

\noindent {\em Proof of Corollary~\ref{cor:ng_any_basis}.}

To produce an orthogonal projection and thus a well-defined natural gradient from an arbitrary basis $\{b_i\}$ on $T_f\Phi$, there are two steps.

The first step is making $\{b_i\}$ an orthonormal basis of the RKHS
$T_f\Phi$, i.e.,
\begin{equation}
\label{eq:any_basis_orthonormal}
\langle b_i, b_j \rangle_{K_{GTK}} = \delta^i_j,
\end{equation}
which is guaranteed by 
Theorem~\ref{prop:basis_kernel}.

The second step is to define a normal space $\calN_f$ to $T_f\Phi$ inside $T_f\calM$, so that any 
$v\in T_f\calM$ can be decomposed into tangential and normal components, i.e., 
$$v = v^T + v^\calN, \quad \text{where}\ v^T \in T_f\Phi,\ v^\calN \in \calN_f.$$ 
Since $T_f\calM$ comes with a Riemannian metric $\bar g$ ({\it e.g.}, 
a Sobolev metric), the easiest choice for the normal space is $\calN_f = \{ v\in T_f\calM: \langle v,v'\rangle_{\bar g} = 0,
\ \forall v'\in T_f\Phi\}.$ 
Now we have
a {\it mixed} metric on a tubular neighborhood of $\Phi$ in $\calM$, which is given by (\ref{eq:any_basis_orthonormal}) in directions tangent to $\Phi$ and by $\bar g$ in $\bar g$-normal 
directions. We then extend this metric to all of $\calM$ by a partition of unity.
By Proposition~\ref{prop:kernel_gradient},
the natural/pullback gradient under this mixed metric is given by the following kernel form,
$$\nabla^\Phi \ell_{L,f} := P\nabla \ell_{L,f} 
= \sum_i K_{GTK}(x_i, \cdot) \nabla^E L_z(f(x_i), y_i).$$ 
${}$\hfill$\Box$

\section{Proofs for \S\ref{sec:non_immersion_grad}}
\label{appendix:proofs_non_immersion_grad}

\subsection{Proof of Proposition \ref{prop:non_immersion}}
(\ref{basic}) is equivalent to
\begin{equation}
\label{scaling}
\phi( w^1,\ldots, \lambda w^i,\ldots,\lambda^{-1} w^j,\ldots,  w^\ell)
= \phi( w^1,\ldots,  w^i,\ldots, w^j,\ldots,  w^\ell).
\end{equation}
Note that $\lambda \lambda^{-1} = 1$, so this is the two-layer analogue of $\prod \alpha_i = 1.$

Differentiate (\ref{scaling}) at a fixed $\vw^0$ for $i=1$ by taking $(d/d\lambda)|_{\lambda = 1}$ of both sides.  This yields
\begin{equation}\label{eq:deriv_scaling}
 \sum_{r=1}^{n_1} \frac{\partial \phi}{\partial  w^1_{ r}}\biggl|_{_{\vw^0}} w^{0,1}_{r}
 -\sum_{s=1}^{n_j} \frac{\partial \phi}{\partial  w^j_{ s}}\biggl|_{_{\vw^0}} w^{0,j}_{s}=0\in \R^m.
\end{equation}
(\ref{eq:deriv_scaling}) is equivalent to 
\begin{equation}\label{3} d\phi_{\vw^0}\left(\sum_{r=1}^{n_1}  w^{0,1}_r
\frac{\partial\ }{\partial  w^1_r}\right) -d\phi_{\vw^0}
\left(\sum_{s=1}^{n_j}  w^{0,j}_{s}
\frac{\partial\ }{\partial  w^j_{s}}\right)=0.
\end{equation}
Thus the $\ell-1$ vectors
\begin{equation}\label{4b}\left\{Z^j|_{\vw^0} =\sum_{r=1}^{n_1}  w^{0,1}_{r}\frac{\partial\ }{\partial  w^1_{r}} - \sum_{s=1}^{n_j}  w^{j,0}_{s} \frac{\partial\ }{\partial  w^j_{s}}: j = 2,\ldots,\ell\right\}
\end{equation}
are in ker$(d\phi)_{\vw^0}.$  Since $ w^0_j \neq 0$ for all $j$, these vectors are nonzero.  They are clearly linearly independent, since only the $k$-th vector involves the $k$-th layer.
Thus the span of the $Z^j$ is contained in ker$(d\phi)_{\tho}$.   \hfill $\Box$

\medskip

\noindent {\em Proof of Theorem \ref{thm:slice}.}   
We first show that every element $\vec w = ( w^1,\ldots,  w^\ell)\in \cW$ lies on the orbit through some slice element.  

Let $\vec\alpha=(\alpha_1,\ldots,\alpha_\ell),$ with
\begin{equation*}
\alpha_j = \left(\prod_{i=1}^\ell | w^i|\right)^{1/\ell}  |w^j|^{-1},\quad j=1,\ldots,\ell.
\end{equation*}
Then $\vec\alpha\in G$ and $\vec\alpha\cdot\vec w\in \cS.$

We now show that each orbit of $G$ meets $\cS$ at exactly one point. If 
$\vec\alpha^1\cdot \vec w_1 = \vec\alpha^2\cdot\vec w_2$, then in the obvious notation,
$$\alpha_k^1 \lambda^1 = \alpha_k^1  |w_{1}^k| = \alpha_k^2  |w_{2}^k| = \alpha_k^2 \lambda^2\ 
\text{for all } k,$$ 
which implies 
$$(\lambda^1)^\ell = \prod_k \alpha_k^1 \lambda^1 = \prod_k \alpha_k^2 \lambda^2 = (\lambda^2)^\ell.$$
Thus $|w_{1}^k|= \lambda^1 = \lambda^2 = |w_{2}^k|$ for all $k,$ so  $\alpha^1_k =\alpha^2_k.$  Since $w_{1}^k, w_{2}^k$ are on the same ray, they must be equal. 
Therefore, $\vec w_1 = \vec w_2, \vec\alpha^1 = \vec\alpha^2.$
\hfill $\Box$

\subsection{Details of the basis and the restricted metric on slice $\cS$ }
We take  a basis $\{\vec r := c_{0}^0, c_{k}^ {k_i}: 
k= 1, \ldots, \ell; k_i = 1,\ldots,n_k-1\}$ of the tangent space $T_{\vec  w}\cS$ for the slice $\cS$.
In polar coordinates $(r_k, \psi_k^{k_i}), k_i = 1,\ldots,n_k-1,$ on the $k^{\rm th}$ layer, 
$\vec r :=\sum_{k}  \partial/\partial r_{k}, c_{k}^{k_i} = \partial/\partial \psi_k^{k_i}$, 
at $\vec x \in \R^{n_k}.$ 
The Euclidean metric restricts to a metric on $\cS$ with metric tensor 
$h = h_{(k, k_i), (s,s_j)} = c_{k}^{k_i}\cdot c_{s}^{s_j}$.

On $\R^n = {(w^1,\ldots,w^n)}$, introduce spherical coordinates $(r, \psi^1,\ldots, \psi^{n-1})$ by
\begin{align}\label{x-coords}
w^n &= r\cos(\psi^{n-1}),\\ 
w^k &= r\left(\prod_{s=k}^{n-1} \sin (\psi^s) \right) \cos (\psi^{k-1}),\ k = 2,\ldots, n-1,\nonumber\\
w^1 &= r\prod_{s=1}^{n-1}\sin(\psi^s).\nonumber
\end{align}
Thus $\psi^{n-1}\in [0,\pi]$ is the pitch angle from $(w^1,\ldots,w^n)$ to the $w^n$-axis, $\psi^{n-2}\in [0,\pi]$ is the pitch angle from $(w^1,\ldots,w^{n-1},0)$ to the $w^{n-1}$ axis, etc., down to $\psi^2\in [0, \pi].$
$\psi^1\in [0,2\pi]$ is the clockwise  angle to the $w^2$-axis.  (If the usual polar angle in the plane is $ \theta$, then $ \theta +\psi^2 = \pi/2.$)  
Using 
$$\frac{\partial}{\partial \psi^\ell} = \frac{\partial w^k}{\partial \psi^\ell}\frac{\partial}{\partial w^k},
\frac{\partial}{\partial r} = \frac{\partial w^k}{\partial r}\frac{\partial}{\partial w^k},
$$
we get
\begin{align}
\label{coord_change}
&\partial_r = \frac{\partial}{\partial r} = \left(\prod_{s=1}^{n-1}\sin(\psi^s)\right)
\partial_{w^1}
 + \sum_{\ell=2}^{n-1}\left( \prod_{s=\ell}^{n-1}\sin(\psi^{s})
`  \right)\cos(\psi^{\ell-1})\partial_{w^\ell}
 + \cos(\psi^{n-1})\partial_{w^n}\nonumber\\
&\partial_{\psi^1} = r \left(\prod_{s=2}^{n-1}\sin(\psi^s)\right) \cos(\psi^1) \partial_{w^1}
 - r\left(\prod_{s=1}^{n-1}\sin(\psi^s) \right)\partial_{w^2} \nonumber\\
&\partial_{\psi^\ell} = r\left(\prod_{s=\ell+1}^{n-1}\sin(\psi^s)\right)\cos(\psi^\ell)\left(
\prod_{s=1}^{\ell-1} \sin(\psi^s)\right)\partial_{w^1} \nonumber\\
&\qquad\quad + r \sum_{k=2}^{\ell} \cos(\psi^\ell)\cos(\psi^{k-1}) \left(\prod_{s=k}^{\ell-1}
\sin(\psi^s)\right) \cdot \left(\prod_{s=\ell+1}^{n-1}
\sin(\psi^s)\right)\partial_{w^k} \\
&\qquad\quad -r \left(\prod_{s=\ell}^{n-1}\sin(\psi^s)\right) \partial_{w^{\ell+1}},
\ \ \ell = 2,\ldots, n-2,\nonumber\\
&\partial_{\psi^{n-1} } = r \cos(\psi^{n-1})\left(\prod_{s=1}^{n-2}\sin(\psi^s)\right)\partial_{w^1}
  + r\sum_{k=2}^{n-2} \cos(\psi^{n-1})\cos(\psi^{k-1}) \cdot\left(\prod_{s=k}^{n-2}
\sin(\psi^s)\right)\partial_{w^k}\nonumber\\
&\qquad\qquad + r\cos(\psi^{n-1})\cos(\psi^{n-2}) \partial_{w^{n-1}}
             - r\sin(\psi^{n-1})\partial_{w^n}.\nonumber
\end{align}
From (\ref{coord_change}), it is straightforward to check that 
\begin{align}\label{sph_relations}\langle \partial_r, \partial_{\psi^i}\rangle &= \langle\partial_{\psi^i}, \partial_{\psi^k}\rangle = 0, {\rm for}\ i\neq k, \langle \partial_r, \partial_r\rangle = 1,\nonumber\\
\langle \partial_{\psi^i}, \partial_{\psi^i}\rangle &= r^2 \prod_{s= i+1}^{n-1} \sin^2(\psi^s).
\end{align}
Then the Euclidean metric/dot product restricts to a metric on $S$ with metric tensor 
\begin{equation}
\label{hmatrix}
h = h_{(k, k_i), (s,s_j)} = c_{k}^{k_i}\cdot c_{s}^{s_j}
= \left\{ \begin{array}{ll} 
0,& k\neq s,\\ 
0,& k=s\neq 0, k_i\neq s_j,\\
|w_k|^2 \prod_{t=k_i+1}^{n_k-1} \sin^2(\psi_k^t), & k=s\neq 0, k_i = s_j,\\
\ell,& k=s=0, i=j=0.
\end{array}\right.   
\end{equation}
Here the metric is computed at $\vec w= (w_1,\ldots,w_\ell)\in \cS$ with  $w_k= (r_k, \psi_k^i)$ the spherical coordinates of $\vec w$
in the $k^{\rm th}$ layer.  
Note that $| w_k|^2 = r_k^2$ independent of $k$.  $h$ is diagonal, so $h^{-1}$  is easy to compute.

For computational purposes, we need the right hand side of (\ref{coord_change}) in Euclidean coordinates.  
Set
$$a_\ell = \left(\sum_{i=1}^\ell (w^i)^2\right)^{1/2},$$
so in particular $a_n = r.$ We have
\begin{align}\label{sincos} \cos(\psi^{n-1}) &= \frac{w^{n}}{a_n},\ \ \sin(\psi^{n-1}) 
=\frac{\left(a_n^2-(w^n)^2\right)^{1/2}}{a_n}\nonumber\\
\cos(\psi^\ell) &= \frac{w^{\ell+1}}{a_n \prod_{s= \ell+1}^{n-1} \sin(\psi^s)},
\ \ \ell = 1,\ldots, n-2\\ 
\sin(\psi^\ell) 
&= (1-\cos^2(\psi^\ell))^{1/2}
 = \frac{\left(a_n^2 \prod_{s= \ell+1}^{n-1} \sin^2(\psi^s) -\left(w^{\ell+1}\right)^2\right)^{1/2}}{a_n \prod_{s= \ell+1}^{n-1} \sin(\psi^s)}\nonumber
\end{align}
Using (\ref{sincos}), a downward induction from $\ell = n-1$ to $1$ gives
\begin{equation}\label{psiell}   \cos(\psi^\ell) = \frac {w^{\ell+1}}{a_{\ell+1}}, 
\ \ \sin(\psi^\ell) = \frac{a_\ell}{a_{\ell+1}},\ \ell = 1,\ldots, n-1.
\end{equation}
Plugging (\ref{psiell} into (\ref{coord_change}) gives expressions for $\partial_r, \partial_{\psi^i}$  in  Euclidean coordinates. In particular, in (\ref{hmatrix}) we have the simplification
\begin{equation}\label{simple}
r^2\sin^2(\psi_k^{n_k-1})\cdot\ldots\cdot \sin^2(\psi_k^{k_i+1}) = a_{k_i+1}^2.
\end{equation}
We can also rewrite (\ref{coord_change}) as
\begin{align}\label{coord_change_two}
\partial_r &= \frac{1}{\sqrt{(w^1)^2+\ldots (w^n)^2}} \sum_{k=1}^n w^k\partial_{w^k},\nonumber\\
\partial_{\psi^1} &= w^2\partial_{w^1} - w^1\partial_{w^2},\\
\partial_{\psi^\ell} &= 
\sum_{k=1}^\ell \frac{w^k w^{\ell+1}}{\sqrt{(w^1)^2+\ldots+(w^{\ell})^2}}\partial_{w^k}
 -\sqrt{(w^1)^2+\ldots+(w^{\ell})^2} \partial_{w^{\ell+1}},\ \  \ell = 2,\ldots,n-1.\nonumber
\end{align}

\subsection{Proof of Proposition \ref{prop:grad_S}}

For $\vec w$ in the slice $\cS$, write $\vec w = (w^1,\ldots,w^\ell)$ with $w^k = (w^k_{1},\ldots, w^k_{n_k})$.  
Since the gradient is independent of coordinates,
on the $k^{\rm th}$ layer, $k = 1,\ldots,\ell-1$, for any $\wf\colon \cW\to\R$ and any $\vec w = (w^1,\ldots,w^k)\in \cW$, we have 
\begin{align*}
\nabla^{Euc,k}\wf_{w^k} 
&= \sum_{k_i=1}^{n_k} \frac{\partial \wf}{\partial w^k_{k_i}}\frac{\partial}{\partial w^k_{k_i}} \\
&= h_k^{0,0}
\frac{\partial \wf}{\partial r_k}\frac{\partial }{\partial r_k} + 
h^{0,k_i}
\frac{\partial \wf}{\partial r_k}\frac{\partial }{\partial \psi_k^{k_i}}
 + h^{k_i,0}
\frac{\partial \wf}{\partial \psi_k^{k_i}}\frac{\partial }{\partial r_k} +
h^{k_i k_j} \frac{\partial \wf}{\partial \psi_k^{k_i}}\frac{\partial }{\partial \psi_k^{k_j}}\\
&= \frac{\partial \wf}{\partial r_k}\frac{\partial }{\partial r_k}
 +\sum_{k_i=1}^{n_k-1}
\left(\sum_{b=1}^{k_i+1} (w^k_{b})^2 \right)^{-1} \frac{\partial \wf}{\partial \psi_k^{k_i}}\frac{\partial }{\partial \psi_k^{k_i}}. 
\end{align*}
The right hand side is evaluated at $w_k.$ Here $h_k^{0,0} := \langle \partial_{r_k}, \partial_{r_k}\rangle^{-1} = 1$, we abbreviate $h^{(k,k_i), (k,k_j)}$ by $h^{k_i,k_j}$ when the context is clear, and we have used (\ref{simple}).  
For $\vec w$ in the slice $\cS$, write $\vec w = (w^1,\ldots,w^\ell)$ with $w^k = (w^k_{1},\ldots, w^k_{n_k})$.  Using a mixture of ordinary summation and summation convention, we have
\begin{align}\label{two_gradients}
\nabla^{\cS} \wf_{\vec w} &=  h^{0,0}\frac{\partial \wf}{\partial \vec r}\frac{\partial }{\partial \vec r} + 
\sum_{k=1}^\ell h^{k_i k_j} \frac{\partial \wf}{\partial \psi_k^{k_i}}\frac{\partial }{\partial \psi_k^{k_j}}
\nonumber\\
&= \ell^{-1} \left(\sum_{k=1}^\ell \frac{\partial \wf}{\partial r_k}\right)
\left(\sum_{s=1}^\ell\frac{\partial }{\partial r_s}\right) 
 +\sum_{k=1}^\ell \sum_{k_i=1}^{n_k-1}\left(\sum_{b=1}^{k_i+1} (w_b^k)^2\right)^{-1}
  \frac{\partial \wf}{\partial \psi_k^{k_i}}\frac{\partial }{\partial \psi_k^{k_i}}\nonumber\\
&= \nabla^{Euc,\cW}\wf -\sum_{k=1}^\ell\frac{\partial \wf}{\partial r_k}\frac{\partial }{\partial r_k}
+\ell^{-1}\sum_{k=1}^\ell \sum_{s=1}^\ell\frac{\partial \wf}{\partial r_k}
\frac{\partial }{\partial r_s} \\
&= 
\sum_{k=1}^\ell\sum_{k_j=1}^{n_k}\frac{\partial \wf}{\partial w^k_{k_j}}\frac{\partial}{\partial w^k_{k_j}}
- \sum_{k=1}^\ell
\sum_{k_i, k_j=1}^{n_k} \frac{w^k_{k_i}w^k_{k_j}}{|w^k|^2} \frac{\partial \wf}{\partial w^k_{k_i}}\frac{\partial} {\partial w^k_{k_j}} \nonumber\\
&\quad + \ell^{-1}\sum_{k=1}^\ell \sum_{s=1}^\ell
\sum_{k_i=1}^{n_k} \sum_{s_j=1}^{n_s}
\frac{w^k_{k_i} w^s_{s_j}}{|w^k| |w^s|}\frac{\partial \wf}{\partial w^k_{k_i}}\frac{\partial} {\partial w^s_{s_j}}.
\nonumber 
\end{align}

In the last two terms, $|w^k| = |w^j|$, so it suffices to compute $|w^1|.$

Let $\binom{k_j}{k}$ denote the component of $c_k^{k_j}=\frac{\partial}{\partial w^k_{k_j}}$ of $\nabla^{\cS}\wf_{\vec w}$.  Label the first three terms on the last line of (\ref{two_gradients}) by I, II, III. Then
\begin{enumerate}
\item In I, $\binom{k_j}{k}=  \frac{\partial \wf}{\partial w^k_{k_j}}$. 

\item In II, $$\binom{k_j}{k}= \underbrace{-\frac{1}{|w^1|^2}}_{{\rm independent \ of}\ k, k_j}\underbrace{\left(\sum_{k_i=1}^{n'_k}w^k_{k_i}\frac{\partial \wf}{\partial w^k_{k_i}}\right)}_{{\rm independent\ of}\ k_j} w^k_{k_j}. $$ 

\item In III, by switching the $s$ and $k$ indices,
\begin{align*}\binom{k_j}{k} &= \underbrace{\frac{\ell^{-1}}{|w^1|^2}
\left(\sum_{s=1}^\ell \sum_{s_i=1}^{n'_s}w^s_{s_i}\frac{\partial \wf}{\partial w^s_{s_i}}\right)}_{{\rm independent\ of} \ k, k_j} w^k_{k_j}\hskip 1.0in \Box\\
 \end{align*}
\end{enumerate}

\section{Sobolev Natural Gradient}
\label{appendix:sobolev_ng}

For $\calH = H_s(\R^n,\R^m)$ a Sobolev space (detailed in Appendix \ref{sec:S_spaces}),
as shown in (\ref{eq:rigorous_gradient}), in order to 
compute the Sobolev Natural Gradient on $\cW$, 
we have to efficiently compute the inverse of the Sobolev metric tensor $\tg_{\calH, ij}$, which 
is analogous to the Fisher information matrix in Amari's natural gradient. 
In \S\ref{sec:exp_rkhs_approx},
we  discuss an RKHS-based approximation of $\tg_{\calH, ij}$. In \S\ref{sec:exp_kfac_approx}, we provide a practical computational method of $\tg_{\calH, ij}$ based on the Kronecker-factored approximation~\citep{Martens2015,Grosse2016}.
In \S\ref{sec:exp_results}, we report experimental results on supervised learning benchmarks. 

\subsection{An approximate computation of $\tg_{\calH,ij}$}
\label{sec:exp_rkhs_approx}
In general, it is not possible to exactly compute $\tg_{\calH,ij}=\left\langle \frac{\partial\phi}{\partial w^i}, \frac{\partial\phi}{\partial w^j} \right\rangle_{\calH}$ for $\calH=H_s$. We therefore resort to approximation.
Following (\ref{eq:rkhs_grad}), if we do a gradient descent~\citep{Dieuleveut2016} in the RKHS $\calH$ with 
a sequence of data points $(x_1, y_1), (x_2, y_2), \cdots, (x_T, y_T)$, under the initial condition $f_0(x)=0$, we get
\begin{equation}\label{eqn:sgd}
    f_T(x) = -\sum_{t=1}^T \eta_t K_{\calH}(x_t, x) \nabla L_z(f(x_t), y_t).
\end{equation}
Therefore, the 
gradient descent iterate $f_T$ lies in the subspace $K_{\mathcal X}$ of $\calH$  spanned by $\{K_{\calH}(x_t, \cdot)\}^T_{t=1}$ for a given dataset $\mathcal{X}=\{x_t\}^T_{t=1}$. 
From the proof of Proposition~\ref{prop:proj} (in Appendix~\ref{appendix:section 2}), for any $f\in\calH$, the projection is
$P_{K_{\mathcal{X}}}f=K^{-1}_{ij} \left\langle f, K(x_i, \cdot) \right\rangle_{\calH} K(x_j, \cdot)$,
where $K_{ij}=\left\langle K(x_i, \cdot), K(x_j, \cdot) \right\rangle_{\calH}$. 
We then approximate $\tg_{\calH,ij}$ as follows,
\begin{align}
\label{eqn:g_ij_approx}
\tg_{\calH,ij} &= \left\langle \frac{\partial\phi}{\partial w^i}, \frac{\partial\phi}{\partial w^j} \right\rangle_{\calH} 
\approx \left\langle P_{K_{\calX}}\frac{\partial\phi}{\partial w^i}, P_{K_{\calX}}\frac{\partial\phi}{\partial w^j} \right\rangle_{\calH} \nonumber\\
&= \left\langle 
    K^{-1}_{ab} \left\langle \frac{\partial\phi}{\partial w^i}, K(x_a, \cdot) \right\rangle_{\calH} K(x_b, \cdot),
    K^{-1}_{cd} \left\langle \frac{\partial\phi}{\partial w^j}, K(x_c, \cdot) \right\rangle_{\calH} K(x_d, \cdot)
    \right\rangle_{\calH} \nonumber\\
&= \left\langle \frac{\partial\phi}{\partial w^i}, K(x_a, \cdot) \right\rangle_{\calH}
    K^{-1}_{ac}
    \left\langle \frac{\partial\phi}{\partial w^j}, K(x_c, \cdot) \right\rangle_{\calH}
    \nonumber\\
&= \frac{\partial\phi}{\partial w^i}(x_a)
    K^{-1}_{ac}
    \frac{\partial\phi}{\partial w^j}(x_c).
\end{align}
Note that by setting $K_{ij}$ to the identity matrix, (\ref{eqn:g_ij_approx}) becomess the Gauss-Newton approximation of the natural gradient metric~\citep{Zhang2019three}. For a practical implementation of (\ref{eqn:g_ij_approx}) used in the next subsection, we only need to invert a $K_{ab}$ matrix of size $B\times B$, where $B$ is the mini-batch size of supervised training, which is quite manageable in practice.

Now all we need is to compute $K_{ij}=\left\langle K(x_i, \cdot), K(x_j, \cdot) \right\rangle_{\calH}$ for all 
$x_i,x_j\in\calX.$ When $\calH$ is the Sobolev space $H_s$, 
\citet{R2023} proves
the following Lemma that computes
$K_{ij}=d_{x_i}(x_j)$, 
\begin{lemma}\label{lem:d_ij}  For $s = {\rm dim}(M) + 3,$
    \begin{equation}\label{eq:basis_kernel}
        d_{x_i}(x) = C_n e^{-\|x-x_i\|}(1 + \|x-x_i\|), 
    \end{equation}
    where $C_n$ is some constant only depends on $n$.
\end{lemma}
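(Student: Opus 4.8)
The plan is to pass to the Fourier side on $\R^n$, identify $d_{x_i}$ with a Bessel potential, and then collapse the resulting modified Bessel function to an elementary expression using the fact that $s=\dim(M)+3$ makes its order a half-integer. Throughout we take $M=\R^n$, so $\dim(M)=n$, and we compute with respect to the inner product $\langle X,Y\rangle'_s = \int_{\R^n}(1+\Delta)^{s/2}X\cdot Y$ which is equivalent to the $H_s$ norm by the basic elliptic estimate \citep{Gilkey}; since the reproducing kernel depends on the choice of equivalent norm, we fix this one. By Parseval, $\langle X,Y\rangle'_s = c_n\int_{\R^n}(1+|\xi|^2)^{s/2}\widehat X(\xi)\overline{\widehat Y(\xi)}\,d\xi$, i.e.\ it is a weighted $L^2$ inner product in frequency, and this is the form in which I would carry out the computation.

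First I would determine $\widehat{d_{x_i}}$. The reproducing property $\langle d_{x_i},f\rangle'_s = f(x_i)$ together with the Fourier inversion formula $f(x_i)=c_n'\int_{\R^n}\widehat f(\xi)e^{i x_i\cdot\xi}\,d\xi$, valid for all $f\in H_s$, forces $\widehat{d_{x_i}}(\xi) = c\, e^{-i x_i\cdot\xi}(1+|\xi|^2)^{-s/2}$ for an explicit dimensional constant $c$. Note that $d_{x_i}\in H_s$ precisely because $\|d_{x_i}\|_s'^2 = c^2\int_{\R^n}(1+|\xi|^2)^{-s/2}\,d\xi < \infty$ when $s>n$, which holds for $s=n+3$; this is the quantitative version of the earlier statement that $\delta$-functions lie in $H_s^*$ for $s$ large.

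Next I would invert the Fourier transform and reduce to one dimension. We get $d_{x_i}(x) = c\int_{\R^n}(1+|\xi|^2)^{-s/2}e^{i(x-x_i)\cdot\xi}\,d\xi$, a radial function of $r:=\|x-x_i\|$ equal to a constant multiple of the Bessel potential of order $s$. Using the subordination identity $(1+|\xi|^2)^{-s/2}=\Gamma(s/2)^{-1}\int_0^\infty t^{s/2-1}e^{-t}e^{-t|\xi|^2}\,dt$, Fubini, the Gaussian integral $\int_{\R^n}e^{-t|\xi|^2}e^{iz\cdot\xi}\,d\xi=(\pi/t)^{n/2}e^{-|z|^2/(4t)}$, and the standard representation $\int_0^\infty t^{\nu-1}e^{-t-a^2/(4t)}\,dt = 2(a/2)^\nu K_\nu(a)$, one obtains $d_{x_i}(x)=C_n\, r^{\nu}K_{\nu}(r)$ with $\nu=(s-n)/2$. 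This is exactly the Sobolev/Mat\'ern kernel computation of \citep{Wendland2004}, which one may also simply cite. For $s=n+3$ the order is $\nu=3/2$.

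Finally I would specialize using the elementary half-integer formula $K_{3/2}(r)=\sqrt{\pi/(2r)}\,e^{-r}(1+1/r)$, so that $r^{3/2}K_{3/2}(r)=\sqrt{\pi/2}\,e^{-r}(1+r)$, giving $d_{x_i}(x)=C_n e^{-\|x-x_i\|}(1+\|x-x_i\|)$ after absorbing $\sqrt{\pi/2}$ into $C_n$. The main obstacle is the radial Fourier integral in the third step --- reducing the $n$-dimensional integral of $(1+|\xi|^2)^{-s/2}$ to the one-dimensional Bessel representation --- but this is classical and is precisely the kernel computation already referenced in the paper; the only genuinely new observation is that $s=\dim(M)+3$ is the smallest integer choice (compatible with the RKHS condition $s>\dim(M)/2$) for which $\nu=(s-n)/2$ is a half-integer, so that $K_\nu$ degenerates to an exponential times a linear polynomial.
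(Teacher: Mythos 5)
Your proposal is correct, and it reaches \eqref{eq:basis_kernel} by a genuinely different route than the paper. You identify $\widehat{d_{x_i}}(\xi)=c\,e^{-ix_i\cdot\xi}(1+|\xi|^2)^{-s/2}$ exactly as the paper does, but then you evaluate the inverse Fourier integral in one stroke via the subordination identity, the Gaussian integral, and the standard integral representation of $K_\nu$, landing on the Mat\'ern form $C_n r^{\nu}K_{\nu}(r)$ with $\nu=(s-n)/2$, and finally collapsing $K_{3/2}$ by the half-integer formula. The paper instead gives a self-contained elementary computation: it rotates so that $x-x_i$ is along $e_n$, uses the substitution $\xi_i\mapsto\xi_i(1+\xi_n^2)^{-1/2}$ to factor the $n$-dimensional integral into a one-dimensional Fourier integral (evaluated by contour integration and residues in Lemma \ref{lemma:one}, with $A=(s-n+1)/2$) times a radial integral $\int_0^\infty(1+r^2)^{-s/2}r^{n-2}\,dr$ (evaluated by iterated integration by parts, with separate parity cases for $n$ odd/even) times $\vol(S^{n-2})$, and only then specializes $s=n+3$. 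Your route is shorter, avoids the parity case split, and is essentially the \citep[Theorem 6.13]{Wendland2004} argument that the paper cites but does not follow; the paper's route buys a closed-form answer with explicit constants for every admissible $s$ (general $A$), the value at $x=x_i$, and independence from Bessel-function facts (its Remark after Lemma \ref{lemma:one} notes the $K_\nu$ identity only as an alternative for the one-dimensional step). One small correction to your closing aside: $s=n+3$ is not the smallest integer making $\nu$ a half-integer --- $s=n+1$ already gives $\nu=1/2$ and a pure exponential kernel; $s=n+3$ is rather the smallest choice meeting the paper's convergence requirement $s\geq n+3$ in its derivation and the one producing the exponential-times-linear form. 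This does not affect the validity of your proof.
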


Even with (\ref{eqn:g_ij_approx}), exact computation of the pullback metric for neural networks is in general extremely hard.
To efficiently approximate (\ref{eqn:g_ij_approx}) in practice, 
in the next subsection,
we adapt the Kronecker-factored approximation techniques~\citep{Martens2015,Grosse2016}.

\subsection{Kronecker-factored approximation of $\tg_{\calH}$} 
\label{sec:exp_kfac_approx}

Kronecker-factored Approximation Curvature (K-FAC) has been successfully used to approximate the Fisher information matrix for natural gradient/Newton methods. We now use fully-connected layers as an example to show how to adapt K-FAC~\citep{Martens2015} to approximate $\tg_{\calH}$ in (\ref{eqn:g_ij_approx}). 
Approximation techniques for convolutional layers can be similarly adapted from~\citep{Grosse2016}.
We omit standard K-FAC derivations and only focus on critical steps that are adapted for our approximation purposes. For full details of K-FAC, see~\citep{Martens2015,Grosse2016}. 

As with the K-FAC 
approximation to the Fisher matrix, we first assume that entries of $\tg_{\calH}$ corresponding to different network layers are zero, which makes $\tg_{\calH}$ a block diagonal matrix, with each block corresponding to one layer of the network.

In the notation of~\citep{Martens2015}, the $\ell$-th fully-connected layer is defined by 
$$\vec{s}_{\ell} = \bar{W}_{\ell} \bar{\vec{a}}_{\ell-1}\quad 
\bar{\vec{a}}_{\ell}=\psi_{\ell}(\vec{s}_{\ell}),$$
where $\bar{W}_{\ell}=(W_{\ell}\ \vec{b}_{\ell})$ denotes the matrix of layer bias and weights, 
$\bar{\vec{a}}_{\ell}=(\vec{a}^T_{\ell}\ 1)^T$ denotes the activations with an appended homogeneous dimension, and $\psi_{\ell}$ denotes the nonlinear activation function.

For $\tg^{(\ell)}_{\calH}$  the block of $\tg_{\calH}$ corresponding to the $\ell$-th layer, the argument of~\citep{Martens2015} applied to (\ref{eqn:g_ij_approx}) gives
\begin{equation}\label{eq:kfac_block}
    \tg^{(\ell)}_{\calH}
    = \mathbb{E}_{K}\left[\bar{\vec{a}}_{\ell-1}\bar{\vec{a}}^T_{\ell-1} 
                         \otimes \mathcal{D}\vec{s}_{\ell}\mathcal{D}\vec{s}^T_{\ell} 
                    \right],
\end{equation}
where $\mathcal{D}\vec{s}_{\ell}=\frac{\partial\phi}{\partial\vec{s}_{\ell}}$, $\otimes$ denotes the Kronecker product, and $\mathbb{E}_K$ is defined by
$$\mathbb{E}_K\left[X\otimes Y\right]=X(x_i)K^{-1}_{ij}Y(x_j).$$
Just as K-FAC pushes the expectation of 
$\mathbb{E}\left[\bar{\vec{a}}_{\ell-1}\bar{\vec{a}}^T_{\ell-1} 
                 \otimes \mathcal{D}\vec{s}_{\ell}\mathcal{D}\vec{s}^T_{\ell} 
            \right]$
inwards by assuming the independence of $\bar{\vec{a}}_{\ell-1}$ and $\vec{s}_{\ell}$, 
we 
apply the same trick to (\ref{eq:kfac_block}) 
by assuming the following $K^{-1}$-independence between $\bar{\vec{a}}_{\ell-1}$ and $\vec{s}_{\ell}$:
\begin{equation}\label{eq:kfac_indep}
    \mathbb{E}_{K}\left[\bar{\vec{a}}_{\ell-1}\bar{\vec{a}}^T_{\ell-1} 
                         \otimes \mathcal{D}\vec{s}_{\ell}\mathcal{D}\vec{s}^T_{\ell} 
                        \right]
    = \mathbb{E}_{K}\left[\bar{\vec{a}}_{\ell-1}\bar{\vec{a}}^T_{\ell-1}\right]
        \otimes \mathbb{E}_K \left[\mathcal{D}\vec{s}_{\ell}\mathcal{D}\vec{s}^T_{\ell}\right].
\end{equation}
The rest of the computation 
follows the standard K-FAC for natural gradient. Let $\vec{A}_{\ell-1}$ and $\vec{S}_{\ell}$ be the 
Kronecker factors
\begin{equation}
\label{eq:factor_A_update}
\vec{A}_{\ell-1} 
    = \mathbb{E}_{K}\left[\bar{\vec{a}}_{\ell-1}\bar{\vec{a}}^T_{\ell-1}\right]
    = \bar{\vec{a}}_{\ell-1}(x_i)K^{-1}_{ij}\bar{\vec{a}}^T_{\ell-1}(x_j),
\end{equation}
\begin{equation}
\label{eq:factor_S_update}
\vec{S}_{\ell}
    = \mathbb{E}_K \left[\mathcal{D}\vec{s}_{\ell}\mathcal{D}\vec{s}^T_{\ell}\right]
    =\mathcal{D}\vec{s}(x_i)_{\ell}K^{-1}_{ij}\mathcal{D}\vec{s}^T_{\ell}(x_j).
\end{equation}
Then our natural gradient for the $\ell$-th layer can be computed efficiently by solving the 
linear system,
\begin{equation*}
    \left(\tg^{(\ell)}_{\calH}\right)^{-1} \text{vec}(V_{\ell})
    = \left( \vec{A}_{\ell-1} \otimes \vec{S}_{\ell} \right)^{-1} 
        \text{vec}(V_{\ell}) 
    = \left( \vec{A}_{\ell-1}^{-1} \otimes \vec{S}_{\ell}^{-1} \right) 
        \text{vec}(V_{\ell}) 
    = \text{vec}(\vec{S}_{\ell}^{-1}V_{\ell}\vec{A}_{\ell-1}^{-1}),
\end{equation*}
where $\text{vec}(V_{\ell})$ denotes the vector form of the Euclidean gradients of loss with respect to the parameters of the $\ell$-th layer. All Kronecker factors $\vec{A}_{\ell}$ and $\vec{S}_{\ell}$ are estimated by moving averages over training batches.

\subsection{Summary of Sobolev Natural Gradient Algorithm}
\label{appendix:sob_ng_alg}

Lemma~\S\ref{lem:emp_loss} provides the following basic formula for our proposed Sobolev Natural Gradient,

 \begin{equation}
 \label{eqn:rigorous_gradient_appendix} 
 P\nabla\ell_{L,f}  = \tg_{\calH}^{k\ell}\sum_i \left(\nabla^E L_z(f(x_i), y_i) \cdot_E\frac{\partial\phi}{\partial w^k}\biggl|_{_{x_i}}\right)
\frac{\partial\phi}{\partial w^\ell},
\end{equation}
where $L=L(z, y)\colon \R^n\times\R^n\to\R$ is some loss function and $\nabla^E L_z$ denotes the standard Euclidean gradient of $L$ w.r.t. its first input $z$. Note that summands on the RHS of (\ref{eqn:rigorous_gradient_appendix}) are just standard Euclidean gradient of $L$ w.r.t. model parameters $w$'s evaluated on training pairs $(x_i, y_i)$, which can be obtained by standard back-propagation. Then the only extra computation needed is the term $\tg_{\calH}^{k\ell}$, which is the inverse of the Sobolev metric tensor $\tg_{\calH,ij}$. Applying this extra inverse of metric tensor is the main computational difference of all natural gradient methods vs.~standard Euclidean gradients. For instance, Amari's original natural gradient applies  the inverse of the Fisher information matrix.

The approximate computation technique for $\tg_{\calH,ij}$ is addressed in~\S\ref{sec:exp_rkhs_approx}, given by the following formula,
\begin{equation}
\label{eqn:g_ij_approx_appendix}
   \tg_{\calH,ij} = \frac{\partial\phi}{\partial w^i}(x_a)K^{-1}_{ac}
      \frac{\partial\phi}{\partial w^j}(x_c),
\end{equation}
where $K_{ac}=d_{x_a}(x_c)$ can be computed by Lemma~\ref{lem:d_ij} as following,
\begin{equation}
\label{eq:sobolev_kernel_appendix}
    K_{ac} = C_n e^{-\|x_a-x_c\|}(1 + \|x_a-x_c\|).
\end{equation}
As discussed in~\S\ref{sec:exp_rkhs_approx} and detailed in~\S\ref{sec:exp_kfac_approx}, in practice, (\ref{eq:sobolev_kernel_appendix}) is the only extra computation step of our approach compared to the baseline. In fact, if  $K_{ac}$ is set to the identity matrix, (\ref{eqn:g_ij_approx_appendix}) becomes exactly the Gauss-Newton approximation of Amari's natural gradient metric~\citep{Zhang2019three}, which is the baseline of all our experiments. This is another reason that we use the PyTorch codebase~\citep{KFAC-Pytorch} of~\citep{Zhang2019three} to implement our algorithm and follow the experimental setup of~\citep{Zhang2019three} in our experiments.

Following the notations of~\S\ref{sec:exp_kfac_approx}, in Algorithm~\ref{alg:kfac} below, we first copy (with slight tweaking for readability) the Algorithm 1 of~\citep{Zhang2019three}, which gives the common algorithmic steps of our approach and the baseline, where we highlight in red the key step that our approach differs underneath from the baseline. We then zoom in on the highlighted step and provide computational details of it in Algorithm~\ref{alg:update}.

\begin{algorithm}[h]
    \caption{K-FAC with weight decay~\citep{Zhang2019three}}\label{alg:kfac}
    {\bf Require}: $\eta:$ stepsize \\
    {\bf Require}: $\beta:$ weight decay \\
    {\bf Require}: stats and inverse update intervals $T_{stats}$ and $T_{inv}$\\
    {\bf Init}: $\{\bar{W}_{\ell}\}^L_{\ell=1},\ \{S_{\ell}\}^L_{\ell=1},\  \{A_{\ell-1}\}^{L}_{\ell=1},\ k=0$
    \begin{algorithmic}
    \While {stopping criterion not met}
        \State $k\leftarrow k+1$
        \If{$k\mod 0$ (mod $T_{stats}$)}
            \State {\color{red} Update the factors $\{S_{\ell}\}^L_{\ell=1},\       \{A_{\ell-1}\}^{L}_{\ell=1}$ with moving average }
        \EndIf
        \If{$k\mod 0$ (mod $T_{inv}$)}
            \State Calculate the inverses $\{[S_{\ell}]^{-1}\}^L_{\ell=1},\       \{[A_{\ell-1}]^{-1}\}^{L}_{\ell=1}$
        \EndIf
        \State $V_{\ell}=\nabla_{\bar{W}_{\ell}}L$
        \State $\bar{W}_{\ell}\leftarrow \bar{W}_{\ell} - 
        (\eta [A_{\ell-1}]^{-1}V_{\ell} [S_{\ell}]^{-1} + \beta\cdot\bar{W}_{\ell})$
    \EndWhile
    \end{algorithmic}
\end{algorithm}

\begin{algorithm}[h]
    \caption{\color{red} Update of factors $\{S_{\ell}\}^L_{\ell=1}$ and       $\{A_{\ell-1}\}^{L}_{\ell=1}$}\label{alg:update}
    {\bf Compute}: $K_{ij}$ by (\ref{eq:sobolev_kernel_appendix}) for all samples of the current training batch \\
    {\bf Compute}: the inverse $K^{-1}_{ij}$ \\
    {\bf Update}: $A_{\ell-1}$ by (\ref{eq:factor_A_update}) \\
    {\bf Update}: $S_{\ell}$ by (\ref{eq:factor_S_update}) 
\end{algorithm}

\subsection{Experimental results}
\label{sec:exp_results}
Given that comparisons between natural gradient descent and Euclidean gradient descent have been conducted extensively in the literature~\citep{Zhang2019three,Zhang2019algorithmic} and that our proposed approach ends up being a new variant of natural gradient, our numerical experiments focus on comparing our Sobolev natural gradient (ours) with the most widely-used natural gradient (baseline) originally proposed by~\citep{Amari1998natural} in the supervised learning setting.

Since we borrow the K-FAC approximation techniques in designing an efficient computational method for our Sobolev natural gradient, we follow the settings of~\citep{Zhang2019three}, which applies the K-FAC approximation to Amari's natural gradient, to test both gradient methods on the VGG16 neural network on the CIFAR-10 and the CIFAR-100 image classification benchmarks. Our implementations and testbed are based on the PyTorch K-FAC codebase~\citep{KFAC-Pytorch} provided by one of the co-authors of~\citep{Zhang2019three}.
We did a grid search for the optimal combination of the hyper-parameters for the baseline: learning rate, weight decay factor, and the damping factor. We end up using learning rate $0.01$, weight decay $0.003$, and damping $0.03$ for the baseline through out our experiments reported in Figure~\ref{fig:cifar10} and Figure~\ref{fig:cifar100}. For learning rate scheduling of the baseline, we follow the suggested default scheduling of~\citep{KFAC-Pytorch} to decrease the learning rate to $1/10$ after every $40\%$ of the total training epochs.

For our method, we use the same learning rate $0.01$, weight decay $0.003$, and damping $0.03$ as the baseline throughout our experiments. We only tune two extra hyper-parameters for our method,
\begin{itemize}
    \item For the learning rate scheduling, we decrease our learning rate to $1/5$ after the first $40\%$ of total epochs, and decrease again to $1/5$ after another $20\%$ of total epochs.
    \item From (\ref{eq:basis_kernel}), the Sobolev kernel $K_{ij}(d_{x_i}(x_j))$ contains an exponential function with the exponent $-\|x_i-x_j\|$, as a result, this kernel reduces to identity matrix if the set of points $\{x_i\}$ are not close to each other. Given that input data points are by default normalized when loaded in from these classification benchmarks, we just introduce for our method a scaling factor to further down scale all input data. We fix this scaling factor to be $20$ throughout our experiments.
\end{itemize}
We also run the baseline under different combinations of these tuned extra hyper-parameters, as shown in Figure~\ref{fig:baseline_cifar10} and Figure~\ref{fig:baseline_cifar100}, they cannot bring any benefit to the baseline.
All experiments are run on an desktop with an Intel i9-7960X 16-core CPU, 64GB memory, and an a GeForce RTX 2080Ti GPU.

When comparing the natural gradient with the SGD,~\citep{Zhang2019three} trains natural gradients for 100 epochs while training the SGD for 200 epochs, highlighting the training efficiency of natural gradient methods. Following the same philosophy, in comparing the two natural gradient variants, we further shorten the training epochs to 50 in all our experiments, and we have found that the final performance is almost on par with that trained with 100 epochs. The training and testing behavior of ours vs.~the baseline on CIFAR-10 and CIFAR-100 is shown in Figure~\ref{fig:cifar10} and Figure~\ref{fig:cifar100}, respectively. While the final testing performance of both natural gradient variants are similar,  as expected, our method shows a clear advantage regarding convergence speed, with the margin increased on the more challenging CIFAR-100 benchmark.

\begin{figure}[ht]
	\centering
	\subfigure[Testing Accuracy]{
		\includegraphics[width=0.31\textwidth]{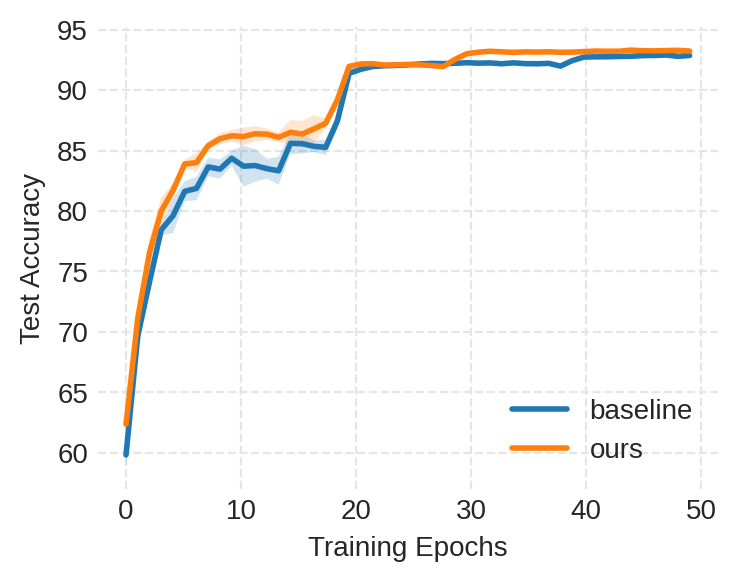}
		\label{fig:cifar10_test_acc}
	}
	\subfigure[Training Accuracy]{
		\includegraphics[width=0.31\textwidth]{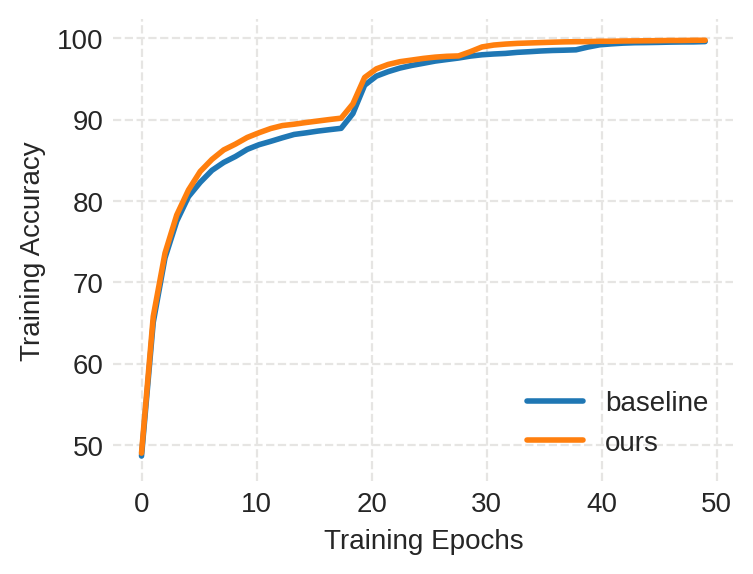}
		\label{fig:cifar10_train_acc}
	}
	\subfigure[Training Loss]{
		\includegraphics[width=0.31\textwidth]{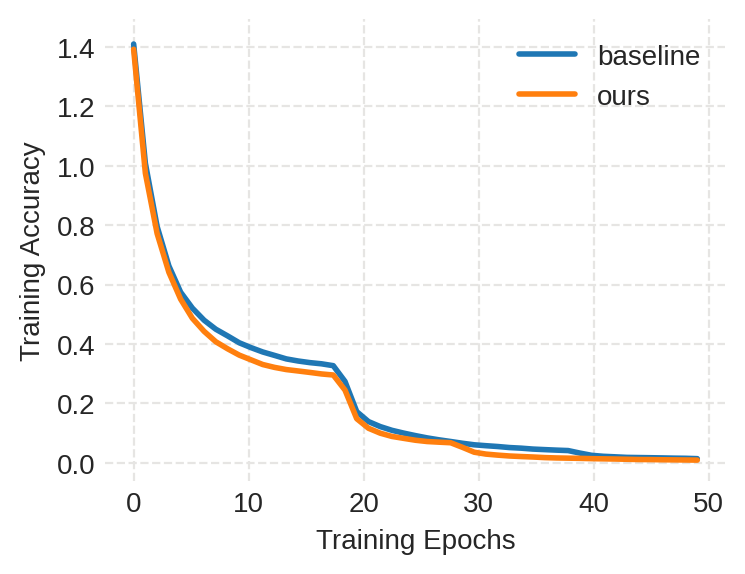}
		\label{fig:cifar10_train_loss}
	}
	\caption{Training and testing behaviors of Sobolev Natural Gradient (ours) vs Amari's Natural Gradient (baseline) on CIFAR-10. Results are averaged over four runs of different random seeds, with the shaded area corresponding to the standard deviation.}
	\label{fig:cifar10}
\end{figure}

\begin{figure}[h]
	\centering
	\subfigure[Testing Accuracy]{
		\includegraphics[width=0.31\textwidth]{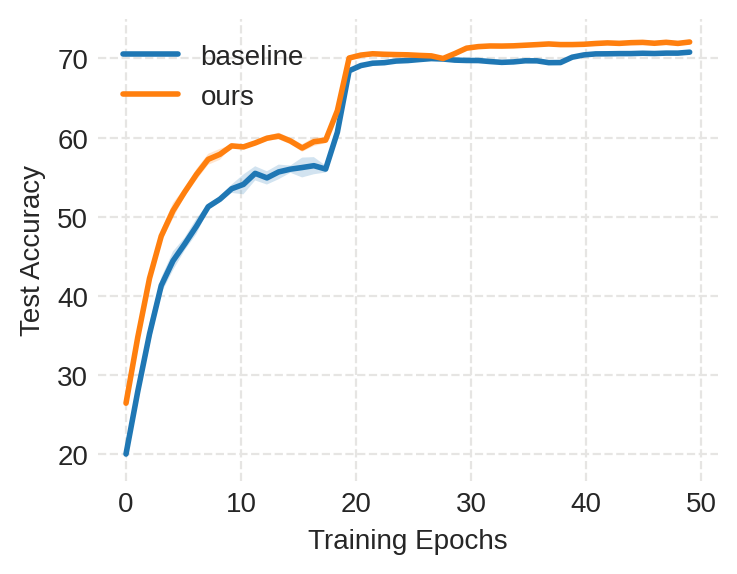}
		\label{fig:cifar100_test_acc}
	}
	\subfigure[Training Accuracy]{
		\includegraphics[width=0.31\textwidth]{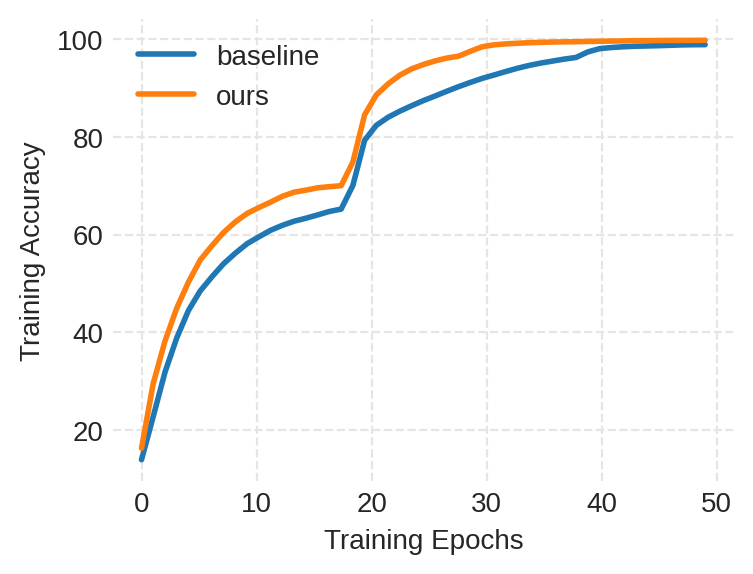}
		\label{fig:cifar100_train_acc}
	}
	\subfigure[Training Loss]{
		\includegraphics[width=0.31\textwidth]{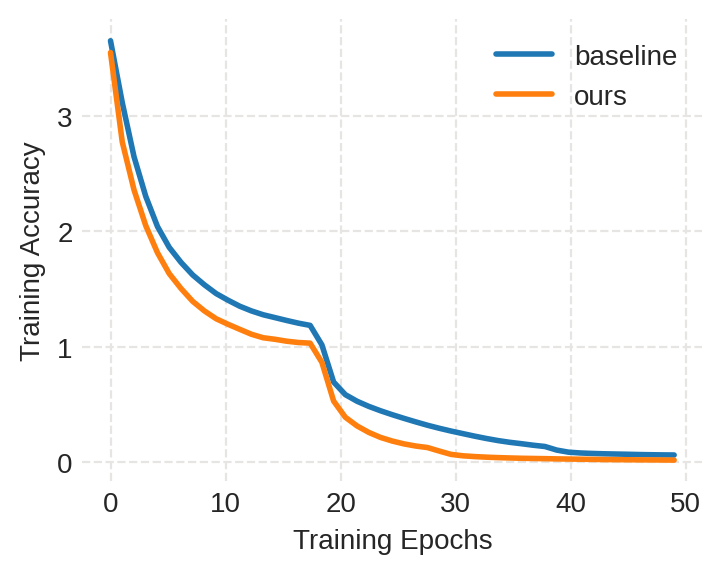}
		\label{fig:cifar100_train_loss}
	}
	\caption{Training and testing behaviors of Sobolev Natural Gradient (ours) vs Amari's Natural Gradient (baseline) on CIFAR-100. Results are averaged over four runs of different random seeds, with the shaded area corresponding to the standard deviation.}
	\label{fig:cifar100}
\end{figure}

\begin{figure}[h!]
	\centering
	\subfigure[Testing Accuracy]{
		\includegraphics[width=0.31\textwidth]{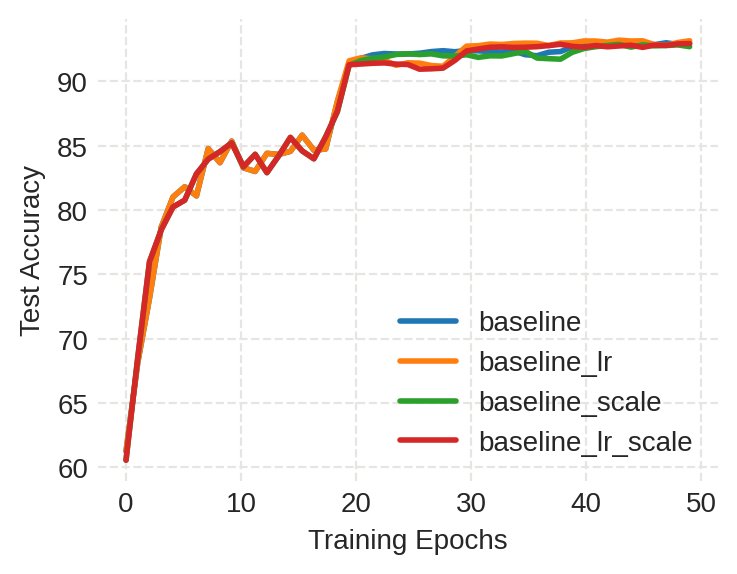}
		\label{fig:baseline_cifar10_test_acc}
	}
	\subfigure[Training Accuracy]{
		\includegraphics[width=0.31\textwidth]{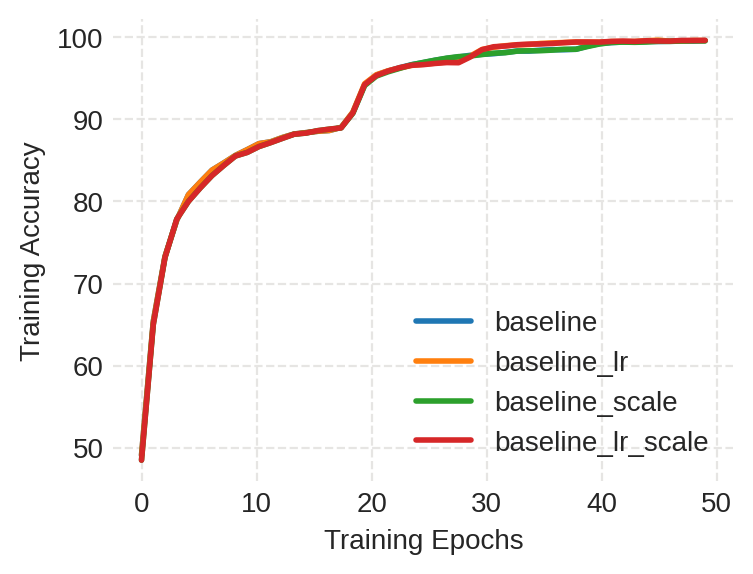}
		\label{fig:baseline_cifar10_train_acc}
	}
	\subfigure[Training Loss]{
		\includegraphics[width=0.31\textwidth]{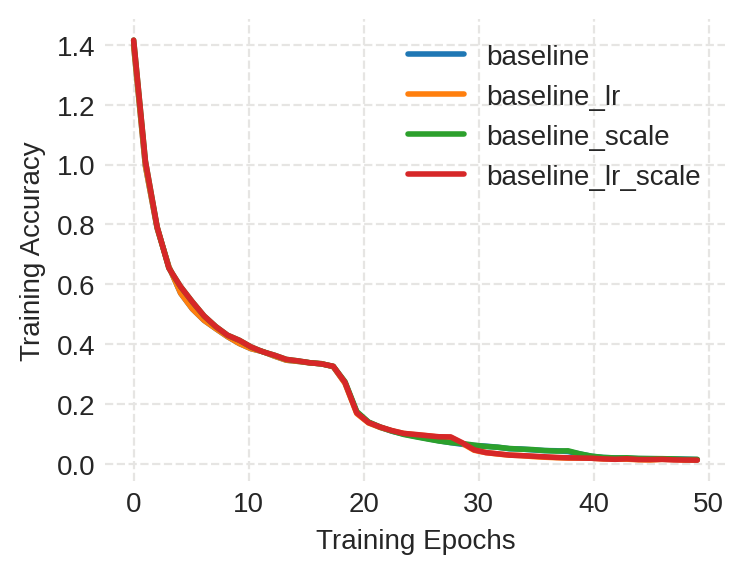}
		\label{fig:baseline_cifar10_train_loss}
	}
	\caption{Training and testing behaviors of Amari's Natural Gradient (baseline) on CIFAR-10. $baseline\_lr$ is the baseline method with the same learning rate scheduling as our method.
	$baseline\_scale$ is the baseline method with the same input scaling as our method.
	$baseline\_lr\_scale$ is the baseline method with the same learning rate scheduling and input scaling as our method.}
	\label{fig:baseline_cifar10}
\end{figure}

\begin{figure}[h!]
	\centering
	\subfigure[Testing Accuracy]{
		\includegraphics[width=0.31\textwidth]{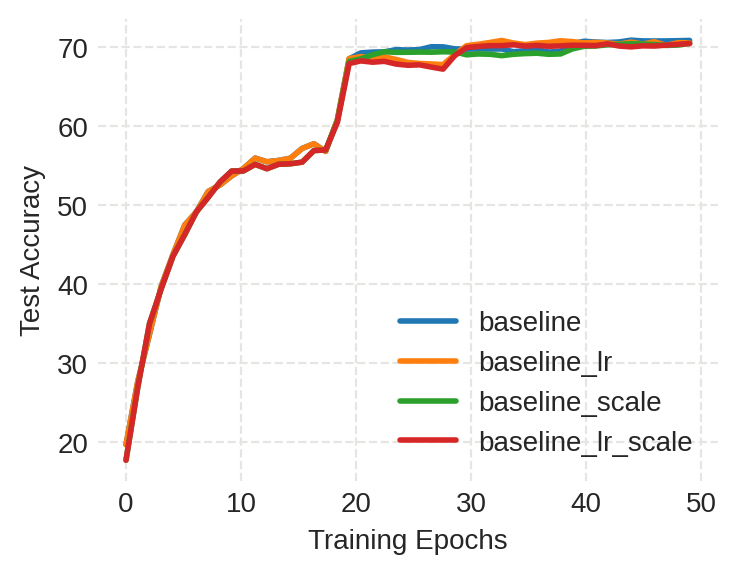}
		\label{fig:baseline_cifar100_test_acc}
	}
	\subfigure[Training Accuracy]{
		\includegraphics[width=0.31\textwidth]{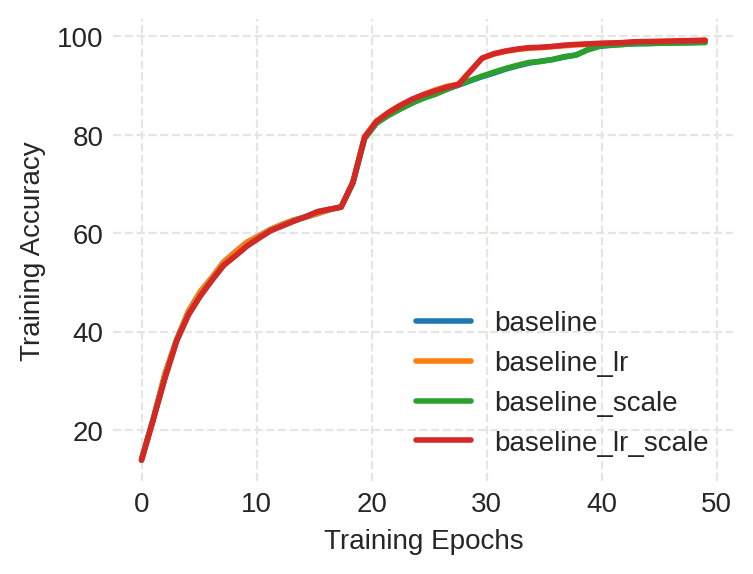}
		\label{fig:baseline_cifar100_train_acc}
	}
	\subfigure[Training Loss]{
		\includegraphics[width=0.31\textwidth]{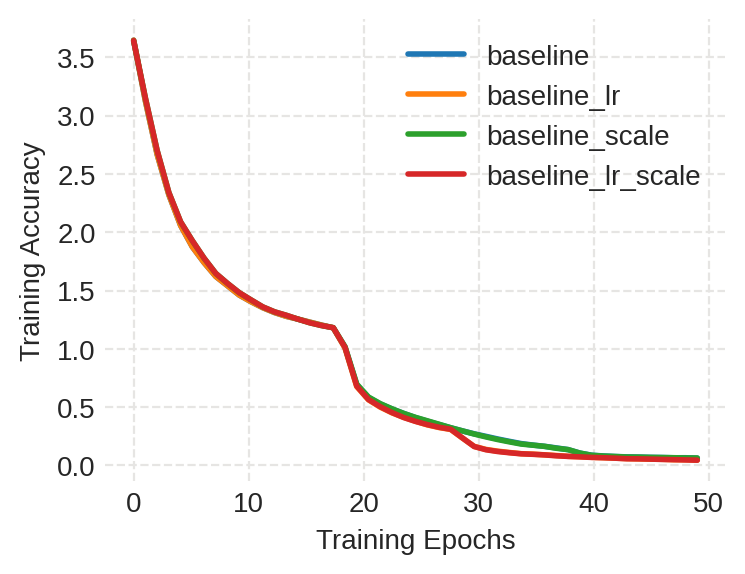}
		\label{fig:baseline_cifar100_train_loss}
	}
	\caption{Training and testing behaviors Amari's Natural Gradient (baseline) on CIFAR-100. 
	$baseline\_lr$ is the baseline method with the same learning rate scheduling as our method.
	$baseline\_scale$ is the baseline method with the same input scaling as our method.
	$baseline\_lr\_scale$ is the baseline method with the same learning rate scheduling and input scaling as our method.}
	\label{fig:baseline_cifar100}
\end{figure}






        






\section{Pullback metric vs.~pushforward metric: an example involving flatness}
\label{appendix:eg_pullback_pushforward} 

Our GTK theory specifies a Riemannian metric on the function space which makes 
the standard gradient, at least locally, as ``natural" as the natural gradient in capturing the intrinsic structure of the parameterized function space. In this section, we compare the pullback metric and the pushforward metric from a purely mathematical viewpoint, emphasizing 
that the pullback metric has better compatibility properties.
For example, suppose $\cW\subset \cW'$, and $\phi\colon \cW\to\calM$ extends to $\phi'\colon \cW'\to \calM$.    Then the pullback metrics behave well:  $(\phi')^*\bar g = \phi^*\bar g$ for a metric $\bar g $ on $\calM.$  In contrast, there is no relationship in general between $(\phi')_*g_E$ and $\phi_*g_E$.   

A key point 
is that {\it pullback metrics are independent of change of coordinates on the parameter space $\cW$, while pushforward metrics depend on a choice of coordinates.} As a simple example, if we scale the standard coordinates $(x^1,\ldots, x^k)$ on $\cW\subset \R^k$ 
to $(y^1,\ldots,y^k) =(2x^1,\ldots,2x^k)$, then in (\ref{basictriangle}) the pushforward metric $\phi_* g_E = \sum_i (\partial\phi/\partial x^i)\otimes 
(\partial\phi/\partial x^i)$ of the Euclidean metric on $\cW$ changes to $\sum_i (\partial\phi/\partial y^i)\otimes 
(\partial\phi/\partial y^i) = (1/4) \phi_*g_E.$  More generally, if $y = \alpha(x)$ is a change of coordinates on $\cW$ given by a diffeomorphism $\alpha,$ the 
pushforward metrics for the two coordinate systems will be very different.  

In contrast, the pullback metric is well behaved (``natural" in math terminology) with respect to coordinate changes.  If $\bar g$ is a Riemannian metric on
$\calM$, then $(\phi\circ\alpha)^*\bar g = \alpha^* (\phi^*\bar g).$  In particular, for $v, w$ tangent vectors at a point in $\cW$, we have
\begin{equation}\label{natural}\langle v,w\rangle_{(\phi\circ\alpha)^*\bar g } = \langle d\alpha(v), d\alpha(w)\rangle_{\phi^*\bar g}.
\end{equation}
Since $v$ in the $x$-coordinate chart is identified with 
$d\alpha(v)$ in the $y = \alpha(x)$-coordinate chart, (\ref{natural}) says that the inner product of tangent vectors to $\cW$ is independent of chart for pullback metrics.

For the rest of this section,
we give a concrete example in machine learning where these properties of the pullback metric have an explicit advantage over the pushforward metric.
In particular,
we consider the various notions of flatness associated to
the basic setup (\ref{basictriangle}) near a minimum. In \citep[\S5]{Dinh2017}, $\epsilon$-flatness of  
$\widetilde F\colon \cW\to\R$  near a minimum $w_0$ is defined by {\it e.g.,} measuring the 
Euclidean volume of maximal sets $S\subset \cW$ on which $F(w) < F(w_0) + \epsilon$ for all $w$ in $S$. 
The authors note that their definitions of flatness are not independent of scaling the Euclidean metric by different factors in different directions.  
Here the pushforward of the Euclidean metric is implicitly used.   As above, scaling is the simplest version of
a change of coordinates 
of the parameter space. 
Since this definition of flatness is not coordinate free, it has no intrinsic  differential geometric meaning.

This issue is discussed in \citep{Dinh2017}, Fig.~5.
Under simple scaling of the $x$-axis, the graph of a  function $f\colon\R\to\R$ will 
``look different'' Of course, $f$ is independent of scaling, but the graph looks different to an eye with an implicit fixed scale for the $x$-axis, {\it i.e.}, from the point of view of a fixed coordinate chart. 

We instead obtain the following coordinate free notion of flatness by measuring volumes in the pullback metric $\phi^*\bar g.$  
\begin{definition}\label{def:flatness} 
In the notation of (\ref{basictriangle}), the $\epsilon$-flatness of $\widetilde F$ near a
local minimum $w_0$ is
$\int_S {\rm dvol}_{\phi^*\bar g}$
where $S$ is a maximal connected set such that ${\widetilde F}(S) \subset ( \widetilde F(w_0), {\widetilde F}(w_0) + \epsilon).$
\end{definition}

Since 
$\int_S \dvol_{(\phi\circ\alpha)^* \bar g} = \int_S \dvol_{\phi^*\bar g} = \int_{\phi(S)} \dvol_{\bar g},$
for any orientation preserving diffeomorphism of $\cW$,
$\epsilon$-flatness is independent of coordinates on $\cW$, and can be measured on $S$ or on $\calM$.  In contrast, the Euclidean volume of $S$ 
is not related to the volume of $\phi(S) \subset \calM.$

\section{MLP with ReLU Activation - General Case with Bias Vectors}
\label{appendix:bias_mlp}

In Appendix E, we discuss the gradient for an MLP  and with no bias vectors.  In this Appendix, we give the necessary changes for an MLP with bias vectors.

The setup is  an MLP with $\ell$ layers with ReLU activation function, with parameters $w^{k}_r$ in the $k$-th layer, for 
$r=1,\ldots, n_k$, where $n_k$ is the number of matrix entries plus the number of components of the bias vector in the  $k$-th layer.  
The space of parameter vectors is
$\cW = \{\vec w = (w_1^1,\ldots,w^{\ell}_{n_\ell})\}$, where each parameter 
in  $\R.$   Thus dim$(\cW) 
= \sum_\ell n_\ell$.
The map $\phi\colon \cW \to\maps(\R^n,\R^M)$ takes a parameter to the associated MLP. 

We abbreviate the subvector for the $k$-th level by $w^k = (w^{k}_{1},\ldots, w^{k}_{ n_k})$,
so $\vec w = (w^1,\ldots,w^\ell).$ We further refine $w^k = 
  (w^{k}_{1},\ldots, w^{k}_{a_k}, \bw^{k}_{ 1},\ldots \bw^{k}_{c_k}) := (w_{A_k}, w_{b_k})$, where the matrix $A_k$ in the $k^{\rm th}$ layer has $a_k$ entries and the bias vector $b_k$ has $c_k$ entries. We may have $c_k=0$ for some  $k$. 
We can assume that $w^k\neq \vec 0$ for all $k$, since a MLP with some $w^k = \vec 0$ is trivial. 

Define a multiplicative group $G = \{\vec g = (g_1,\ldots,g_\ell): g_i>0, \prod_{i=1}^\ell g_i = 1\}$.
The map $G\to \R^{\ell}: (g_1,\ldots,g_\ell)\mapsto \sum_i \log g_i$ is a bijection to the plane
$\sum_i x^i = 0$, so $G$ is a manifold of dimension $\ell -1.$
Then $G$ acts on $\cW$ by 
\begin{align}\label{action}
\lefteqn{(g_1,\ldots,g_\ell)\cdot (w^1,\ldots, w^\ell)}\nonumber\\ 
=\ & (\overbrace{(g_1\cdot(w_{1}^{1},\ldots, w^{1}_{ a_1}), g_1\cdot (\bw_{1}^{1},\ldots, \bw^{1}_{ c_1}))}^{(g_1\cdot w_{A_1}, g_1\cdot w_{b_1})},\nonumber\\
&\ \ (g_2\cdot(w^{2}_{1},\ldots, w^{2}_{ a_2}), g_1g_2\cdot (\bw^{2}_{1},\ldots, \bw^{2}_{ c_2})),\nonumber\\
&\ \ \qquad\qquad\qquad \vdots\\
&\ \ (g_{\ell-1} \cdot(w^{\ell-1}_{1},\ldots, w^{\ell-1}_{ a_{\ell-1}}), 
      g_1g_2\cdot\ldots\cdot g_{\ell-1}\cdot (\bw^{\ell-1}_{1},\ldots, \bw^{\ell-1}_{ c_{\ell-1}})),\nonumber\\
&\ \ \underbrace{(g_{\ell}\cdot (w^{\ell}_{1},\ldots, w^{\ell}_{ a_\ell}), (\bw^{\ell}_{1},\ldots, 
\bw^{\ell}_{ c_\ell})}_{(g_\ell\cdot w_{A_\ell}, w_{ b_\ell})}).
\nonumber
\end{align}
By \citet[Appendix B]{Dinh2017},
\begin{equation}\label{basic_bias}
\phi(w^1,\ldots, w^\ell) = \phi( \vec g\cdot (w^1,\ldots, w^\ell)),
\end{equation}
for all $\vec g\in G$. 
Thus, as in \S5.1, $\phi$ is far from injective, and in fact is not an immersion.


\begin{proposition}\label{prop:non_immersion_bias}
The dimension of ker$(d\phi)_{\vec w}$ at each  $\vec w\in \cW$ is at least $\ell-1.$  Thus $\phi$ is not an immersion.  In fact, the span of 
\begin{equation}\label{4c}
     \left\{Z_r|_{\vec w} =
 \sum_{k=1}^{n_1}w_1^{k_1} \frac{\ \partial\hfill{} }{\partial w_1^{k_1}}\biggl|_{\vec w} 
+ \sum_{k_2 = 1}^{c_2}\bw_2^{k_2}\frac{\ \partial\hfill{}}{\partial \bw_2^{k_2}}\biggl|_{\vec w}   +\ldots
+ \sum_{k_{r-1} = 1}^{c_{r-1}}\bw_{r-1}^{k_{r-1}}\frac{\ \partial\hfill {} }{\partial \bw_{r-1}^{k_{r-1}}} \biggl|_{\vec w} 
- \sum_{k_r=1}^{c_r}\bw_r^{k_{r}} \frac{\  \partial\hfill{} }{\partial \bw_{r}^{k_{r}}}\biggl|_{\vec w} 
\right\}
\end{equation}
is in ker$(d\phi)_{\vec w}.$ 
\end{proposition}  

\begin{proof}
For fixed $r \in \{2, \ldots, \ell\}$, take  $g_1>0$, 
$g_r = g_1^{-1},$ and all other $g_i = 1,$   and then take $(d/dg_1)|_{g_1=1}$ of both sides of 
(\ref{basic}).  This gives
\begin{align}\label{2}
0 = \sum_{k=1}^{n_1} \frac{\partial \phi}{\partial w_1^{k_1}} w_1^{k_1}
+ \sum_{k_2 = 1}^{c_2}\frac{\partial \phi}{\partial \bw_2^{k_2}} \bw_2^{k_2} +\ldots 
+ \sum_{k_{r-1} = 1}^{c_{r-1}}\frac{\partial \phi}{\partial \bw_{r-1}^{k_{r-1}}} \bw_{r-1}^{k_{r-1}}
- \sum_{k_r=1}^{c_r} \frac{\partial \phi}{\partial \bw_{r}^{k_{r}}} \bw_r^{k_{r}},
\end{align}
where the first sum is over all the neurons in the first layer, and the other sums are over the bias vector neurons. 
 (\ref{2}) is equivalent to 
\begin{align}\label{3a} 
d\phi
 \left(\sum_{k=1}^{n_1}w_1^{k_1} \frac{\ \partial\hfill{} }{\partial w_1^{k_1}} 
+ \sum_{k_2 = 1}^{c_2}\bw_2^{k_2}\frac{\ \partial\hfill{}}{\partial \bw_2^{k_2}}  +\ldots 
+ \sum_{k_{r-1} = 1}^{c_{r-1}}\bw_{r-1}^{k_{r-1}}\frac{\ \partial\hfill {} }{\partial \bw_{r-1}^{k_{r-1}}} 
- \sum_{k_r=1}^{c_r}\bw_r^{k_{r}} \frac{\  \partial\hfill{} }{\partial \bw_{r}^{k_{r}}} 
 \right)=0.
\end{align}
 Thus the $\ell-1$ vectors in $T_{\vec w}\cW$
\begin{align}\label{4d} 
 Z_r|_{\vec w} =
 \sum_{k=1}^{n_1}w_1^{k_1} \frac{\ \partial\hfill{} }{\partial w_1^{k_1}}\biggl|_{\vec w} 
+ \sum_{k_2 = 1}^{c_2}\bw_2^{k_2}\frac{\ \partial\hfill{}}{\partial \bw_2^{k_2}}\biggl|_{\vec w}   +\ldots
+ \sum_{k_{r-1} = 1}^{c_{r-1}}\bw_{r-1}^{k_{r-1}}\frac{\ \partial\hfill {} }{\partial \bw_{r-1}^{k_{r-1}}} \biggl|_{\vec w} 
- \sum_{k_r=1}^{c_r}\bw_r^{k_{r}} \frac{\  \partial\hfill{} }{\partial \bw_{r}^{k_{r}}}\biggl|_{\vec w} 
\end{align}
are in ker$(d\phi)_{ \vec w}.$  Since $ w_j \neq 0$ for all $j$, these vectors are nonzero.  They are clearly linearly independent. 
Thus the span of the $Z_r$ is contained in ker$(d\phi)_{\tho}$.  
\end{proof}

As in Appendix \ref{appendix:proofs_non_immersion_grad}, the $Z_r$ at $\vec w$ form a basis of $T_{\vec w}\calO$ by a dimension count.
Since $\phi$ is not an immersion for ReLU MLPs, as in \S3.2 
the natural gradient does not exist, and we must find a slice for the group action. 

\begin{theorem} \label{thm:slice_bias} Write $\cW = \R^{n_1}\times\ldots\times \R^{n_\ell}$. A slice  is given by 
\begin{align*}
\cS = \{\lambda(v_1,\ldots, v_{\ell}): (v_1,\ldots, v_{\ell})\in S^{n_1-1} \times S^{n_2-2}\times\ldots
 \times S^{n_{\ell-1} - 1} 
\times S^{a_\ell}, \lambda>0\} \times \R^{c_\ell,*},
\end{align*}
where $S^{k-1}$ is the unit sphere in $\R^k$ and $\R^{k,*} = \R^k\setminus \{\vec 0\}.$
\end{theorem}
The orbit $\calO_{\vec w} = \{\vec g\cdot \vec w: \vec g\in G\}$ 
is contained in (and probably equals) a level set of $\phi.$ It is unlikely that any nonzero vector in $T_{\vec w}\cS$ lies in ker$(d\phi)_{\vec w}.$  Since
${\rm dim}(T_{\vec w}\cS) + {\rm dim}({\rm Span}\{Z_r\}) = {\rm dim}(\cW)$, we expect that 
${\rm ker}(d\phi)_{\vec w} = {\rm Span}\{Z_r\}$ in Prop.~\ref{prop:non_immersion_bias}, so that $\phi|_{\cS}$  is an immersion.
\begin{proof}
We prove that every element $\vec  w = ( w_1,\ldots,  w_\ell)\in \cW$ lies on the orbit through some slice element.
Write $w_k = (A_k, b_k)$ with $A_k = (w_{k,1},\ldots, w_{k,a_k}), b_k = (w_{a_k+1},\ldots, w_{n_k})$, where $a_k$ is the number of entries of the matrix in the $k^{\rm th}$ layer.  Set
\begin{align}\label{gprime}
g_1' &= |(A_1, b_1)|^{-1},\ g_2' = |(A_2, g_1'b_2)|^{-1}, \ldots,\
g'_{\ell-1} = |(A_{\ell-1},\ g_{\ell-2}'g_{\ell-3}'\cdot\ldots\cdot g_1' b_{\ell-1})|^{-1},\ 
g'_\ell = |A_\ell|^{-1}.
\end{align}
Then
\begin{align}\label{shift}
&\lefteqn{(g_1',\ldots, g_\ell')\cdot ((A_1, b_1),\ldots,(A_\ell, b_\ell)) } \nonumber\\
=& \left((g_1'A_1, g_1'b_1), (g_2'A_2, g_2'g_1' b_2),\ldots, (g_{\ell-1}'A_{\ell-1},
g_{\ell-1}'\cdot\ldots\cdot g_1' b_{\ell-1}), (g'_\ell A_\ell, b_\ell)\right) \\
\in&\ S^{n_1}\times S^{n_2} \times \ldots\times S^{n_{\ell-1}} \times S^{a_\ell}\times \R^{c_\ell}.\nonumber
\end{align}
Set 
$$g_k = 
\frac{g_k'}{\left(\prod_{i=1}^\ell g'_i\right)^{1/\ell}}.$$
Then $\vec g = (g_1,\ldots, g_\ell)\in G,$ and $\vec g\cdot \vec w\in \cS.$
\end{proof}

\bigskip

For an $\ell$-layer MLP with ReLU activation function,  the slice $\cS$ in Theorem~\ref{thm:slice_bias} 
has codimension $\ell -1$ in $\cW$, contains all the information in $\phi\colon\cW\to \calM(\R^n,\R^m)$, and probably has $\phi|_{S}$ an immersion.

\S5.1 suggests three approaches to treat the non-immersion case.  We take  a basis $\{\vec r := c_{0}^0, c_{k}^ {k_i}: 
 k= 1, \ldots, \ell; k_i = 1,\ldots,n_k-1\}$ of the tangent space $T_{\vec  w}S$ for the slice $\cS$ in Thm.~\ref{thm:slice_bias}. We use spherical coordinates $(r_k, \psi_k^{k_i}), k_i = 1,\ldots,n_k-1,$ on the $k^{\rm th}$ layer, for $k= 1,\ldots, \ell-1$, while for the $\ell^{\rm th}$ layer we use  $(r_\ell, \psi_\ell^{\ell_i})$, $\ell_i = 1,\ldots, a_\ell$, and rectangular coordinates $(x^1,\ldots, x^{b_\ell})$ on $\R^{b_\ell,*}.$  Then
$\vec r :=\sum_{k}  \partial/\partial r_{k}; c_{k}^{k_i} = \partial/\partial \psi_k^{k_i}$, $k= 1,\ldots, \ell-1$;
$c_\ell^{\ell_i} = \partial/\partial \psi_\ell^{\ell_i}$, $\ell_i = 1,\ldots,a_\ell;$
$c_\ell^{\ell_i}= \partial/\partial x^{\ell_i}$, $\ell_i = a_\ell+1,\ldots, n_\ell$. 
The Euclidean metric/dot product restricts to a metric on $\cS$ with metric tensor 
\begin{align}\label{hmatrix_bias}
h &= h_{(k, k_i), (s,s_j)} = c_{k}^{k_i}\cdot c_{s}^{s_j} \\
&= \left\{ \begin{array}{ll} 
0,&  {\rm for}\ k\neq s,\nonumber \\ 
0,&   {\rm for}\ k=s\neq 0, k_i\neq s_j,\nonumber \\
|w_k|^2 \sin^2(\psi_k^{n_k-1})\cdot \sin^2(\psi_k^{n_k-2})\cdot\ldots\cdot \sin^2(\psi_k^{k_i+1}),
&  {\rm for}\ k=s\neq 0,\ell;  k_i = s_j,\nonumber\\
|w_\ell|^2 \sin^2(\psi_\ell^{n_\ell-1})\cdot \sin^2(\psi_\ell^{n_\ell-2})\cdot\ldots\cdot \sin^2(\psi_\ell^{k_i+1}),
& {\rm for}\ k=s=\ell,  k_i = s_j \leq a_\ell,\nonumber\\
1, &  {\rm for}\ k=s=\ell, k_i = s_j > a_\ell,\nonumber\\
\ell,&   {\rm for}\ k=s=0, i=j=0.
\end{array}\right. 
\end{align}
Here the metric is computed at $\vec w= (w_1,\ldots,w_\ell)\in \cS$. 
Note that $| w_k|^2 = r_k^2$ independent of $k$ (except for $|w_{\ell,A}|^2 = r_\ell^2$).
$h$ is diagonal, so $h^{-1}$  is easy to compute.  (\ref{hmatrix_bias}) follows from (\ref{sph_relations}).

We can use this metric in any of Approaches I-III of \S5. The implementation of gradient descent in any approach is similar to \S5, although moving a parameter vector back into the slice is a little more complicated, as we now explain.

To implement gradient descent on 
$\cW$, start with 
$\vec  w^0\in \cS$ and take $\vec {\bw}^{1} = \vec w^0 + \alpha\cdot d\phi^{-1}\left(\nabla^\Phi L_{\ell, \phi(\vec w^0})\right)$,
where $\alpha$ is the step size. Move $\vec {\bw}^{1}$ along its $G$-orbit back to 
$\vec  w^1\in \cS$ 
 as follows: for 
$\vec {\bw}^{1} = ((A_1, b_1),\ldots,(A_\ell, b_\ell))$, define $g_i'$ by 
\begin{align*}
g_1' = |(A_1, b_1)|^{-1},\ g_2' = |(A_2, g_1'b_2)|^{-1}, \ldots,\
g'_{\ell-1} = |(A_{\ell-1},\ g_{\ell-2}'g_{\ell-3}'\cdot\ldots\cdot g_1' b_{\ell-1})|^{-1},\
g'_\ell = |A_\ell|^{-1},
\end{align*}
and set
\begin{align}
\label{eq:re_slice_bias}
\vec w^1 = B 
\left((g_1'A_1, g_1'b_1), (g_2'A_2, g_2'g_1' b_2),\ldots, (g_{\ell-1}'A_{\ell-1},
g_{\ell-1}'\cdot\ldots\cdot g_1' b_{\ell-1}), (g'_\ell A_\ell, B^{-1} b_\ell)\right),
\end{align}
where $B = \left(\prod_{i=1}^\ell  g_i'\right)^{-1/\ell}$.  This ensures both that  
$(Bg_1',\ldots, B g_\ell') \in G $ and that $b_\ell$ is not modified.
Now continue with $\vec w^1.$

\subsection{Loss function gradient: Approach I}
In particular, for Approach I we need:
\begin{proposition}\label{prop:grad S'} The gradient $\nabla^{\cS}\wf$, for $\wf\colon\cW\to\R$, is given by
\begin{align*}
\quad\ \nabla^{\cS} \wf_{\vec w} 
= 
\nabla^{Euc,\cW} \wf_{\vec w}
 -\frac{1}{|w^1|^2}\sum_{k=1}^\ell\left(\sum_{k_i=1}^{n'_k}w^k_{k_i}\frac{\partial \wf}{\partial w^k_{k_i}}\right)
  w^k_{k_j}\frac{\partial }{\partial w^k_{k_j}}
 +\frac{\ell^{-1}}{|w_1|^2}
  \left(\sum_{s=1}^\ell \sum_{s_i=1}^{n'_s}w^s_{s_i}\frac{\partial \wf}{\partial w^s_{s_i}}\right) w^k_{k_j}
  \frac{\partial }{\partial w^k_{k_j}},
\end{align*}
where
$$n'_k = \left\{ \begin{array}{rr} n_k, & k \neq \ell,\\ a_k,& k = \ell.\end{array}\right.$$
\end{proposition}

\noindent {\it Proof.}
For $\vec w$ in the slice $\cS$, write $\vec w = (w_1,\ldots,w_\ell)$ with $w_k = (w_k^1,\ldots, w_k^{n_k})$.  
Since the gradient is independent of coordinates,
on the $k^{\rm th}$ layer, $k = 1,\ldots,\ell-1$, for any $\wf\colon\cW\to\R$ and any $\vec w = (w_1,\ldots,w_k)\in \cW$, we  have 
\begin{align*}\nabla^{Euc,k}\wf_{w^k} 
&= \sum_{k_i=1}^{n_k} \frac{\partial \wf}{\partial w^k_{k_i}}\frac{\partial}{\partial w^k_{k_i}} \\
&= h_k^{0,0}
\frac{\partial \wf}{\partial r_k}\frac{\partial }{\partial r_k} + 
h^{0,k_i}
\frac{\partial \wf}{\partial r_k}\frac{\partial }{\partial \psi_k^{k_i}} + h^{k_i,0}
\frac{\partial \wf}{\partial \psi_k^{k_i}}\frac{\partial }{\partial r_k} +
h^{k_i k_j} \frac{\partial \wf}{\partial \psi_k^{k_i}}\frac{\partial }{\partial \psi_k^{k_j}}\\
&= \frac{\partial \wf}{\partial r_k}\frac{\partial }{\partial r_k} +\sum_{k_i=1}^{n_k-1}
\left(\sum_{b=1}^{k_i+1} (w^k_b)^2 \right)^{-1} \frac{\partial \wf}{\partial \psi_k^{k_i}}\frac{\partial }{\partial \psi_k^{k_i}}. 
\end{align*}
The right hand side is evaluated at $w_k.$ As before, $h_k^{0,0} := \langle \partial_{r_k}, \partial_{r_k}\rangle^{-1} = 1$, we abbreviate $h^{(k,k_i), (k,k_j)}$ by $h^{k_i,k_j}$ when the context is clear, and we have used (\ref{simple}).  
For $k= \ell$, we use spherical coordinates $(r_\ell, \psi_\ell^k)$ on the first $a_\ell$ components and rectangular coordinates on the bias vector components.  This gives
\begin{align*} 
\nabla^{Euc,\ell}\wf _{w^\ell} 
&= \sum_{\ell_i=1}^{n_\ell} \frac{\partial \wf}{\partial w^\ell_{\ell_i}}\frac{\partial}{\partial w^\ell_{\ell_i}}\\
&= \frac{\partial \wf}{\partial r_\ell}\frac{\partial }{\partial r_\ell}+\sum_{\ell_i=1}^{a_\ell-1}
\left(\sum_{b=1}^{\ell_i+1} (w^k_b)^2 \right)^{-1} \frac{\partial \wf}{\partial \psi_\ell^{\ell_i}}\frac{\partial }{\partial \psi_\ell^{\ell_i}} + \sum_{\ell_i=a_\ell+1}^{n_\ell}
\frac{\partial \wf}{\partial w^\ell_{\ell_i}}\frac{\partial}{\partial w^\ell_{\ell_i}}.
\end{align*}

To compress the notation, set 
$$n'_k = \left\{ \begin{array}{rr} n_k, & k \neq \ell,\\ a_k,& k = \ell.\end{array}\right.$$
Using a mixture of ordinary summation and summation convention, we have
\begin{align}\label{two_gradients_bias}
\nabla^{\cS} \wf_{\vec w} &=  h^{0,0}\frac{\partial \wf}{\partial \vec r}\frac{\partial }{\partial \vec r} + 
\sum_{k=1}^{\ell-1} h^{k_i k_j} \frac{\partial \wf}{\partial \psi_k^{k_i}}\frac{\partial }{\partial \psi_k^{k_j}}
 + \sum_{\ell_i, \ell_j = 1}^{a_\ell}
h^{\ell_i \ell_j} \frac{\partial \wf}{\partial \psi_\ell^{\ell_i}}\frac{\partial }{\partial \psi_\ell^{\ell_j}}
+ \sum_{\ell_i=a_\ell+1}^{n_\ell}
\frac{\partial \wf}{\partial w^\ell_{\ell_i}}\frac{\partial}{\partial w^\ell_{\ell_i}}\nonumber\\
&= \ell^{-1} \left(\sum_{k=1}^\ell \frac{\partial \wf}{\partial r_k}\right)
\left(\sum_{s=1}^\ell\frac{\partial }{\partial r_s}\right)
 + \sum_{k=1}^\ell \sum_{k_i=1}^{n'_k-1}\left(\sum_{b=1}^{k_i+1} (w^k_b)^2\right)^{-1}
\frac{\partial \wf}{\partial \psi_k^{k_i}}\frac{\partial }{\partial \psi_k^{k_i}}
 + \sum_{\ell_i=a_\ell+1}^{n_\ell}
\frac{\partial \wf}{\partial w^\ell_{\ell_i}}\frac{\partial}{\partial w^\ell_{\ell_i}} \nonumber\\
&= \nabla^{Euc,\cW}\wf -\sum_{k=1}^\ell\frac{\partial \wf}{\partial r_k}\frac{\partial }{\partial r_k}
   +\ell^{-1}\sum_{k=1}^\ell \sum_{s=1}^\ell\frac{\partial \wf}{\partial r_k}\frac{\partial }{\partial r_s} 
   \nonumber\\
&= \nabla^{Euc,\cW}\wf - \sum_{k=1}^\ell
\sum_{k_i, k_j=1}^{n'_k} \frac{w^k_{k_i}w^k_{k_j}}{|w^k|^2} \frac{\partial \wf}{\partial w^k_{k_i}}\frac{\partial} {\partial w^k_{k_j}}
 + \ell^{-1}\sum_{k=1}^\ell \sum_{s=1}^\ell
\sum_{k_i=1}^{n'_k} \sum_{s_j=1}^{n'_s}
\frac{w^k_{k_i} w^s_{s_j}}{|w^k| |w^s|}\frac{\partial \wf}{\partial w^k_{k_i}}\frac{\partial} {\partial w^s_{s_j}}
\end{align}
In the last two terms, $|w^k| = |w^j|$, so it suffices to compute $|w^1|.$

Let $\binom{k_j}{k}$ denote the component of $c_k^{k_j}$  of $\nabla^{\cS}\wf_{\vec w}$.  Label the first three terms on the last line of (\ref{two_gradients_bias}) by I, II, III. Then
\begin{enumerate}
\item In I, $\binom{k_j}{k}=  \frac{\partial \wf}{\partial w^k_{k_j}}$. 

\item In II, $$\binom{k_j}{k}= \underbrace{-\frac{1}{|w^1|^2}}_{{\rm independent \ of}\ k, k_j}\underbrace{\left(\sum_{k_i=1}^{n'_k}w^k_{k_i}\frac{\partial \wf}{\partial w^k_{k_i}}\right)}_{{\rm independent\ of}\ k_j} w^k_{k_j}. $$ 

\item In III, by switching the $s$ and $k$ indices,
\begin{align*}\binom{k_j}{k} &= \underbrace{\frac{\ell^{-1}}{|w^1|^2}
\left(\sum_{s=1}^\ell \sum_{s_i=1}^{n'_s}w^s_{s_i}\frac{\partial \wf}{\partial w^s_{s_i}}\right)}_{{\rm independent\ of} \ k, k_j} w^k_{k_j}\hskip 1.5 in \Box\\
 \end{align*}
\end{enumerate}

\subsection{Loss function gradient: Approach II}
\label{appendix:approach2}

We show that in Approach II, formulas (\ref{two_gradients}) and (\ref{two_gradients_bias}) are altered only by removing the $\ell^{-1}$ factor in the last term.  For (\ref{two_gradients}), we note that the metric $h$ in 
(\ref{hmatrix}) is replaced by $\delta_{(k,k_i), (s,s_j)}$.  Thus the proof of Prop.~\ref{prop:grad_S} changes to
$$
\nabla^{Euc,k}\wf_{w^k} 
= \frac{\partial \wf}{\partial r_k}\frac{\partial }{\partial r_k}
 +\sum_{k_i=1}^{n_k-1}
 \frac{\partial \wf}{\partial \psi_k^{k_i}}\frac{\partial }{\partial \psi_k^{k_i}}, 
$$
and
\begin{align*} \nabla^{\cS} \wf_{\vec w} 
&= 
\left(\sum_{k=1}^\ell \frac{\partial \wf}{\partial r_k}\right)
\left(\sum_{s=1}^\ell\frac{\partial }{\partial r_s}\right) 
 +\sum_{k=1}^\ell \sum_{k_i=1}^{n_k-1}
  \frac{\partial \wf}{\partial \psi_k^{k_i}}\frac{\partial }{\partial \psi_k^{k_i}}\nonumber\\
&= \nabla^{Euc,\cW}\wf -\sum_{k=1}^\ell\frac{\partial \wf}{\partial r_k}\frac{\partial }{\partial r_k}
+\sum_{k=1}^\ell \sum_{s=1}^\ell\frac{\partial \wf}{\partial r_k}
\frac{\partial }{\partial r_s} \\
&= \nabla^{Euc,\cW}\wf - \sum_{k=1}^\ell
\sum_{k_i, k_j=1}^{n_k} \frac{w^k_{k_i}w^k_{k_j}}{|w^k|^2} \frac{\partial \wf}{\partial w^k_{k_i}}\frac{\partial} {\partial w^k_{k_j}}
+ \sum_{k=1}^\ell \sum_{s=1}^\ell
\sum_{k_i=1}^{n_k} \sum_{s_j=1}^{n_s}
\frac{w^k_{k_i} w^s_{s_j}}{|w^k| |w^s|}\frac{\partial \wf}{\partial w^k_{k_i}}\frac{\partial} {\partial w^s_{s_j}}.
\end{align*}
In the notation of the proof of Prop.~\ref{prop:grad_S}, $\binom{k_j}{k}$ are unchanged for I and II, 
while for III
$$\binom{k_j}{k} = \frac{1}{|w^1|^2}
\left(\sum_{s=1}^\ell \sum_{s_i=1}^{n'_s}w^s_{s_i}\frac{\partial \wf}{\partial w^s_{s_i}}\right) w^k_{k_j}.$$
The same argument works for (\ref{two_gradients_bias}).

\end{document}